\newlength{\widebarargwidth}
\newlength{\widebarargheight}
\newlength{\widebarargdepth}
\long\def\@makecaption#1#2{
        \vskip 0.8ex
        \setbox\@tempboxa\hbox{\small {\bf #1:} #2}
        \parindent 1.5em  
        \dimen0=\hsize
        \advance\dimen0 by -3em
        \ifdim \wd\@tempboxa >\dimen0
                \hbox to \hsize{
                        \parindent 0em
                        \hfil 
                        \parbox{\dimen0}{\def\baselinestretch{0.96}\small
                                {\bf #1.} #2
                                } 
                        \hfil}
        \else \hbox to \hsize{\hfil \box\@tempboxa \hfil}
        \fi
        }
\newcounter{manualsubequation}
\renewcommand{\themanualsubequation}{\alph{manualsubequation}}
\newcommand{\startsubequation}{%
  \setcounter{manualsubequation}{0}%
  \refstepcounter{equation}\ltx@label{manualsubeq\theequation}%
  \xdef\labelfor@subeq{manualsubeq\theequation}%
}
\newcommand{\tagsubequation}{%
  \stepcounter{manualsubequation}%
  \tag{\ref{\labelfor@subeq}\themanualsubequation}%
}
\let\subequationlabel\ltx@label
\date{}
\renewenvironment{proof}[1][\proofname]{%
  \par\pushQED{\qed}\normalfont%
  \topsep6\p@\@plus6\p@\relax
  \trivlist\item[\hskip\labelsep\bfseries#1\@addpunct{.}]%
  \ignorespaces
}{%
  \popQED\endtrivlist\@endpefalse
}
\newtheorem{theorem}{Theorem}[section]
\newtheorem{lemma}[theorem]{Lemma}
\newtheorem{corollary}[theorem]{Corollary}
\newtheorem{definition}[theorem]{Definition}
\newcommand\numberthis{\addtocounter{equation}{1}\tag{\theequation}}
\newcommand{\Tr}{\mathrm{Tr}}
\newcommand{\cE}{{\cal E}}
\newcommand{\cL}{{\cal L}}
\newcommand{\R}{\mathbb{R}}
\newcommand{\N}{\mathbb{N}}
\newcommand{\E}{\mathbb{E}}
\renewcommand{\Pr}{\mathbb{P}}
\newcommand{\lv}{\lVert}
\newcommand{\rv}{\rVert}
\renewcommand{\epsilon}{\varepsilon}
\DeclareSymbolFont{extraup}{U}{zavm}{m}{n}
\DeclareMathSymbol{\varheart}{\mathalpha}{extraup}{86}
\DeclareMathSymbol{\vardiamond}{\mathalpha}{extraup}{87}
\DeclareMathOperator*{\argmin}{arg\,min}
\renewcommand{\epsilon}{\varepsilon}
\newcommand{\smin}{\sigma_{\min}}
\renewcommand{\epsilon}{\varepsilon}
\title{\textbf{Deep Linear Networks can Benignly Overfit when Shallow Ones 
Do}}
\author{
Niladri S. Chatterji \\ 
Stanford University \\
niladri@cs.stanford.edu \\
      \and
Philip M. Long  \\
Google \\
plong@google.com 
}
\date{\today}
\begin{document}
\maketitle
\begin{abstract}
We bound the excess risk of interpolating deep linear networks trained using gradient flow. 
In a setting previously used to establish risk bounds for the minimum $\ell_2$-norm interpolant, we show that randomly initialized deep linear networks can closely approximate or even match known bounds for the minimum $\ell_2$-norm interpolant. 
Our analysis also reveals that interpolating deep linear models have exactly the same 
conditional variance
as the minimum $\ell_2$-norm solution.  
Since the noise affects the excess risk only
through the conditional variance, this implies that 
depth does not
improve the algorithm's ability to ``hide the noise''.
Our simulations verify that aspects of our bounds reflect typical
behavior for simple data distributions.  We also find that similar phenomena
are seen in simulations with ReLU networks, although the situation there is more nuanced.
\end{abstract}

\section{Introduction} 
Recent empirical studies~\citep{zhang2016understanding,belkin2019reconciling} have brought to light the surprising phenomenon that overparameterized 
neural network models trained with variants of gradient descent generalize well despite perfectly fitting
noisy data.
This seemingly violates the 
once widely accepted
principle that learning algorithms
should trade off between some measure of the regularity of a model, and its fit to the data. To understand this, a rich line of research has emerged to establish conditions under which extreme overfitting---fitting the data perfectly---is benign
in simple models~\citep[see][]{belkin2018overfitting,hastie2019surprises,bartlett2020benign}. Another closely connected thread of research to understand generalization 
leverages
the
recognition that 
training by gradient descent 
engenders
an implicit bias~\citep[see][]{neyshabur2014search,soudry2018implicit,ji2018risk}. These results can be
paraphrased as follows: training until the loss is driven to zero will produce a model
that, among models that interpolate the data, 
minimizes some data-independent regularity criterion.  

Our paper continues this study of benign overfitting but with a more complex model class, deep linear networks. Deep linear networks are often studied theoretically \citep[see, e.g.,][]{DBLP:journals/corr/SaxeMG13,DBLP:conf/icml/AroraCH18}, because
some of the relevant characteristics of deep learning
in the presence of nonlinearities are also present in linear networks but in a setting
that is more amenable to analysis.  The analyses of linear networks have included
a number of results on implicit bias \citep[see, e.g.,][]{azulay2021implicit,MinEtAl21}.  Recently, one of these analyses~\citep{azulay2021implicit}, of two-layer
networks trained by gradient flow with a ``balanced'' initialization, was leveraged in an
analysis of benign overfitting \citep{chatterji2022interplay}.   (For a mapping $x \rightarrow  x W v$
parameterized by a hidden layer $W \in \R^{d\times m}$ and an output layer $v \in \R^{m\times 1}$,
initial values of $v$ and $W$ are balanced if $v v^{\top} = W^\top W$.)
\citet{MinEtAl21} analyzed implicit bias in
two-layer linear networks
under more general conditions including the
unbalanced case.

In this paper, we analyze benign overfitting in deep linear networks of arbitrary depth trained by gradient
flow.  Our first main result is a bound on the excess risk.  The bound is in terms
of some characteristics of the joint distribution of the training data previously used to analyze
linear regression with the
standard parameterization, including notions of the effective rank of the covariance matrix, and it holds under
similar conditions on the data distribution.  Another key quantity used in the bound
concerns the linear map $\Theta$ computed by the network after training---it is the
norm of the projection of this map onto the subspace orthogonal to the span of the
training examples.  This norm can further be bounded in terms of its value at initialization,
and a quantity that reflects how rapidly training converged.  
In contrast with previous analyses on two-layer networks~\citep{chatterji2022interplay}, this analysis holds whether this initialization is balanced or not.

Our second main result is a high-probably risk bound that holds for networks in which
the first and last layers are initialized randomly, and the middle layers are all
initialized to the identity.  Our bound holds whenever the scale of the initialization
of the first layer is small enough, and the scale of the initialization of the last
layer is large enough.  This includes the extreme case where the first layer is initialized
to zero.  As the scale of the initialization of the first layer goes to zero, our
bound approaches the known bound for the minimum $\ell_2$-norm interpolator with the standard 
parameterization.
Our final main theoretical result illustrates our bounds using a simple covariance matrix
used in previous work \citep{bartlett2020benign,chatterji2022foolish} which might be viewed as a canonical case where overfitting
is benign for linear regression with the standard parameterization.

These bounds were obtained in the absence of a precise characterization
of the implicit bias of gradient flow for deep linear networks, or
a closed-form formula for the model produced.

A key point of our analysis is that the projection of the linear map $\Theta$ computed by the interpolating network onto
the span of the rows of the design matrix $X$ is exactly equal to minimum $\ell_2$-norm interpolant $\Theta_{\ell_2}$.
The risk of $\Theta$ naturally decomposes into contributions from this projection and
$\Theta_{X^{\perp}} = \Theta - \Theta_{\ell_2}$.  We can use previous analyses of $\Theta_{\ell_2}$
to bound the former.

\begin{figure}[H]
     \centering
     \begin{subfigure}[b]{\textwidth}
         \centering
         \includegraphics[width=\textwidth]{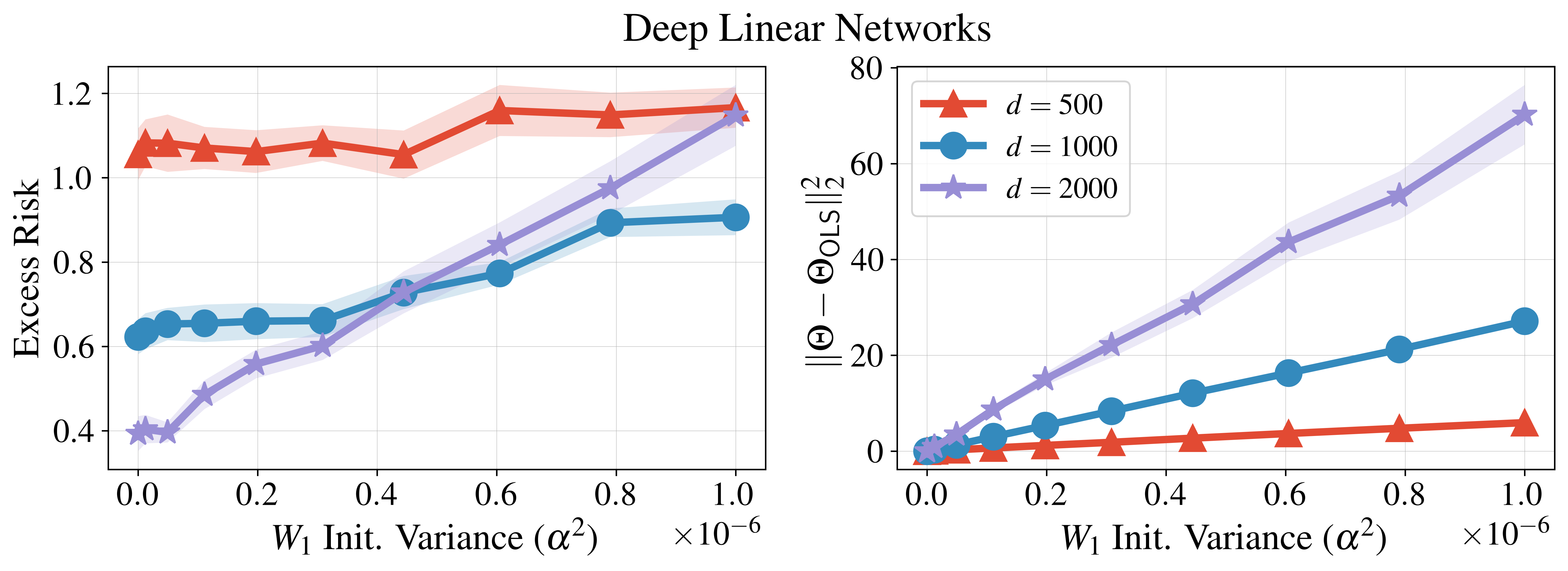}
         \label{fig:linear_models_risk_versus_alpha}
     \end{subfigure}
     \begin{subfigure}[b]{\textwidth}
         \centering
         \includegraphics[width=\textwidth]{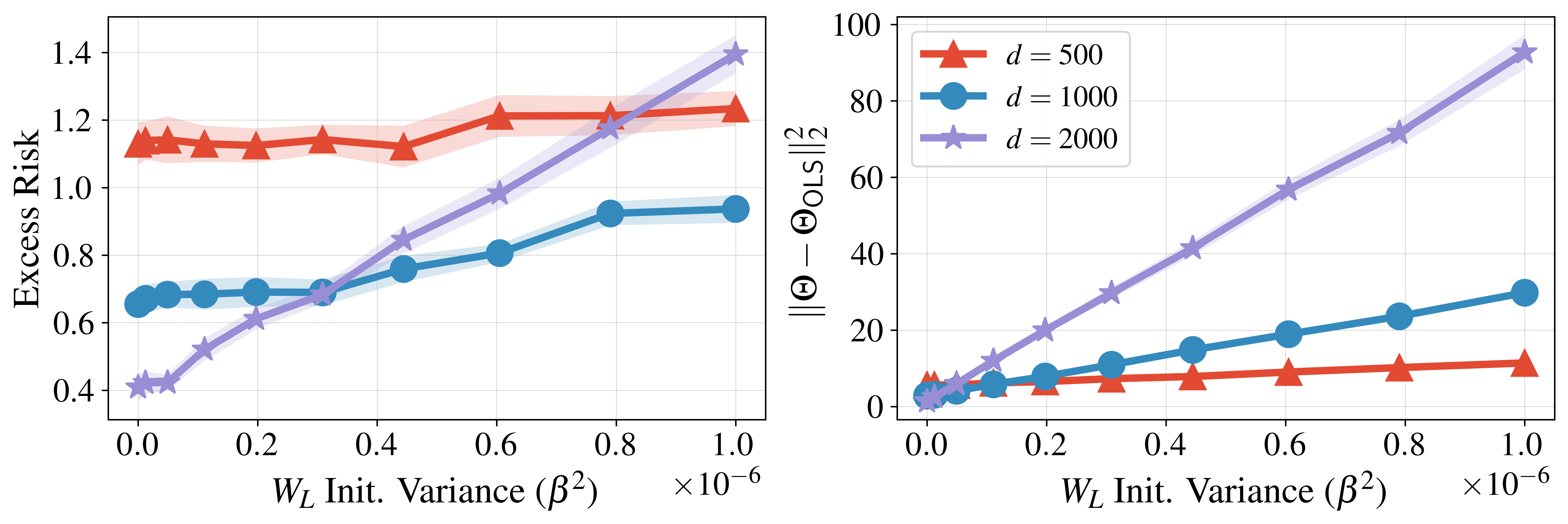}
     \end{subfigure}
        \caption{Three-layer linear networks trained by gradient descent on data generated by an underlying linear model. The model is trained on $n=100$ points drawn from the generative model $y = x\Theta^\star + \omega$, where $x \sim \mathsf{N}(0,\Sigma)$ and $\omega \sim \mathsf{N}(0,1)$. The excess risk is defined as $\E_{x}\left[\lv x\Theta-x\Theta^\star\rv^2\right]$. We empirically find that  when the initialization variance of either the first layer $(\alpha^2)$ or the last layer $(\beta^2)$ is close to zero, the final solution is close to the minimum $\ell_2$-norm interpolator and suffers small excess risk. While when the initialization variance is large, that is, when the network is initialized away from the origin, the excess risk is larger due to the component of the final solution outside the span of the data.
        Additional details in Section~\ref{s:simulations}.}
        \label{fig:linear_models_risk_versus_init_scale}
\end{figure}
Figure~\ref{fig:linear_models_risk_versus_init_scale} contains
plots from simulation experiments where the excess risk of
a deep linear model increases with the scale of the initialization
of the first layer, as in the upper bounds of our
analysis. A similar effect is also seen when the first layer is initialized
at a unit scale, and the scale of the initialization of the last layer varies.
In both cases, we also see that as the function computed by the
network at initialization approaches the zero function, the
trained model approaches the minimum $\ell_2$-norm interpolant.

Figure~\ref{fig:relu_models_risk_versus_alpha} includes plots of
analogous experiments with networks with ReLU nonlinearities.
As in the linear case the excess risk increases with the scale
of the initialization of the first layer, but we do not see
a significant increase in excess risk with the scale of the
initialization of the last layer.
\begin{figure}[H]
    \centering
    \includegraphics[width=\textwidth]{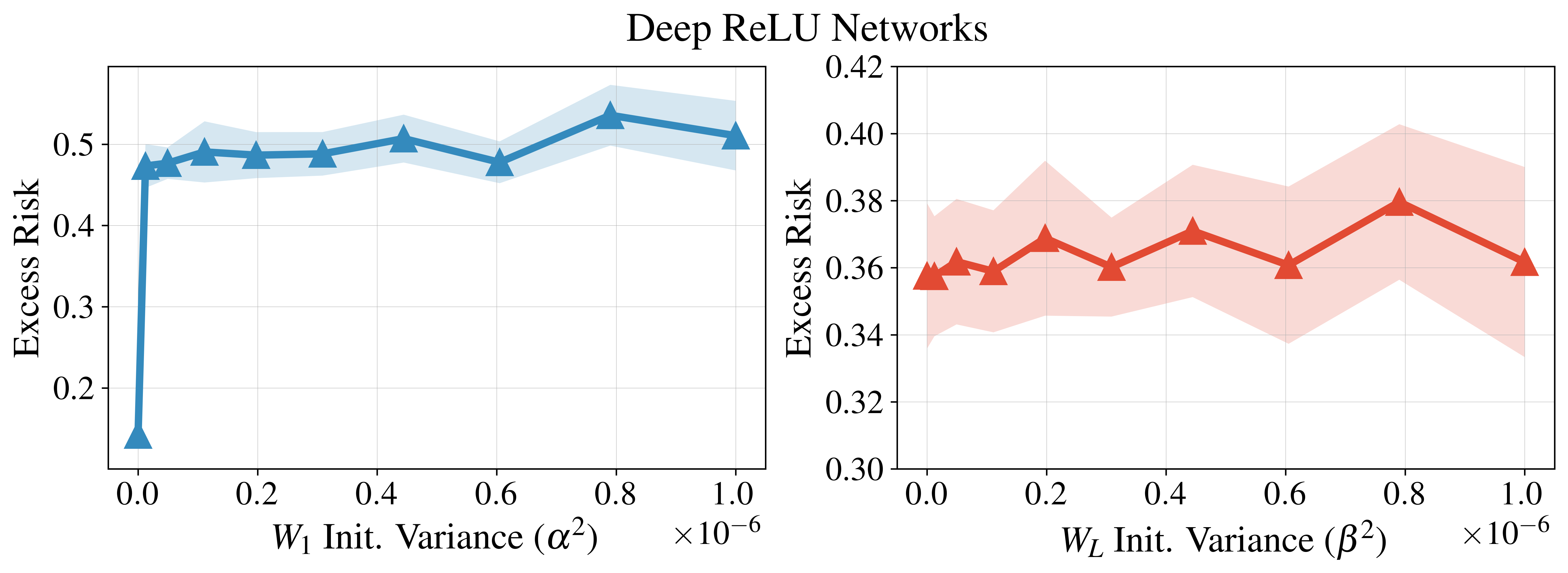}
    \caption{Three-layer ReLU networks trained by gradient descent on data generated by an underlying two-layer ReLU teacher network. The model is trained on $n=500$ points drawn from the generative model $y = f^\star(x) + \omega$, where $f^\star$ is a two-layer ReLU network with width $50$, $x \sim \mathsf{N}(0,I_{10 \times 10})$ and $\omega \sim \mathsf{N}(0,1)$. The excess risk is defined as $\E_{x}\left[\lv f(x)-f^\star(x)\rv^2\right]$. In ReLU models we find that the risk scales differently as we scale the initialization variance of the first layer $(\alpha^2)$ and that of the last layer $(\beta^2)$. When we scale $\alpha^2$, similar to deep linear models we find that risk is smaller for smaller values of $\alpha^2$. However, this is not the case when we scale $\beta^2$. This highlights a surprising asymmetry in the role played by the initialization scales of the different layers in ReLU networks. For additional details about the experiment see Section~\ref{s:simulations}.}
    \label{fig:relu_models_risk_versus_alpha}
\end{figure}
More details of the experiments are described in
Section~\ref{s:simulations}.

Intuitively, the harm from overfitting arises from fitting the noise, and the effect
of fitting the noise is analyzed in the conditional variance of the estimator.  In the setting studied
here, as in linear regression with the standard parameterization, the conditional variance is entirely determined by the projection of $\Theta$ onto the span of the rows of the data matrix $X$ which is equal to $\Theta_{\ell_2}$. Thus, when learning deep linear
networks with quadratic loss, aspects of training that
affect the inductive bias, such as the initialization, architecture, etc., do not
affect this variance term---no matter how they are chosen, the distribution of
the variance term is determined by $\Theta_{\ell_2}$.
To see an effect of implicit bias in deep linear networks on
the consequence of fitting the noise, we must analyze a loss
function other than the quadratic loss.

Our upper bounds reveal no benefit in representing linear transformations by deep networks, and,
in our simulations, we see no benefit with random initialization. 
This is because non-zero random initialization usually contributes additional error to the bias as the random initialization is typically a poor guess for the regression function.  (In rare cases it
could reduce the bias, though, if by chance it approximates the regression function.)

Our analysis also leverages the effect of imbalanced initialization on implicit bias---our treatment partially extends the results by \citet{MinEtAl21}
from the two-layer case to the deep case, and then combines them with our 
general risk bound.

\paragraph{Organization.} 
In Section~\ref{s:prelim} we describe our problem setting and our assumptions. Then in Section~\ref{s:main_results} we present our main results and in Sections~\ref{s:proof_main_theorem}, 
\ref{s:proof_random_init_theorem} and \ref{s:spike_proof}
we prove these results. We provide additional simulations and simulation details in Section~\ref{s:simulations}. We conclude with a discussion in Section~\ref{s:discussion}. In Appendix~\ref{s:additional_related_work} we highlight other related work on benign overfitting, implicit bias, and on linear networks. Finally, in Appendix~\ref{a:optimization_ab_initialization} we present omitted technical details.

\section{Preliminaries}\label{s:prelim}
This section includes notational conventions and a description of the setting.
\subsection{Notation}
\label{s:notation}
Given a vector $v$, let $\lv v \rv$ denote its Euclidean norm. Given a matrix $M$, let $\lv M \rv$ denote its Frobenius norm and let $\lv M \rv_{op}$ denote its operator norm.
  For any $j \in \N$, we denote the set $\{ 1,\ldots,j \}$ by $[j]$. We will use $c,  c', c_1, c_x, \ldots$ to denote positive 
absolute
constants, which may take different values in
different contexts.  

\subsection{Setting}
\label{s:setting}
We analyze linear regression with $d$ inputs and
$q$ outputs from $n$ examples.    Throughout the paper we assume that $d > n$. Although we assume throughout that the input dimension $d$ is finite, it is straightforward to extend our results to infinite $d$.

Let $X \in \mathbb{R}^{n \times d}$ be the data matrix, and $Y \in \mathbb{R}^{n\times q}$ be the response matrix,
and
let $x_1,\ldots,x_n \in \mathbb{R}^{1\times d}$
be the rows of $X$ and $y_1,\ldots,y_n \in \mathbb{R}^{1\times q}$ be the rows of $Y$. 

For random $(x,y)\in \mathbb{R}^{1\times d}\times\mathbb{R}^{1\times q}$, let
\[\Theta^\star \in \argmin_{\Theta \in\R^{d\times q}} \E_{(x,y)} \left[\lv y-x\Theta\rv^2\right]\] 
be an arbitrary optimal linear regressor. We let $\Omega = Y - X\Theta^\star \in \mathbb{R}^{n\times q}$ be the noise matrix. 

Define the \emph{excess risk} of an estimate $\Theta \in \R^{d\times q}$ to be
\begin{align*}
    \mathsf{Risk}(\Theta) := \E_{x,y}\left[\lv y-x\Theta\rv^2-\lv y-x \Theta^\star\rv^2\right],
\end{align*}
where $x,y$ are test samples that are independent 
of $\Theta$.

Denote the second moment matrix of the 
covariates
by $\Sigma :=\mathbb{E}[x^{\top}x] \in \mathbb{R}^{d\times d}$ with eigenvalues $\lambda_{1}\ge \ldots \ge \lambda_{d}\ge 0$.
We will use the following definitions of the
``effective rank'' that~\citet{bartlett2020benign} previously used in the analysis of the excess risk of the minimum $\ell_2$-norm interpolant.
\begin{definition}
Given any $j \in [d]$, define $s_j := \sum_{i>j}\lambda_i$ and 
\begin{align*}
    r_j := \frac{s_j}{\lambda_{j+1}} \qquad \text{and} \qquad R_j := \frac{s_j^2}{\sum_{i>j}\lambda_i^2}.
\end{align*}
\end{definition}

We define the index $k$ below. The value of $k$ shall help determine what we consider the ``tail'' of the covariance matrix. 
\begin{definition}\label{def:k_star}
For a large enough constant $b$ (that will be fixed henceforth), define
\begin{align*}
    k := \min\{j\ge 0: r_j\ge b n\},
\end{align*}
where the minimum of the empty set is defined as $\infty$.
\end{definition}

We are now ready to introduce the assumptions of our paper.

\paragraph{Assumptions.} Let $c_x$ and $c_y$ denote absolute constants.
\begin{enumerate}[label={(A.\arabic*)},
                  leftmargin=0.65in]
\item \label{assumption:first}The samples $(x_1,y_1),\ldots,(x_n,y_n)$
are drawn i.i.d.
    \item The covariates $x$ and responses $y$ are mean-zero.
    \item The covariates 
    $x$ satisfy
    $x= \Sigma^{1/2}u$, where $u$ is isotropic and has components that are independent $c_x$-sub-Gaussian random variables, that is, for all $\phi \in \R^d$
    \begin{align*}
        \E\left[\exp\left(\phi^{\top}u\right)\right]\le \exp\left(c_x \lv \phi\rv^2/2\right).
    \end{align*}
    \item\label{assumption:noise_subgaussian} The difference $y-x\Theta^\star$ is $c_y$-sub-Gaussian, conditionally on $x$;
    that is, for all $\phi \in \R^{q}$
    \begin{align*}
        \E_{y}\left[\exp\left(\phi^{\top}(y-x\Theta^\star)\right)\; \big| \;x\right]\le \exp\left(c_y \lv\phi\rv^2/2\right)
    \end{align*}
    (note that this implies that $\E\left[y \mid x\right] = x\Theta^\star$ and
    $\E\left[ \lv y - x\Theta^\star \rv^2 \right] \leq c q$). 
    \item \label{assumption:last} Almost surely, the projection of the data $X$ on the space orthogonal to any eigenvector of $\Sigma$ spans a space of dimension $n$.
\end{enumerate}
All the constants going forward may depend on the values of $c_x$ and  $c_y$. The assumptions made here are standard in the benign overfitting literature~\citep[see][]{bartlett2020benign,chatterji2022interplay}. They are satisfied for example in the case where $x$ is a mean-zero Gaussian whose covariance $\Sigma$ has full rank,
$d > n$,
and the noise $y-x\Theta^\star$ is independent and Gaussian.

\subsection{Deep Linear Models}
\label{s:training}

We analyze linear models represented by deep
linear networks with $m$ hidden units at each layer.
We denote the weight matrices by $W_1,\ldots, W_L$, where $W_1 \in \R^{m \times d}$, $W_2,\ldots, W_{L-1} \in \R^{m\times m}$, and $W_{L} \in \R^{q \times m}$. 
The standard representation of the network's linear transformation, denoted by
$\Theta \in \R^{d \times q}$, is
\begin{align*}
    \Theta & = \left(W_{L} \cdots W_{1}\right)^{\top} \in \R^{d\times q}.
\end{align*}
Define $P_{X}$ to be the projection onto the
row span of $X$, that is, $P_{X} := X^{\top}(XX^{\top})^{-1}X$. 
Let
\begin{align*}
 \Theta_X := P_{X} \Theta \quad \text{and} \quad    \Theta_{X^{\perp}} := (I - P_X) \Theta.
\end{align*}

For $n$ datapoints $(x_1,y_1),\ldots,(x_n,y_n)$, where $x_i \in \R^{1\times d}$ and $y_i \in \R^{1\times q}$, the training loss is given by
\begin{align*}
    \cL(\Theta) & := \sum_{i=1}^n \lv y_i - x_i\Theta\rv^2 = \lv Y - X\Theta \rv^2.
\end{align*}

We will analyze the generalization properties of deep linear
models trained with gradient flow, that is, for all $j \in [L]$,
\begin{align*}
    \frac{\mathrm{d}W_{j}^{(t)}}{\mathrm{d} t} & = - \nabla_{W_{j}^{(t)}} \cL(\Theta^{(t)}).
\end{align*}
We study the following random initialization scheme in our paper.
\begin{definition}(Random initialization)\label{def:random_init}
Given $\alpha, \beta >0$, the entries of the first layer $W_1^{(0)}$ and the last layer $W_L^{(0)}$ are initialized using i.i.d.\ draws from
$\mathsf{N}(0,\alpha^2)$ and $\mathsf{N}(0,\beta^2)$ respectively.  The remaining layers
$W_2^{(0)},\ldots,W_{L-1}^{(0)}$ are initialized to the identity $I_m$.
\end{definition}
A similar initialization scheme has been studied previously~\citep{DBLP:conf/iclr/ZouLG20}. Our analysis will show that starting from random initialization the scale of the network grows in a controlled manner which is captured by the following definition.
\begin{definition}\label{def:perpetually_bounded_training}
We say that {\em training is perpetually $\Lambda$ bounded}
if, for all $t\ge 0$ and all $S \subseteq [L]$, $$\prod_{j \in S} \left\lv W_{j}^{(t)}\right\rv_{op}\le \Lambda.$$
\end{definition}
In our subsequent analysis, this notion of perpetually $\Lambda$ bounded shall allow us to control the behavior of the network in the null space of the data matrix $X$. 

\subsection{The Minimum $\ell_2$-norm Interpolant}

It will be helpful to compare the generalization of the
deep linear model with the result of applying the minimum $\ell_2$-norm interpolant resulting from the standard parameterization.
\begin{definition}
\label{d:OLS}
For any $X \in \R^{n \times d}$ and $Y \in \R^{n \times q}$,
define $\Theta_{\ell_2} = X^{\top} (XX^{\top})^{-1}Y$.
\end{definition}
Under Assumption~\ref{assumption:last}, the matrix $XX^\top$ is full rank and therefore $\Theta_{\ell_2}$ is well defined. As previously noted, the excess risk of this canonical interpolator has been studied in prior work~\citep[see][]{bartlett2020benign,tsigler2020benign}.

\section{Main Results}\label{s:main_results}
In this section, we present our excess risk bounds. Our first result applies to any deep linear model trained until interpolation. Second, we shall specialize this result to the case where the model is randomly initialized. Lastly, we present an excess risk bound for a randomly initialized network in a setting with a spiked covariance matrix. 
\subsection{Excess Risk bound for Deep Linear Models}
The following theorem is an excess risk bound for \emph{any} deep linear
model trained until it interpolates in terms of the rate of convergence
of its training, 
along with the effective ranks of the
covariance matrix.
\begin{theorem}
\label{t:generalization.deep}Under Assumptions~\ref{assumption:first}-\ref{assumption:last}, there is an absolute constant $ c > 0$ such that,  for all
$\delta < 1/2$ and all depths $L>1$,
the following holds.
With probability at least $1-c\delta$,
if $\Theta = \lim_{t \rightarrow \infty} \Theta^{(t)}$ for a 
perpetually $\Lambda$-bounded training process for which
$\lim_{t \rightarrow \infty} \cL(\Theta^{(t)}) = 0$, 
and $n \geq c \max\{ r_0,k,\log(1/\delta) \}$,
then
\begin{align*}
\mathsf{Risk}(\Theta) & = 
\mathsf{Bias}(\Theta_{\ell_2})    
+\mathsf{Variance}(\Theta_{\ell_2})    +\mathsf{\Xi}  ,
\end{align*}
where 
\begin{align*}
    \mathsf{Bias}(\Theta_{\ell_2})     &\le \frac{cs_k}{n} \lv \Theta^\star \rv^2, \\
    \mathsf{Variance}(\Theta_{\ell_2})     & \le  c q \log(q/\delta)\left(\frac{k}{n} + \frac{n}{R_{k}} \right),
\end{align*}
and 
\begin{align*}
    \mathsf{\Xi} = \frac{c s_k}{n}\lv\Theta_{X^{\perp}} \rv^2 & \le 
  \frac{cs_k}{n}\left[ 
     \lv \Theta^{(0)}_{X^{\perp}}\rv
     +
      L
     \Lambda^2 \sqrt{\lambda_1 n}\int_{t=0}^\infty
 \sqrt{\cL(\Theta^{(t)})}\;\mathrm{d}t\right]^2
            .
\end{align*}
\end{theorem}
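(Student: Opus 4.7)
The plan is to decompose $\Theta$ along the row span of $X$ versus its orthogonal complement, identify the former with the minimum $\ell_2$-norm interpolant, and then control the orthogonal piece via the gradient-flow dynamics. At the limit, $\cL(\Theta) = 0$ gives $X\Theta = Y$; combined with $X\Theta_{X^\perp} = 0$ and $\Theta_X \in \text{rowspan}(X)$, this forces $\Theta_X = X^\top(XX^\top)^{-1}Y = \Theta_{\ell_2}$ exactly. Thus $\Theta - \Theta^\star = (\Theta_{\ell_2} - \Theta^\star) + \Theta_{X^\perp}$, and expanding $\|\Sigma^{1/2}(\Theta-\Theta^\star)\|^2$ and using the triangle inequality on the cross term reduces the task to separately bounding $\mathsf{Risk}(\Theta_{\ell_2})$ and $\|\Sigma^{1/2}\Theta_{X^\perp}\|^2$. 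The bias and variance of $\Theta_{\ell_2}$ are then precisely the existing high-probability bounds for the minimum $\ell_2$-norm interpolant under sub-Gaussian design (\citet{bartlett2020benign, tsigler2020benign}) which, under Assumptions~\ref{assumption:first}--\ref{assumption:last} and $n \ge c\max\{r_0,k,\log(1/\delta)\}$, deliver the stated forms verbatim, so I would invoke these directly.

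For the residual $\mathsf{\Xi}$, I would combine two ingredients. The first is an operator bound $v^\top\Sigma v \le (cs_k/n)\|v\|^2$ for every $v$ orthogonal to the row span of $X$, via a standard sub-Gaussian concentration argument: the rows of $X$ effectively span the top-$k$ eigendirections of $\Sigma$, so such $v$ has negligible mass there, and the tail contributes at most $s_k/n$ in operator norm. This yields $\|\Sigma^{1/2}\Theta_{X^\perp}\|^2 \le (cs_k/n)\|\Theta_{X^\perp}\|^2$. The second is a dynamical bound on $\|\Theta_{X^\perp}\|$. Writing $P_j = W_1^\top\cdots W_{j-1}^\top$ and $Q_j = W_{j+1}^\top\cdots W_L^\top$, a direct chain-rule computation gives
\begin{align*}
\frac{d\Theta^{(t)}}{dt} = -2\sum_{j=1}^L (P_j P_j^\top)\,X^\top(X\Theta^{(t)} - Y)\,(Q_j^\top Q_j).
\end{align*}
Applying perpetual $\Lambda$-boundedness with $S = [L]\setminus\{j\}$ yields $\|P_j P_j^\top\|_{op}\|Q_j^\top Q_j\|_{op} \le \Lambda^2$, so $\|d\Theta/dt\| \le 2L\Lambda^2\|X\|_{op}\sqrt{\cL(\Theta^{(t)})}$; integrating, together with the standard high-probability bound $\|X\|_{op} \le \sqrt{c\lambda_1 n}$, delivers the second branch. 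For the first branch, where $\Theta_{X^\perp}^{(0)} = 0$, I would exploit that $\frac{dW_1}{dt} = -2Q_1(X\Theta - Y)^\top X$ has rows in $\text{rowspan}(X)$ at every $t$, so if $W_1^{(0)}$'s rows lie in $\text{rowspan}(X)$ the property is preserved, and the factorization $\Theta = W_1^\top\cdots W_L^\top$ transfers this to every column of $\Theta^{(t)}$ throughout training.

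The main obstacle is rigorously deriving the first branch from only the hypothesis $\|\Theta_{X^\perp}^{(0)}\|=0$, which is a priori weaker than requiring $W_1^{(0)}$'s rows to lie in $\text{rowspan}(X)$; handling this likely needs a reduction argument allowing one to reparameterize to that restricted initialization without loss of generality, or a direct invariance argument showing $(I - P_X)(P_j P_j^\top)X^\top(X\Theta - Y)(Q_j^\top Q_j) = 0$ along any trajectory on which $\Theta_{X^\perp}^{(t)} = 0$. The $\Sigma$-operator bound on the orthogonal complement is the other non-trivial technical ingredient, but it follows a template standard in the benign-overfitting literature.
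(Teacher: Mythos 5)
Your proposal follows essentially the same route as the paper: the identification $\Theta_X = \Theta_{\ell_2}$ for any interpolator, the decomposition of the risk into the bias/variance of the minimum $\ell_2$-norm interpolant plus a contribution from $\Theta_{X^{\perp}}$, the operator bound $\lv (I-P_X)\Sigma(I-P_X)\rv_{op} \le c s_k/n$ (which the paper imports from prior work rather than reproving), and the dynamical bound obtained by combining the product formula for $\mathrm{d}\Theta^{(t)}/\mathrm{d}t$ with perpetual $\Lambda$-boundedness and the high-probability bound $\lv X\rv_{op} \le c\sqrt{\lambda_1 n}$. One cosmetic difference: the paper bounds $\tfrac{1}{2}\tfrac{\mathrm{d}}{\mathrm{d}t}\lv\Theta_{X^{\perp}}^{(t)}\rv^2$ rather than $\lv\mathrm{d}\Theta^{(t)}/\mathrm{d}t\rv$, which lets the $j=1$ term vanish (since $P_{X^{\perp}}X^{\top}=0$) and yields a factor $L-1$ in place of your $L$; this is immaterial for the stated bound.

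The one substantive divergence is the first branch, exactly where you flag an obstacle. The paper does \emph{not} use invariance of the row space of $W_1$ there; that invariance ($W_{1,X^{\perp}}^{(t)} = W_{1,X^{\perp}}^{(0)}$) is proved as a separate lemma and used only for the random-initialization theorem. Instead, Case 1 of the paper's Lemma on $\lv\Theta_{X^{\perp}}^{(t)}\rv$ argues directly that the differential inequality $\tfrac{1}{2}\tfrac{\mathrm{d}}{\mathrm{d}t}\lv\Theta_{X^{\perp}}^{(t)}\rv^2 \le (L-1)\lv\Theta_{X^{\perp}}^{(t)}\rv\,\lv X\rv_{op}\Lambda^2\sqrt{\cL(\Theta^{(t)})}$ together with $\lv\Theta_{X^{\perp}}^{(0)}\rv = 0$ forces $\lv\Theta_{X^{\perp}}^{(t)}\rv \equiv 0$. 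Your instinct that this step deserves care is sound: an inequality $\dot u \le C(t)\sqrt{u}$ with $u(0)=0$ does not by itself force $u\equiv 0$, since the square root is not Lipschitz at the origin, so either one accepts the paper's brief argument or one falls back on a structural argument of the kind you sketch, which does require the stronger hypothesis $W_{1,X^{\perp}}^{(0)} = 0$ (this stronger hypothesis does hold in the settings where the paper actually invokes the first branch, namely $\alpha = 0$ and the modified-identity initialization). Note also that your Case 2 machinery applied with $\lv\Theta_{X^{\perp}}^{(0)}\rv = 0$ already yields the second displayed bound with its first summand equal to zero, so the only thing genuinely at stake in the first branch is whether the integral term can also be dropped.
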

This bound shows that the \emph{conditional bias} of the estimator $\Theta$ is upper bounded by the conditional bias of the minimum $\ell_2$-norm interpolant $\mathsf{Bias}(\Theta_{\ell_2})$ plus $\mathsf{\Xi}$, which is the additional bias incurred by the component of $\Theta$ outside the row span of $X$. This additional term $\mathsf{\Xi}$ depends not only on the eigenvalues of the covariance matrix but also on the specifics of the optimization procedure such as the initial linear model ($\Theta^{(0)}$), the size of the weights throughout training $(\Lambda)$ and the rate of decay of the loss.

Interestingly, the \emph{conditional variance} of the interpolator $\Theta$, is in fact identical to the conditional variance of the minimum $\ell_2$-norm interpolant.  This follows because, as we will show in the proof, the component of the interpolator $\Theta$ in the row span of $X$ is in fact equal to $\Theta_{\ell_2}$, and the conditional variance depends only on this component within the row span of $X$. The variance captures the effect of perfectly fitting the noise in the data, and our analysis shows that the harm incurred by fitting the noise is unaffected by parameterizing a linear model as a deep linear model. 

Essentially matching lower bounds (up to constants) on the variance term are known \citep{bartlett2020benign}.

\subsection{Excess Risk Bound under Random Initialization}
Our next main result establishes a high-probability bound on
the excess risk, 
and in particular on $\mathsf{\Xi}$,
when the network is trained after a random
initialization (see Definition~\ref{def:random_init}).

\begin{theorem}
\label{t:risk_ab_initialization}
Under Assumptions~\ref{assumption:first}-\ref{assumption:last}, 
there is an absolute constant $c > 0$ such that, 
for all $\delta < 1/2$,
if
\begin{itemize}
    \item the initialization 
    scales
    $\beta$ and $\alpha$ satisfy 
    $\beta \geq 
   c
    \max\left\{ 1,
         \frac{ 
   \lambda_1^{1/4} \sqrt{L n} \left( \sqrt{\lv \Theta^\star \rv} \lambda^{1/4} + q^{1/4}
      \right)
       }{
       \sqrt{s_k}
       }
      \right\}$
      and $\alpha \leq 1$;
    \item the width  $m \ge c \max\left\{  d+q+\log(1/\delta),\frac{ L^2\alpha^2 \lambda_1 s_0 n^2 q\log(n/\delta)}
                     {\beta^2 
                     s_k^2} \right\}$;
    \item the network is trained using random initialization as described in Definition~\ref{s:training};
  \item the number of samples satisfies
 $n \geq c \max\{ r_0,k, \log(1/\delta) \}$,
\end{itemize}
then, with probability at least $1-c\delta$, 
\begin{align*}
\mathsf{Risk}(\Theta) & \le \mathsf{Bias(\Theta_{\ell_2})}    +\mathsf{Variance(\Theta_{\ell_2})}+ \mathsf{\Xi},
\end{align*}
where
\begin{align*}
    \mathsf{Bias(\Theta_{\ell_2})}  &\le \frac{cs_k}{n} \lv \Theta^\star \rv^2, \\
    \mathsf{Variance(\Theta_{\ell_2})} & \le  c q \log(q/\delta)\left(\frac{k}{n} + \frac{n}{R_{k}} \right), \\
   \mathsf{\Xi} 
    & \leq  \frac{c \alpha^2 s_k}{n}\left[q  \beta^2+  
          \frac{L^2(\alpha+1/L)^4\lambda_1 n^2}{s_k^2}\left(\lambda_1 \lv \Theta^\star \rv^2 +q +\frac{\alpha^2 \beta^2 s_0 q \log(n/\delta)}{m}\right)
          \right]. 
\end{align*}
\end{theorem}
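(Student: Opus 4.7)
The plan is to specialize Theorem~\ref{t:generalization.deep} to the random initialization of Definition~\ref{def:random_init}. The bias and variance contributions carry over verbatim because they depend only on $\Theta_{\ell_2}$; the whole task is to bound the three optimization-dependent quantities that enter $\mathsf{\Xi}$, namely $\lv\Theta^{(0)}_{X^\perp}\rv$, the perpetual operator-norm product $\Lambda$, and the integrated square-root loss $\int_0^\infty \sqrt{\cL(\Theta^{(t)})}\,\mathrm{d}t$.

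For the initial deviation, the product structure $\Theta^{(0)} = (W_L^{(0)} W_1^{(0)})^\top$ with identity middle layers lets me integrate out $W_L^{(0)}$ conditional on $W_1^{(0)}$: each of the $q$ columns of $\Theta^{(0)}$ becomes a Gaussian vector with covariance $\beta^2 (W_1^{(0)})^\top W_1^{(0)}$. A Hanson--Wright-style concentration, combined with standard Gaussian matrix bounds $\lv W_1^{(0)}\rv_{op} \lesssim \alpha \sqrt{m}$ and $\lv W_L^{(0)}\rv_{op} \lesssim \beta \sqrt{m}$ that are valid under $m \gtrsim d+q+\log(1/\delta)$, then controls $\lv\Theta^{(0)}_{X^\perp}\rv$ with high probability and matches the $\alpha^2 q\beta^2$ summand inside the bracket. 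These operator-norm bounds on $W_1^{(0)}$ and $W_L^{(0)}$ are also the starting point for bounding $\Lambda$.

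The main obstacle is controlling the deep gradient-flow dynamics so that $\Lambda$ remains bounded and the loss decays rapidly. I would exploit the classical conservation laws $\tfrac{\mathrm{d}}{\mathrm{d}t}(W_{j+1}^\top W_{j+1} - W_j W_j^\top)=0$ of gradient flow on deep linear networks, which freeze the adjacent-layer imbalances at their initial values. Under Definition~\ref{def:random_init} the middle-layer imbalances vanish, while the imbalances involving the first and last layers are governed by $\alpha$ and $\beta$; this effectively reduces the deep dynamics to a two-layer problem with scales $\alpha,\beta$, and I would adapt the two-layer analysis of \citet{MinEtAl21} (this step is the one carried out in detail in Appendix~\ref{a:optimization_ab_initialization}). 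The lower bound on $\beta$ ensures that the smallest singular value of the associated Gram matrix at initialization is bounded away from zero, yielding exponential loss decay at a rate $\mu$ proportional to $\beta^2$ and the smallest nonzero singular value of $X$; the upper bound on $\alpha$ and the width lower bound keep the effective perturbation small and control operator-norm growth, giving a uniform bound $\Lambda \lesssim \beta(\alpha + 1/L)$ along the trajectory.

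Putting the pieces together, exponential decay yields $\int_0^\infty \sqrt{\cL(\Theta^{(t)})}\,\mathrm{d}t \lesssim \sqrt{\cL(\Theta^{(0)})}/\mu$, and $\sqrt{\cL(\Theta^{(0)})}$ is controlled using the sub-Gaussian assumption on $Y-X\Theta^\star$ combined with the operator-norm bound on $\Theta^{(0)}$. Substituting these bounds into
\[
\mathsf{\Xi} \le \frac{c s_k}{n}\left[\lv\Theta^{(0)}_{X^\perp}\rv + L\Lambda^2 \sqrt{\lambda_1 n}\int_0^\infty \sqrt{\cL(\Theta^{(t)})}\,\mathrm{d}t\right]^2
\]
from Theorem~\ref{t:generalization.deep}, expanding the square, and using the width condition on $m$ to absorb the secondary term $\alpha^2 \beta^2 s_0 q \log(n/\delta)/m$, produces the stated bound on $\mathsf{\Xi}$. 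The precise constraints on $\alpha$, $\beta$, and $m$ in the hypotheses are exactly what is needed for every Gaussian concentration step and every dynamical bound above to hold simultaneously with probability at least $1-c\delta$.
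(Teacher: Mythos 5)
There is a genuine gap in the final substitution step. You propose to plug $\Lambda \lesssim (\alpha+1/L)\beta$ and the integrated loss bound directly into the $\mathsf{\Xi}$ bound of Theorem~\ref{t:generalization.deep}. Doing the arithmetic: $L\Lambda^2\sqrt{\lambda_1 n}\int_0^\infty\sqrt{\cL(\Theta^{(t)})}\,\mathrm{d}t \lesssim L(\alpha+1/L)^2\beta^2\sqrt{\lambda_1 n}\cdot\bigl(\sqrt{(\lambda_1\lv\Theta^\star\rv^2+q)n}+\alpha\beta\sqrt{s_0qn\log(n/\delta)/m}\bigr)/(\beta^2 s_k)$; the $\beta^2$ cancels but \emph{no factor of $\alpha$ appears} in front of the dominant $\sqrt{(\lambda_1\lv\Theta^\star\rv^2+q)n}$ term. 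After squaring, your route yields $\mathsf{\Xi}\lesssim \frac{s_k}{n}\bigl[q\alpha^2\beta^2 + \frac{L^2(\alpha+1/L)^4\lambda_1 n^2}{s_k^2}(\lambda_1\lv\Theta^\star\rv^2+q+\cdots)\bigr]$, which is missing the overall $\alpha^2$ on the second summand that the theorem asserts. In particular your bound does not tend to zero as $\alpha\to 0$, whereas the theorem's does (and this vanishing is the whole point emphasized after the theorem statement).

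The missing idea is a \emph{refinement} of the $\lv\Theta^{(t)}_{X^\perp}\rv$ growth bound, specific to this initialization. Since $\nabla_{W_1}\cL(\Theta) = (W_L\cdots W_2)^\top(X\Theta-Y)^\top X$, right-multiplying by $I-P_X$ annihilates the update, so $W_{1,X^\perp}^{(t)} := W_1^{(t)}(I-P_X)$ is frozen at its initial value for all $t$ (Lemma~\ref{l:Wone_perp_fixed}), and moreover $\Theta^{(t)}_{X^\perp} = (W_L^{(t)}\cdots W_2^{(t)}W_{1,X^\perp}^{(t)})^\top$ (Lemma~\ref{l:Theta_perp.Wone_perp}). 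Re-running the differential-inequality argument of Lemma~\ref{lem:outside_span_norm_bound} while extracting $\lv W_{1,X^\perp}^{(0)}\rv_{op}\le\lv W_1^{(0)}\rv_{op}\lesssim\alpha$ as an explicit factor gives $\frac{\mathrm{d}}{\mathrm{d}t}\lv\Theta^{(t)}_{X^\perp}\rv\le (L-1)\lv W_1^{(0)}\rv_{op}\lv X\rv_{op}\Lambda^2\sqrt{\cL(\Theta^{(t)})}$, i.e., an extra multiplicative $\alpha$ on the integral term, which after squaring produces the stated $\alpha^2$ prefactor. You actually have the right intuition in your remark that the $W_1$ updates live in the row span of $X$, but you use it only to motivate the asymmetry between $\alpha$ and $\beta$ in the optimization proposition; it must also be injected into the generalization bound itself, replacing the generic $\mathsf{\Xi}$ bound of Theorem~\ref{t:generalization.deep} with the sharpened one. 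The rest of your outline (the bias/variance terms, $\lv\Theta^{(0)}_{X^\perp}\rv\lesssim\sqrt{q}\alpha\beta$, $\Lambda\lesssim(\alpha+1/L)\beta$, and the exponential loss decay giving $\int\sqrt{\cL}\lesssim\sqrt{\cL(\Theta^{(0)})}/(\beta^2 s_k)$) is consistent with the paper, modulo your different (conservation-law-based) route to the optimization proposition, which you defer to the appendix in any case.
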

Note that the bound on $\mathsf{\Xi}$ of Theorem~\ref{t:risk_ab_initialization}
can be made arbitrarily small by decreasing $\alpha$ while keeping the
other parameters fixed.  When $\alpha = 0$, our bound shows that the model has the same risk as the minimum $\ell_2$-norm interpolant. 

Recall from the simulation in Figure~\ref{fig:linear_models_risk_versus_init_scale}
that as the initialization of the last layer $\beta$ approaches $0$,
the model produced by gradient descent gets closer to the minimum $\ell_2$-norm interpolant. Our bound on $\mathsf{\Xi}$ does not approach $0$ as $\beta \to 0$, and we do not know how to prove that this happens in general with high probability.

Regarding the role of overparameterization, we find that one component of our bound on $\mathsf{\Xi}$ gets smaller as the width $m$ is increased. However, our bound gets larger as we increase depth $L$.  

As mentioned earlier, the bound on the conditional variance, which captures the effect of fitting the noise, is sharp up to constants, however we do not know whether the upper bound on the conditional bias, and specifically $\mathsf{\Xi}$ in
Theorem~\ref{t:risk_ab_initialization}, can be improved.  It is also unclear whether
conditions on $\beta$ and $m$ can be relaxed.

Next, 
to facilitate the interpretation of our bounds,
we apply
Theorem~\ref{t:risk_ab_initialization}
in a canonical
setting where benign overfitting occurs for the minimum $\ell_2$-norm interpolant.
\begin{definition}[$(k,\epsilon)$-spike model]
For $0 < \epsilon < 1$ and $k \in \N$, 
a  $(k,\epsilon)$-spike model is 
a setting
where the eigenvalues of $\Sigma$ are
$\lambda_1 = \ldots = \lambda_k=1$ and
$\lambda_{k+1} = \ldots = \lambda_d = \epsilon$.  
\end{definition}
The $(k,\epsilon)$-spike model is a setting where there are $k$ high variance directions, and many $(d-k)$ low variance directions that can be used to ``hide'' the energy of the noise. Note that, in this model, if $d \geq c n$ and $n \ge ck$ for a large enough
constant $c$, then
$k$ satisfies the requirement of Definition~\ref{def:k_star}, since $r_k =\epsilon(d-k)/\epsilon =d-k \ge bn$.  Since this covariance matrix has full rank,
it may be used in one of the concrete settings where all of our assumptions are
satisfied described at the end of Section~\ref{s:setting}.

\begin{corollary}
\label{c:k_eps_p} 
Under Assumptions~\ref{assumption:first}-\ref{assumption:last}, there is an absolute constant $c > 0$, such that, for any $0 < \epsilon < 1$ and $k \in \N$, 
if $\Sigma$ is an instance of
the $(k, \epsilon)$-spike model,
for any
input dimension $d$, output dimension $q$, depth $L > 1$,
and number of samples $n$,
there are initialization scales $\alpha > 0$ and $\beta > 0$ such that the following holds. For all $\delta < 1/2$,
if
\begin{itemize}
    \item the width $m \ge c (d+q+\log(1/\delta))$;
    \item the network is trained as described in Section~\ref{s:training};
    \item the input dimension  $d \geq c n$;
  \item the number of samples $n \geq c \max\left\{k+\epsilon d,  \log(1/\delta)\right\}$,
\end{itemize}
then, with probability at least $1-c\delta$, 
\begin{align}
\label{e:spike_detailed}
\mathsf{Risk}(\Theta) 
 & \leq 
   c
 \Bigg(
   \frac{\epsilon d \lv \Theta^\star \rv^2 + q k \log(q/\delta)}{n} 
   +
  \frac{n q \log(q/\delta)}{d} \Bigg).
\end{align}
\end{corollary}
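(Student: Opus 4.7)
The plan is to instantiate Theorem~\ref{t:risk_ab_initialization} in the spike model with a careful choice of the pair $(\alpha,\beta)$, and then collect the three contributions. First I would record the structural quantities for the $(k,\epsilon)$-spike model: $\lambda_1 = 1$, $s_k = (d-k)\epsilon$, $r_k = d-k$, $R_k = s_k^2/\sum_{i>k}\lambda_i^2 = d-k$, and $r_0 = s_0 = k + (d-k)\epsilon \le k + \epsilon d$. Under the hypotheses $d \ge cn$ and $n \ge c(k+\epsilon d)$, for a sufficiently large absolute constant $c$, one has $r_k = d-k \ge bn$, so the index defined in Definition~\ref{def:k_star} agrees with the $k$ in the spike model, and $n \ge c(k + \epsilon d) \ge c \max\{r_0,k,\log(1/\delta)\}$, matching the sample-size requirement of Theorem~\ref{t:risk_ab_initialization}. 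A second bookkeeping remark is that $d \ge cn \ge c^2 k$ forces $d - k \ge d/2$ for $c$ large enough, which I will use when simplifying the variance.

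Next I would set $\beta$ equal to the minimum value allowed by the theorem, namely $\beta = c\max\{1,\sqrt{Ln}(\sqrt{\lv \Theta^\star\rv}+q^{1/4})/\sqrt{(d-k)\epsilon}\}$, and then choose $\alpha \in (0,1]$ as small as needed. Since the corollary only asserts \emph{existence} of suitable scales, $\alpha$ is allowed to depend on all problem parameters. The key observation is that the bracketed factor in the bound on $\mathsf{\Xi}$ from Theorem~\ref{t:risk_ab_initialization} is bounded \emph{uniformly} in $\alpha$ once $\alpha \le 1$ (using $(\alpha + 1/L)^4 \le 16/L^4$ and $\alpha^2 \le 1$ in the $1/m$ term), so the whole quantity is dominated by $\alpha^2$ times a fixed constant depending on the remaining parameters. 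Picking $\alpha$ small enough therefore makes $\mathsf{\Xi}$ smaller than the target variance scale $cq\log(q/\delta)n/d$, and simultaneously makes the second branch $L^2\alpha^2\lambda_1 s_0 n^2 q\log(n/\delta)/(\beta^2 s_k^2)$ of the width condition fall below $d+q+\log(1/\delta)$, so the width requirement reduces to the $m \ge c(d+q+\log(1/\delta))$ stated in the corollary.

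With these choices, Theorem~\ref{t:risk_ab_initialization} applies, and I would substitute the spike-model values directly. The bias term becomes $\mathsf{Bias}(\Theta_{\ell_2}) \le c(d-k)\epsilon\lv \Theta^\star\rv^2/n \le c\epsilon d\lv \Theta^\star\rv^2/n$, and the variance term becomes
\[
\mathsf{Variance}(\Theta_{\ell_2}) \le cq\log(q/\delta)\left(\frac{k}{n}+\frac{n}{d-k}\right) \le cq\log(q/\delta)\left(\frac{k}{n}+\frac{2n}{d}\right),
\]
where the last inequality uses $d-k \ge d/2$. Summing these bounds with the previously controlled $\mathsf{\Xi}$ yields the claimed inequality~\eqref{e:spike_detailed}.

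The main obstacle is entirely bookkeeping: one must verify that a single choice of $\alpha$ simultaneously controls $\mathsf{\Xi}$, satisfies the width condition, and lies in $(0,1]$. Because both the bound on $\mathsf{\Xi}$ and the offending branch of the width constraint are proportional to $\alpha^2$ once the remaining parameters ($\beta, L, n, d, q, m, \lv \Theta^\star\rv$) have been fixed, taking $\alpha$ sufficiently small accomplishes all three requirements at once, which is why the corollary only requires the existence of such scales rather than exhibiting an explicit formula.
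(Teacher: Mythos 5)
Your proposal is correct and follows essentially the same route as the paper: compute $s_k$, $r_0$, $R_k$ for the spike model, set $\beta$ to the smallest value permitted by Theorem~\ref{t:risk_ab_initialization}, and then take $\alpha$ small enough that both the width condition collapses to $m \ge c(d+q+\log(1/\delta))$ and $\mathsf{\Xi}$ (which is $\alpha^2$ times a quantity bounded uniformly as $\alpha \to 0$) becomes a lower-order term. Your extra checks (that Definition~\ref{def:k_star} returns the spike index $k$, and that $d-k \ge d/2$) are fine and match remarks the paper makes in passing.
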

Corollary~\ref{c:k_eps_p} gives the simple bound obtained by a choice of parameters
that includes a sufficiently small value of $\alpha$.
For larger values of $\alpha$ the bound
of Theorem~\ref{t:risk_ab_initialization} may behave differently in the
case of the $(k, \epsilon)$-spike model. We find that if we regard $\lv \Theta^\star\rv^2$ as a constant then, the excess risk approaches zero if 
\begin{align*}
    \frac{\epsilon d}{ n} \to 0,\quad  \frac{qk \log(q/\delta)}{n}\to 0\quad  \text{and}\quad \frac{n q}{d} \to 0,
\end{align*}
which recovers the known sufficient conditions for the minimum $\ell_2$-norm interpolant to benignly overfit in this setting.   One example is where
\[
q = 5, k = 5, \delta = 1/100, d = n^2, \epsilon = 1/n^2,
\]
and $n \rightarrow \infty$.

\section{Proof of Theorem~\ref{t:generalization.deep}}\label{s:proof_main_theorem}

The proof of Theorem~\ref{t:generalization.deep} needs some
lemmas, which we prove first.  
Throughout this section the assumptions of Theorem~\ref{t:generalization.deep} are in force.

A key point is that the projection of any interpolator onto
the row span of $X$, including the model output by training
a deep linear network, is the minimum $\ell_2$-norm interpolant.

\begin{lemma}
\label{l:OLS_in_span}
For any interpolator $\Theta$, 
$\Theta_X = P_{X} \Theta =
\Theta_{\ell_2}$.
\end{lemma}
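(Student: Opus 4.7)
The statement is essentially immediate from the definitions, so the ``plan'' is really a short direct calculation rather than a proof with an obstacle. Let me outline it.

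First I would recall that under Assumption~\ref{assumption:last}, $XX^{\top} \in \R^{n \times n}$ is invertible, so the orthogonal projection onto the row span of $X$ has the explicit form $P_X = X^{\top}(XX^{\top})^{-1} X$ used in the paper, and in particular $\Theta_{\ell_2} = X^{\top}(XX^{\top})^{-1} Y$ is well defined. Since $\Theta$ is assumed to be an interpolator, by definition $X\Theta = Y$.

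Then the proof is the one-line substitution
\begin{align*}
    \Theta_X = P_X \Theta = X^{\top}(XX^{\top})^{-1} X \Theta = X^{\top}(XX^{\top})^{-1} Y = \Theta_{\ell_2}.
\end{align*}
No matrix-algebra subtleties arise because the assumption guaranteeing that $XX^{\top}$ is invertible was already used to define $\Theta_{\ell_2}$ in Definition~\ref{d:OLS}. There is no real ``hard step'': the conceptual content is simply that any interpolator differs from $\Theta_{\ell_2}$ only by an element of the null space of $X$, and projecting onto the row span kills that component. I would conclude by emphasizing this interpretation, since it is the fact that drives the rest of the analysis in Section~\ref{s:proof_main_theorem}: the bias-plus-variance decomposition in Theorem~\ref{t:generalization.deep} can be analyzed by splitting $\Theta = \Theta_{\ell_2} + \Theta_{X^{\perp}}$ and reusing known bounds on $\Theta_{\ell_2}$ for the in-span component.
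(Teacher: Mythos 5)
Your proof is correct and takes essentially the same approach as the paper's: both substitute the interpolation condition $X\Theta = Y$ into the formula $\Theta_{\ell_2} = X^{\top}(XX^{\top})^{-1}Y$ and identify the result with $P_X\Theta$; your version is just slightly more direct (the paper routes through $\Theta_X$ and the idempotence $P_X P_X = P_X$).
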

\begin{proof} Since $\Theta$ interpolates the data
\begin{align}
    Y = X \Theta 
     = X \left(\Theta_{X} + \Theta_{X^{\perp}}\right) 
     = X \Theta_{X} . \label{e:y_interpolation_condition}
\end{align}
Recall that $\Theta_X = X^{\top}(XX^{\top})^{-1}X\Theta = P_{X} \Theta$, where $P_{X}$ projects onto the row span of $X$. Continuing, we get that
\begin{align*}
    \Theta_{\ell_2} 
    &= X^{\top}(XX^{\top})^{-1}Y \\ 
    &= X^{\top}(XX^{\top})^{-1}X \Theta_{X} \\ & = P_{X} \Theta_{X}  = P_{X} P_{X} \Theta  = P_{X} \Theta  = \Theta_{X}.
\end{align*}
\end{proof}

Using the formula for the minimum $\ell_2$-norm interpolant,
we can now write down an expression
for the excess risk.
\begin{lemma}
\label{l:risk_decomposition}
The excess risk of any interpolator $\Theta$ of the data satisfies
\begin{align*}
    \mathsf{Risk}(\Theta) & \le c \Tr\left((\Theta^\star -\Theta_{X^{\perp}})^{\top}B(\Theta^\star -\Theta_{X^{\perp}})\right)+cq\log(q/\delta)\Tr(C)
\end{align*}
with probability at least $1-\delta$ over the noise matrix $\Omega = Y-X\Theta^\star$, where
\begin{align*}
    B &:= \left(I - X^{\top}(XX^{\top})^{-1}X\right)\Sigma\left(I - X^{\top}(XX^{\top})^{-1}X\right) \quad \text{and} \\
    C &:= (XX^{\top})^{-1}X\Sigma X^{\top}(XX^{\top})^{-1}.
\end{align*}
\label{l:excess_risk_decomposition}
\end{lemma}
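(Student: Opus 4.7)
The plan is to reduce $\mathsf{Risk}(\Theta)$ to a $\Sigma$-weighted quadratic form in $\Theta - \Theta^\star$, substitute the decomposition $\Theta = \Theta_{\ell_2} + \Theta_{X^\perp}$ delivered by Lemma~\ref{l:OLS_in_span}, split the resulting expression into a bias-like contribution from $\Theta^\star - \Theta_{X^\perp}$ and a noise-like contribution from $\Omega$, and then control the latter with a Hanson--Wright argument applied column by column to $\Omega$.

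First, because the test pair $(x,y)$ is independent of the training data (hence of $\Theta$, $X$, and $\Omega$), Assumption~\ref{assumption:noise_subgaussian} gives $\E[y \mid x] = x\Theta^\star$, so expanding the square and cancelling the noise term yields
$$\mathsf{Risk}(\Theta) = \E_x\!\left[\lv x(\Theta - \Theta^\star)\rv^2\right] = \Tr\!\left((\Theta - \Theta^\star)^\top \Sigma (\Theta - \Theta^\star)\right).$$
From $Y = X\Theta^\star + \Omega$ one has $\Theta_{\ell_2} = P_X \Theta^\star + X^\top(XX^\top)^{-1}\Omega$, and combining this with Lemma~\ref{l:OLS_in_span} and the identity $P_X \Theta_{X^\perp} = 0$ rewrites
$$\Theta - \Theta^\star \;=\; -(I - P_X)(\Theta^\star - \Theta_{X^\perp}) \;+\; X^\top(XX^\top)^{-1}\Omega.$$
Multiplying by $\Sigma^{1/2}$ and applying $\lv A + B\rv^2 \le 2\lv A\rv^2 + 2\lv B\rv^2$ (absorbing the $2$ into $c$) then produces
$$\mathsf{Risk}(\Theta) \;\le\; c\,\Tr\!\left((\Theta^\star - \Theta_{X^\perp})^\top B (\Theta^\star - \Theta_{X^\perp})\right) \;+\; c\,\Tr(\Omega^\top C\,\Omega),$$
with $B$ and $C$ as in the lemma.

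What remains is to prove $\Tr(\Omega^\top C \Omega) \le c\,q \log(q/\delta)\Tr(C)$ with probability at least $1 - \delta$. Conditional on $X$, the rows of $\Omega$ are independent and $c_y$-sub-Gaussian by Assumptions~\ref{assumption:first} and~\ref{assumption:noise_subgaussian}, so for each $j \in [q]$ the column $\Omega_{:,j} \in \R^n$ has independent scalar $c_y$-sub-Gaussian entries, and its conditional covariance is diagonal with entries bounded by an absolute constant. Writing $\Tr(\Omega^\top C \Omega) = \sum_{j=1}^q \Omega_{:,j}^\top C\,\Omega_{:,j}$, I would apply the Hanson--Wright inequality to each quadratic form: since $C \succeq 0$ we have $\lv C\rv_{op} \le \lv C\rv \le \Tr(C)$ and $\E[\Omega_{:,j}^\top C\Omega_{:,j} \mid X] \le c\,\Tr(C)$, so Hanson--Wright yields $\Omega_{:,j}^\top C\,\Omega_{:,j} \le c\,\Tr(C)\log(q/\delta)$ with probability at least $1 - \delta/q$. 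A union bound over $j \in [q]$ then closes the estimate.

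The only mildly nontrivial step is the concentration bound, which is routine once the structure is in place; the substantive content of the lemma is the algebraic split supplied by Lemma~\ref{l:OLS_in_span}, which pushes all $\Omega$-dependence of the risk into the clean quadratic form $\Omega^\top C\,\Omega$ and leaves the bias piece purely deterministic in $X$ and $\Theta^\star - \Theta_{X^\perp}$.
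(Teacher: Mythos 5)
Your proposal is correct and follows essentially the same route as the paper: the same reduction of the risk to $\E_x\lv x(\Theta-\Theta^\star)\rv^2$, the same substitution of $\Theta = \Theta_{\ell_2}+\Theta_{X^\perp}$ via Lemma~\ref{l:OLS_in_span} with $Y = X\Theta^\star+\Omega$, and the same split into the $B$-quadratic form and $\Tr(\Omega^\top C\Omega)$. The only (cosmetic) difference is that where you invoke Hanson--Wright column by column, the paper cites Lemma~S.2 of \citet{bartlett2020benign}, which is the same sub-Gaussian quadratic-form concentration bound.
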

\begin{proof}
We have $y - x\Theta^\star$ is conditionally mean-zero given $x$, thus
\begin{align*}
    \mathsf{Risk}(\Theta) &= \E_{x,y}\left[\lv y - x\Theta\rv^2\right]-\E_{x,y}\left[\lv y - x\Theta^{\star}\rv^2\right]\\
    &= \E_{x,y}\left[\lv y - x\Theta^{\star}+x(\Theta^\star -\Theta)\rv^2\right]-\E_{x,y}\left[\lv y - x\Theta^{\star}\rv^2\right]\\
    & = \E_{x}\left[\lv x(\Theta^\star -\Theta)\rv^2\right]. \numberthis \label{e:risk_equality_expectation}
\end{align*}
Since $\Theta$ interpolates the data, by Lemma~\ref{l:OLS_in_span} we know that 
\[
\Theta = \Theta_{\ell_2}+\Theta_{X^{\perp}} = X^\top(XX^\top)^{-1}Y +\Theta_{X^{\perp}}.
\]
Now because $Y = X\Theta^\star +\Omega$ we find that
\begin{align*}
    & \mathsf{Risk}(\Theta) \\
    & = \E_x\left[\left\lv x\left(I-X^{\top}(XX^{\top})^{-1}X\right)(\Theta^\star -\Theta_{X^{\perp}})-x X^\top(XX^\top)^{-1}\Omega\right\rv^2\right]\\
    & \le 2\E_x\left[\left\lv x\left(I-X^{\top}(XX^{\top})^{-1}X\right)(\Theta^\star -\Theta_{X^{\perp}})\right\rv^2\right]
    +2\E_{x}\left[\left\lv x X^\top(XX^\top)^{-1}\Omega\right\rv^2\right]\\
    & \le 2\E_x\left[ x\left(I-X^{\top}(XX^{\top})^{-1}X\right)(\Theta^\star -\Theta_{X^{\perp}})(\Theta^\star -\Theta_{X^{\perp}})^{\top}\left(I-X^{\top}(XX^{\top})^{-1}X\right)x^{\top}\right] \\
    & \qquad +2\E_{x}\left[
    x X^\top(XX^\top)^{-1}\Omega 
    \Omega^{\top} (XX^\top)^{-1} X x^{\top}
     \right]\\
     & \overset{(i)}{=} 2 \Tr\left((\Theta^\star \!-\!\Theta_{X^{\perp}})^{\top}\left(I-X^{\top}(XX^{\top})^{-1}X\right)\E_x\left[x^{\top}x\right]\left(I-X^{\top}(XX^{\top})^{-1}X\right)(\Theta^\star \!-\!\Theta_{X^{\perp}})\right) \\
    & \qquad +2\Tr \left(
    \Omega^{\top} (XX^\top)^{-1} X \E_{x} [x^{\top} x] 
     X^\top(XX^\top)^{-1}\Omega
     \right)\\
     & \overset{(ii)}{=} 2 \Tr\left((\Theta^\star -\Theta_{X^{\perp}})^{\top}B(\Theta^\star -\Theta_{X^{\perp}})\right)  +2\Tr \left(
    \Omega^{\top} C \Omega
     \right).
\end{align*}
where $(i)$ follows by using the cyclic property of the trace, and $(ii)$ follows by the definition of the matrices $B$ and $C$.

Let $\omega_1,\ldots,\omega_q$ denote the columns of the error matrix $\Omega$. Then
\begin{align*}
    \Tr(\Omega^{\top}C \Omega) = \sum_{i=1}^q \omega_i^{\top}C \omega_i.
\end{align*}
Invoking \citep[][Lemma~S.2]{bartlett2020benign} bounds each term in the sum by $c q \log(q/\delta)\Tr(C)$ with probability at least $1-\delta/q$. A union bound completes the proof.
\end{proof}

To work on the first term in the upper bound of the excess risk, we would like an upper bound on $\lv \Theta^{(t)}_{X^{\perp}} \rv$.  Toward this end, we first establish
a high-probability bound on $\lv X \rv_{op}$.
\begin{lemma}
\label{l:Xop_bound}
There is a constant $c > 0$ such that for any $\delta \in (0,1)$, if
$
n \!\geq\! c \max \{r_0, \log\left(\frac{1}{\delta}\right) \},
$
with probability at least $1 - \delta$, $\lv X \rv_{op} \leq c\sqrt{\lambda_1 n}$.
\end{lemma}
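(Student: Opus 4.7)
The plan is to reduce the operator norm of the $n \times d$ matrix $X$ to the operator norm of the $d \times d$ Gram matrix $X^\top X$ and then appeal to a standard concentration inequality for sub-Gaussian sample covariances. Concretely, I would begin from the identity $\lv X \rv_{op}^2 = \lv X^\top X \rv_{op}$ and apply the triangle inequality to obtain
\[
\lv X^\top X \rv_{op} \;\le\; n\lambda_1 + n\, \lv X^\top X/n - \Sigma \rv_{op},
\]
so that it suffices to show the deviation term on the right is bounded by $c\lambda_1$ under the hypothesis on $n$.

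The substance of the argument then lies in controlling $\lv X^\top X/n - \Sigma\rv_{op}$. Since Assumption~\ref{assumption:first} provides i.i.d.\ rows and the third assumption expresses each row as $x_i = \Sigma^{1/2} u_i$ for an isotropic $c_x$-sub-Gaussian vector $u_i$, this is precisely the sub-Gaussian covariance-estimation setting. A Koltchinskii--Lounici-type concentration bound (appearing in essentially the form needed in \citet{bartlett2020benign}) yields, with probability at least $1-\delta$,
\[
\lv X^\top X/n - \Sigma \rv_{op} \;\le\; c \lambda_1 \max\!\left(\sqrt{\frac{r_0 + \log(1/\delta)}{n}},\; \frac{r_0 + \log(1/\delta)}{n}\right).
\]
Under the hypothesis $n \ge c\max\{r_0,\log(1/\delta)\}$ with the constant $c$ chosen sufficiently large, both quantities inside the max are bounded by an absolute constant, so $\lv X^\top X/n - \Sigma\rv_{op} \le c'\lambda_1$.

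Combining the two displays gives $\lv X^\top X\rv_{op} \le c'' n \lambda_1$, and taking square roots yields $\lv X\rv_{op} \le c\sqrt{\lambda_1 n}$ as claimed. The only real content is the concentration bound in the middle step; the main obstacle, insofar as there is one, is to invoke it at the right scale and confidence level, since we need the deviation to remain comparable to $\lambda_1$ rather than blow up, and this is exactly what the assumption $n \ge c\max\{r_0,\log(1/\delta)\}$ is tailored to ensure. An alternative route via an $\epsilon$-net on $S^{n-1}$ together with Bernstein's inequality for $\lv X^\top u\rv^2$ at fixed $u$ is possible but strictly more work for the same conclusion.
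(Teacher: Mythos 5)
Your proposal is correct and follows essentially the same route as the paper: both reduce $\lv X\rv_{op}^2$ to $\lv X^\top X\rv_{op}$, split off $n\lv\Sigma\rv_{op} = n\lambda_1$, and control the deviation $\lv X^\top X/n - \Sigma\rv_{op}$ via the Koltchinskii--Lounici sub-Gaussian covariance concentration bound, which the hypothesis $n \ge c\max\{r_0,\log(1/\delta)\}$ reduces to an $O(\lambda_1)$ term. The paper cites \citep[Lemma~9]{koltchinskii2017concentration} for exactly the bound you invoke, so no further comment is needed.
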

\begin{proof}
By \citep[][Lemma~9]{koltchinskii2017concentration}, with probability at least $1 - \delta$ 
\begin{align*}
\lv X \rv_{op}
  & = \sqrt{\lv X^{\top} X \rv_{op}} \\
  & \leq \sqrt{n \left( \lv \Sigma \rv_{op} 
               + \left\lv \frac{1}{n} X^{\top} X - \Sigma \right\rv_{op} \right) } \\
  & \leq
  \sqrt{n \left( \lv \Sigma \rv_{op} 
               + \lv \Sigma \rv_{op} \max\left\{
                        \sqrt{\frac{r_0}{n}},
                        \sqrt{\frac{\log(1/\delta)}{n}},
                        \frac{r_0}{n},
                        \frac{\log(1/\delta)}{n}
                         \right\} \right) }.
\end{align*}
 Recalling that $n \geq c \max\{ r_0, \log(1/\delta) \}$, this implies
that, with probability at least $1 - \delta$, $\lv X \rv_{op} \leq c\sqrt{n \lv \Sigma \rv_{op}}$.
\end{proof}

Next, we will calculate a formula for the time derivative of $\Theta^{(t)}$.
Its definition will make use of products of matrices before and after
a given layer.
\begin{definition}
\label{d:AB}
For $j \in [L]$ define 
$A_j = \prod_{k=L}^{j+1} W_k^{(t)}$ and $B_j = \prod_{k=j-1}^{1} W_k^{(t)}$.
\end{definition}
Now we are ready for our lemma giving the time derivative of $\Theta^{(t)}$.
\begin{lemma}\label{l:theta_time_derivative}
At any time $t\ge 0$,
\begin{align*}
    \frac{\mathrm{d} \Theta^{(t)}}{\mathrm{d}t } & =  -\sum_{j=1}^L 
      B_j^{\top}
      B_j
       X^{\top}
      (X\Theta^{(t)} - Y)
      A_j
      A_j^{\top}.
\end{align*}

\end{lemma}
\begin{proof} Let us suppress the superscript $(t)$ to ease notation.
The gradient flow dynamics is defined as 
\begin{align*}
    \frac{\mathrm{d}W_{j}}{\mathrm{d} t} & = - \nabla_{W_{j}} \cL(\Theta),
\end{align*}
where
\begin{align}
\nabla_{W_{j}} \cL(\Theta)
 = (W_L \cdots W_{j+1})^{\top}(X\Theta - Y)^{\top}\left(W_{j-1}\cdots W_1 X^{\top}\right)^{\top}. \label{e:gradient_formula}
\end{align}

So by the chain rule of differentiation,
\begin{align*}
    \frac{\mathrm{d} \Theta}{\mathrm{d} t} & = \frac{\mathrm{d} \left(W_1^{\top}\cdots W_{L}^{\top}\right)}{\mathrm{d} t} \\
    & = \sum_{j=1}^L \left(W_{1}^{\top}\cdots W_{j-1}^{\top} \right) \frac{\mathrm{d}W_j^{\top}}{\mathrm{d} t} \left(W_{j+1}^{\top}\cdots W_{L}^{\top}\right) \\
    & = \sum_{j=1}^L \left(W_{1}^{\top}\cdots W_{j-1}^{\top} \right) \left(-\nabla_{W_j}\cL(\Theta)\right)^{\top} \left(W_{j+1}^{\top}\cdots W_{L}^{\top}\right) \\
    & = -\sum_{j=1}^L 
      \left(W_{1}^{\top}\cdots W_{j-1}^{\top} \right) 
       \left(W_{j-1}\cdots W_1 X^{\top}\right)
      (X\Theta - Y)
      (W_L \cdots W_{j+1})
      \left(W_{j+1}^{\top}\cdots W_{L}^{\top}\right) \\
    & = -\sum_{j=1}^L 
      B_j^{\top}
      B_j
       X^{\top}
      (X\Theta - Y)
      A_j
      A_j^{\top}.
\end{align*}

\end{proof}

Toward the goal of proving a high-probability bound on
$\lv \Theta^{(t)}_{X^{\perp}} \rv$, we next bound its rate
of growth.
\begin{lemma}
\label{lem:outside_span_norm_growth_bound} 
There is a constant $c > 0$ such that, if
$n \geq c \max \{r_0, \log(1/\delta) \}$,
with probability at least $1 - \delta$, 
if training is perpetually $\Lambda$ bounded, then, for all $t\ge 0$, 
\begin{align*}
    \frac{1}{2}\frac{\mathrm{d}\lv \Theta^{(t)}_{X^{\perp}}\rv^2}{\mathrm{d}t} 
 & \le 
    \sum_{j=2}^L \Tr
\left(
\Theta^{(t)\top} 
  P_{X^{\perp}}
      B_j^{\top}
      B_j
       X^{\top}
      (X\Theta^{(t)} - Y) 
     A_j
      A_j^{\top}
  \right). 
\end{align*}
\end{lemma}
\begin{proof}Given matrices $A$ and $B$, we let $A\cdot B = \Tr(A^{\top}B)$ denote the matrix inner product.

By the chain rule,
\begin{align*}
   \frac{1}{2}\frac{\mathrm{d}\lv \Theta^{(t)}_{X^{\perp}}\rv^2}{\mathrm{d}t} 
  & =\Theta_{X^{\perp}}^{(t)}\cdot \frac{\mathrm{d}\Theta_{X^{\perp}}^{(t)}}{\mathrm{d}t} \\
  & = \Theta_{X^{\perp}}^{(t)}\cdot P_{X^{\perp}}\frac{\mathrm{d}\Theta^{(t)}}{\mathrm{d}t} \\
  & \overset{(i)}{=} \Theta_{X^{\perp}}^{(t)}\cdot P_{X^{\perp}}
  \Bigg(
  -\sum_{j=1}^L 
      B_j^{\top}
      B_j
       X^{\top}
      (X\Theta^{(t)} - Y) 
      A_j
      A_j^{\top}
  \Bigg)
  \\
  & = -\sum_{j=1}^L  \Theta_{X^{\perp}}^{(t)}\cdot P_{X^{\perp}}
  \Biggl(
      B_j^{\top}
      B_j
       X^{\top}
      (X\Theta^{(t)} - Y) 
      A_j
      A_j^{\top}
 \Biggr),\label{e:change_in_theta_perp}\numberthis
\end{align*}
where $(i)$ follows by the formula derived in Lemma~\ref{l:theta_time_derivative}.

Let us consider a particular term in the sum above,
\begin{align*}
& 
\Theta_{X^{\perp}}^{(t)}\cdot P_{X^{\perp}}
  \left(
      B_j^{\top}
      B_j
       X^{\top}
      (X\Theta^{(t)} - Y)
      A_j
      A_j^{\top}
  \right) \\
& 
=\Tr
\Bigg(
\Theta_{X^{\perp}}^{(t)\top} P_{X^{\perp}}
      B_j^{\top}
      B_j
       X^{\top}
      (X\Theta^{(t)} - Y) 
      A_j
      A_j^{\top}
  \Bigg) \\
& 
=\Tr
\Bigg(
\Theta^{(t)\top} 
P_{X^{\perp}}^{\top}
  P_{X^{\perp}}
      B_j^{\top}
      B_j
       X^{\top}
      (X\Theta^{(t)} - Y) 
      A_j
      A_j^{\top}
  \Bigg)  \\
& 
=\Tr
\Biggl(
\Theta^{(t)\top} 
  P_{X^{\perp}}
      B_j^{\top}
      B_j
       X^{\top}
      (X\Theta^{(t)} - Y)
      A_j
      A_j^{\top}
  \Biggr)\numberthis \label{e:change_in_theta_perp_more}.
\end{align*}
In the case where $j=1$, the RHS is equal to
\begin{align*}
    \Tr
\left(
\Theta^{(t)\top} 
  P_{X^{\perp}}
       X^{\top}
      (X\Theta^{(t)} - Y)
      \left(\prod_{k=L}^{2} W_k^{(t)} \right)
      \left(\prod_{k=L}^{2} W_k^{(t)} \right)^{\top}
  \right) = 0,
\end{align*}
since $P_{X^{\perp}} X^{\top} = 0$.

In the case $j > 1$, we have
\begin{align*}
& 
\Theta_{X^{\perp}}^{(t)}\cdot P_{X^{\perp}}
  \left(
      B_j^{\top}
      B_j
       X^{\top}
      (X\Theta^{(t)} - Y)
      A_j
      A_j^{\top}
  \right)\\
& 
=\Tr
\left(
\Theta^{(t)\top} 
  P_{X^{\perp}}
      B_j^{\top}
      B_j
       X^{\top}
      (X\Theta^{(t)} - Y)
      A_j
      A_j^{\top}
  \right) 
\end{align*}
completing the proof.
\end{proof}

\begin{lemma}
\label{lem:outside_span_norm_bound} 
There is a constant $c > 0$ such that, if
$n \geq c \max \{r_0, \log(1/\delta) \}$,
with probability at least $1 - \delta$, 
if training is perpetually $\Lambda$ bounded, then, for all $t\ge 0$, 
\begin{align*}
    \lv \Theta_{X^{\perp}}^{(t)}\rv \le
    \lv \Theta^{(0)}_{X^{\perp}}\rv+c   (L-1) 
     \Lambda^2\sqrt{\lambda_1 n}\int_{s=0}^t 
 \sqrt{\cL(\Theta^{(s)})}\;\mathrm{d}s.
\end{align*}
\end{lemma}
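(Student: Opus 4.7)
The plan is to bound $\lv \Theta_{X^\perp}^{(t)}\rv$ by integrating an upper bound on $\lv d\Theta_{X^\perp}^{(t)}/dt\rv$ obtained by projecting the derivative formula from Lemma~\ref{l:theta_time_derivative} onto the subspace orthogonal to the rows of $X$. First I would condition on the high-probability event $\lv X\rv_{op} \leq c\sqrt{\lambda_1 n}$ provided by Lemma~\ref{l:Xop_bound}, which holds under the sample-size hypothesis $n \geq c\max\{r_0,\log(1/\delta)\}$, and work on this event throughout.

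Next I would apply $(I-P_X)$ from the left to the expression for $d\Theta^{(t)}/dt$. The $j=1$ summand has the empty first-layer products $\prod_{k=j-1}^{1} W_k^{(t)}$ equal to the identity, so it reduces to $X^\top(X\Theta^{(t)}-Y)(\prod_{k=L}^{2} W_k^{(t)})(\prod_{k=L}^{2} W_k^{(t)})^\top$; since the columns of $X^\top$ lie in the row span of $X$, the projection kills this term and only $j=2,\ldots,L$ survive. For each surviving term I would use submultiplicativity of the Frobenius and operator norms, the identity $\lv X\Theta^{(t)}-Y\rv=\sqrt{\cL(\Theta^{(t)})}$, and the perpetual-$\Lambda$-boundedness applied to the subset $S=[L]\setminus\{j\}$ to obtain
\begin{align*}
\lv W_{j-1}\cdots W_1\rv_{op}^2 \cdot \lv W_L\cdots W_{j+1}\rv_{op}^2 = \bigl(\lv W_{j-1}\cdots W_1\rv_{op}\lv W_L\cdots W_{j+1}\rv_{op}\bigr)^2 \leq \Lambda^2,
\end{align*}
yielding the sharp $\Lambda^2$ factor in place of the $\Lambda^{2(L-1)}$ one would get from layer-by-layer submultiplicativity. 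Summing the $L-1$ surviving terms would give
\begin{align*}
\left\lv \frac{d\Theta_{X^\perp}^{(t)}}{dt}\right\rv \leq c(L-1)\Lambda^2\sqrt{\lambda_1 n}\sqrt{\cL(\Theta^{(t)})},
\end{align*}
which upon integrating in $t$ and applying the triangle inequality produces the ``otherwise'' bound.

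For the case $\lv \Theta_{X^\perp}^{(0)}\rv = 0$, I would exploit the fact that $\nabla_{W_1}\cL = (W_L\cdots W_2)^\top(X\Theta-Y)^\top X$ has each of its rows in the row span of $X$, so the gradient flow leaves invariant the component of the rows of $W_1^{(t)}$ orthogonal to that span. In the natural instance in which this case is invoked (e.g., $\alpha=0$ forces $W_1^{(0)} = 0$), the rows of $W_1^{(t)}$ stay within the row span of $X$ for all $t$, and since the columns of $\Theta^{(t)} = W_1^{(t)\top}W_2^{(t)\top}\cdots W_L^{(t)\top}$ lie in the column space of $W_1^{(t)\top}$ (equivalently the row space of $W_1^{(t)}$), we conclude $\Theta_{X^\perp}^{(t)}=0$ for all $t$.

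The main obstacle is precisely this first case: the hypothesis $\lv \Theta_{X^\perp}^{(0)}\rv=0$ is formally weaker than ``rows of $W_1^{(0)}$ lie in the row span of $X$'', yet it is the latter, stronger invariant that gradient flow actually preserves (via the $X$-multiplied structure of $\nabla_{W_1}\cL$). Hence one must verify that the application of interest falls within that stronger condition rather than argue directly from the integral bound. The remaining computations are routine bookkeeping, with the only real quantitative subtlety being the $\Lambda^2$ factorization above, crucially enabled by the perpetual-boundedness definition being stated as a product bound over arbitrary subsets of layers.
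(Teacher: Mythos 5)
Your treatment of the ``otherwise'' case is correct and essentially the paper's argument: the same projection of the derivative formula from Lemma~\ref{l:theta_time_derivative}, the same vanishing of the $j=1$ term via $(I-P_X)X^{\top}=0$, the same use of the subset form of perpetual $\Lambda$-boundedness with $S=[L]\setminus\{j\}$ to get a factor $\Lambda^2$, and the same appeal to Lemma~\ref{l:Xop_bound} for $\lv X\rv_{op}\le c\sqrt{\lambda_1 n}$. The only real difference is cosmetic: you bound $\lv \mathrm{d}\Theta^{(t)}_{X^{\perp}}/\mathrm{d}t\rv$ directly and integrate, whereas the paper differentiates $\lv\Theta^{(t)}_{X^{\perp}}\rv^2$, bounds the resulting inner product by Cauchy--Schwarz, and then divides through by $\lv\Theta^{(t)}_{X^{\perp}}\rv$; your version is marginally cleaner since it requires no division and hence no case split on whether that norm vanishes.

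Where you genuinely diverge is Case 1, and your instinct there is sound. The paper disposes of this case in one line, asserting that the differential inequality with zero initial condition forces $\lv\Theta^{(t)}_{X^{\perp}}\rv$ to stay at zero. But an inequality of the form $\tfrac{\mathrm{d}}{\mathrm{d}t}f(t)^2\le C(t)\,f(t)$ with $f(0)=0$ does not imply $f\equiv 0$ (the right-hand side is not Lipschitz in $f^2$; compare $\dot f=\sqrt{f}$, $f(0)=0$, which admits $f(t)=t^2/4$). Indeed the literal Case-1 claim can fail: with $L=2$, $W_2^{(0)}=0$ and $W_{1,X^{\perp}}^{(0)}\neq 0$, one has $\Theta^{(0)}_{X^{\perp}}=0$, yet generically $W_2$ immediately moves off zero while $W_{1,X^{\perp}}$ stays fixed, so $\Theta^{(t)}_{X^{\perp}}\neq 0$ for small $t>0$. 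Your alternative route---the invariance $W_{1,X^{\perp}}^{(t)}=W_{1,X^{\perp}}^{(0)}$ together with the factorization $\Theta^{(t)}_{X^{\perp}}=(W_L^{(t)}\cdots W_2^{(t)}W_{1,X^{\perp}}^{(t)})^{\top}$---is precisely what the paper later proves as Lemmas~\ref{l:Theta_perp.Wone_perp} and~\ref{l:Wone_perp_fixed}, and it is the correct way to handle the only situation in which Case 1 is actually invoked ($\alpha=0$, hence $W_1^{(0)}=0$). So the ``main obstacle'' you flag is not a gap in your argument but a gap in the lemma as stated: the hypothesis that actually supports the Case-1 conclusion is $W_{1,X^{\perp}}^{(0)}=0$, not $\lv\Theta^{(0)}_{X^{\perp}}\rv=0$.
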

\begin{proof} 
Let us consider one of the terms in the RHS of Lemma~\ref{lem:outside_span_norm_growth_bound}.
We have
\begin{align*}
& \Tr
\left(
\Theta^{(t)\top} 
  P_{X^{\perp}}
      B_j^{\top}
      B_j
       X^{\top}
      (X\Theta^{(t)} - Y)
      A_j
      A_j^{\top}
  \right) \\
  & 
=\Tr
\left(
\Theta^{(t)\top}_{X^{\perp}}
      B_j^{\top}
      B_j
       X^{\top}
      (X\Theta^{(t)} - Y)
      A_j
      A_j^{\top}
  \right)\\
    & 
\leq
\lv\Theta^{(t)}_{X^{\perp}}\rv
      \left\lv B_j^{\top}
      B_j
       X^{\top}
      (X\Theta^{(t)} - Y)
      A_j
      A_j^{\top}\right\rv \\
& \overset{(i)}{\le} \lv\Theta^{(t)}_{X^{\perp}}\rv \left\lv B_j^{\top}
      B_j\right\rv_{op} 
\left\lv A_j
      A_j^{\top}\right\rv_{op}
\left\lv X \right\rv_{op} \left\lv X\Theta^{(t)}-Y\right\rv \\
& \le \lv\Theta^{(t)}_{X^{\perp}}\rv \left(\prod_{k=1}^{j-1} \lv W_k^{(t)} \rv_{op}^2\right) \left(\prod_{k=j+1}^{L} 
\lv  W_k^{(t)}\rv_{op}^2\right)
\left\lv X \right\rv_{op} \left\lv X\Theta^{(t)}-Y\right\rv \\
& = \lv\Theta^{(t)}_{X^{\perp}}\rv \left(\prod_{k\neq j} \lv W_k^{(t)} \rv_{op}\right)^2 
\left\lv X \right\rv_{op} \left\lv X\Theta^{(t)}-Y\right\rv \\
& \overset{(ii)}{\le} \lv\Theta^{(t)}_{X^{\perp}}\rv \left\lv X \right\rv_{op} 
\Lambda^2
 \sqrt{\cL(\Theta^{(t)})},
\end{align*}
where $(i)$ follows since for any matrices $\lv AB\rv \le \lv A\rv_{op}\lv B\rv$, and $(ii)$ follows since 
training is perpetually $\Lambda$ bounded.

Summing over layers $j = 2,\ldots,L$, we get that,
\begin{align*}
    \frac{1}{2}\frac{\mathrm{d}\lv \Theta^{(t)}_{X^{\perp}}\rv^2}{\mathrm{d}t}
     \le (L-1)\lv\Theta^{(t)}_{X^{\perp}}\rv \left\lv X \right\rv_{op} \Lambda^2
 \sqrt{\cL(\Theta^{(t)})}.
\end{align*}



Now note that,
\begin{align*}
        \frac{1}{2}\frac{\mathrm{d}\lv \Theta^{(t)}_{X^{\perp}}\rv^2}{\mathrm{d}t} & = \frac{\lv \Theta^{(t)}_{X^{\perp}}\rv\mathrm{d}\lv \Theta^{(t)}_{X^{\perp}}\rv}{\mathrm{d}t} 
     \le (L-1)\lv\Theta^{(t)}_{X^{\perp}}\rv \left\lv X \right\rv_{op} \Lambda^2
 \sqrt{\cL(\Theta^{(t)})},
\end{align*}
which in turn implies that, when $\lv \Theta^{(t)}_{X^{\perp}}\rv > 0$, we
have
\begin{align*}
         \frac{\mathrm{d}\lv \Theta^{(t)}_{X^{\perp}}\rv}{\mathrm{d}t} 
     \le (L-1) \left\lv X \right\rv_{op} \Lambda^2
 \sqrt{\cL(\Theta^{(t)})}.
\end{align*}
If, for all $s \in [0,t]$, we have $\lv \Theta^{(t)}_{X^{\perp}}\rv \neq 0$,
then by integrating this differential inequality we conclude that
\begin{align}
\label{e:bound}
    \lv \Theta_{X^{\perp}}^{(t)}\rv-\lv \Theta_{X^{\perp}}^{(0)}\rv & \le (L-1) \left\lv X \right\rv_{op} \Lambda^2\int_{s=0}^t 
 \sqrt{\cL(\Theta^{(s)})}\;\mathrm{d}s.
\end{align}
Otherwise, if $T = \sup \{ s : \lv \Theta^{(s)}_{X^{\perp}}\rv = 0 \}$,
\begin{align*}
    \lv \Theta_{X^{\perp}}^{(t)}\rv & \le (L-1) \left\lv X \right\rv_{op} \Lambda^2\int_{s=T}^t 
 \sqrt{\cL(\Theta^{(s)})}\;\mathrm{d}s,
\end{align*}
which implies \eqref{e:bound}.

Applying Lemma~\ref{l:Xop_bound} which is a high probability upper bound on $\lv X\rv_{op}$ completes the proof.
\end{proof}

Armed with these lemmas, we are now ready to prove the
first of our main results.

\begin{proof}[Proof of Theorem~\ref{t:generalization.deep}]
Combining Lemma~\ref{l:risk_decomposition} with 
Lemmas 6 and 11 by \citet{bartlett2020benign} to bound $\Tr(C)$ we get,
with probability at least $1 - c \delta$,
\begin{align*}
    \mathsf{Risk}(\Theta) & \le \Tr\left((\Theta^\star -\Theta_{X^{\perp}})^{\top}B(\Theta^\star -\Theta_{X^{\perp}})\right)  + \underbrace{c q \log(q/\delta)\left(\frac{k}{n} + \frac{n}{R_{k}} \right)}_{=: \mathsf{Variance}(\Theta_{\ell_2})}.
\end{align*}

We begin by bounding the first term in the RHS above. Let
$\theta^\star_1,\ldots, \theta^\star_q$ be the columns of
$\Theta^\star$ and $\theta_{X^{\perp},1},\ldots, \theta_{X^{\perp},q}$ be the columns of
$\Theta_{X^{\perp}}$. By invoking~\citep[][Eq.~54]{chatterji2022interplay} for each
of the $q$ outputs, and applying a union bound, we find that with probability at least $1 - c q(\delta/q) = 1 - c \delta$,
\begin{align*}
\Tr\left((\Theta^\star -\Theta_{X^{\perp}})^{\top}B(\Theta^\star -\Theta_{X^{\perp}})\right) 
 & = \sum_{i=1}^q (\theta^{\star}_i - \theta_{X^{\perp},i})^{\top} B (\theta^{\star}_i - \theta_{X^{\perp},i}) \\
 & \le \frac{c s_k}{n} \sum_{i=1}^q \lv \theta^{\star}_i - \theta_{X^{\perp},i}\rv^2 \\
 & = \frac{cs_k}{n}\lv \Theta^\star - \Theta_{X^{\perp}}\rv^2  \\
 & \le \frac{2cs_k}{n}\left(\lv \Theta^\star \rv^2+\lv \Theta_{X^{\perp}}\rv^2  \right).
\end{align*}
Define $\mathsf{Bias(\Theta_{\ell_2})}  := \frac{2cs_k}{n}\lv \Theta^\star\rv^2$ and let $\mathsf{\Xi}  :=  \frac{2cs_k}{n}\lv \Theta_{X^{\perp}}\rv^2$. The bound on $\mathsf{\Xi}$ follows by invoking Lemma~\ref{lem:outside_span_norm_bound}.
\end{proof}

\section{Proof of Theorem~\ref{t:risk_ab_initialization}}\label{s:proof_random_init_theorem}

The assumptions of Theorem~\ref{t:risk_ab_initialization}
are in force throughout this section.  Before starting
its proof, we establish some lemmas.

\begin{restatable}{definition}{initdef}\label{def:good_ab_initialization}
For a large enough absolute constant $c$,
we say that the network enjoys a $\delta$-\emph{good initialization} if 
\begin{align*}
    \alpha/c < \sigma_{\min}(W_{1}^{(0)}) &\le  \sigma_{\max}(W_{1}^{(0)}) < c \alpha ,
      \\
  \beta/c  < \sigma_{\min}(W_{L}^{(0)}) &\le \sigma_{\max}(W_{L}^{(0)}) < c \beta,
\end{align*}
and
\begin{align*}
    \cL(\Theta^{(0)}) < c \left(\lv Y\rv^2 + \frac{\alpha^2 \beta^2 q \lv X\rv^2\log(n/\delta)}{m} \right).
\end{align*}
\end{restatable}
The following proposition is proved in Appendix~\ref{a:optimization_ab_initialization}. It guarantees that for wide networks, optimization is successful starting from random initialization. 
\begin{restatable}{proposition}{optprop}
\label{p:optimization_ab_initialization} There 
is a constant $c$
such that, given any $\delta \in (0,1)$, if
    the initialization scales $\alpha$ and $\beta$, along with
        the network width $m$, satisfy 
        \begin{align*}
          &  m \ge c\max  \left\{  d+q+\log(1/\delta), \frac{L^2 \alpha^2 \lv X\rv_{op}^2 \lv X\rv^2 q\log(n/\delta)}
                     {\beta^2 \sigma^4_{\min}(X)} \right\}, \\
          & \beta \geq c \max\left\{1, \sqrt{\frac{L \lv X\rv_{op}\lv Y\rv}
          { \sigma^2_{\min}(X)}}  \right\}, \\
          & \alpha \leq 1,
        \end{align*}
then with probability at least
$1 - \delta$:
\begin{enumerate}[leftmargin=0.6in]
\item the initialization is $\delta$-good;
\item training is perpetually $c (\alpha + 1/L) \beta $ bounded;
    \item for all $t > 0$, we have that
\begin{align*}
   \cL(\Theta^{(t)})
   & < \cL(\Theta^{(0)})\exp\left(- \frac{\beta^2 \sigma_{\min}^2(X) }{4e}\cdot t \right) \\
    &  \leq
    c \left(\lv Y\rv^2 + \frac{\alpha^2 \beta^2 q\lv X\rv^2 \log(n/\delta)}{m} \right)
    \exp\left(- \frac{\beta^2 \sigma_{\min}^2(X) }{4e}\cdot t \right).
\end{align*}
\end{enumerate}
\end{restatable}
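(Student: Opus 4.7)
The plan is to split the argument into an initialization-time part, handled by standard random matrix concentration, and a dynamical part, handled by a bootstrap (continuity) argument on the gradient-flow ODE. The bootstrap simultaneously controls exponential decay of the loss and the operator norms of the weight matrices along the trajectory; once both are in hand, all three conclusions fall out, with the bound on $\cL(\Theta^{(0)})$ serving as the seed that is propagated forward.

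For the $\delta$-good initialization claim, the singular-value bounds on $W_1^{(0)}$ and $W_L^{(0)}$ follow from standard sub-Gaussian concentration for singular values of Gaussian matrices (e.g.\ Davidson--Szarek), with $m \geq c(d+q+\log(1/\delta))$ guaranteeing the required singular-spectrum separation with probability at least $1 - \delta/3$. For the initial loss I would start from
\[\cL(\Theta^{(0)}) \leq 2\lv Y\rv^2 + 2\lv X (W_1^{(0)})^\top (W_L^{(0)})^\top\rv^2,\]
condition first on $W_1^{(0)}$ so that $A := X (W_1^{(0)})^\top$ is a fixed $n \times m$ matrix, and apply a Hanson--Wright-type bound to $\lv A W_L^{(0)\top}\rv^2$ in terms of $\lv A\rv^2$ and $\lv A\rv_{op}^2$; a second conditioning argument handles the randomness in $W_1^{(0)}$ and delivers the stated bound.

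For the dynamical claims, define the maximal safe interval
\[T^* := \sup\{t \geq 0 : \sigma_{\min}(W_L^{(s)} \cdots W_2^{(s)}) \geq \beta/c' \text{ and training is } c(\alpha + 1/L)\beta\text{-bounded for all } s \in [0,t]\},\]
so that $T^* > 0$ by continuity and the initialization step. On $[0, T^*)$ differentiate $\cL$ along the flow and use Lemma~\ref{l:theta_time_derivative} to write $-d\cL/dt$ as a sum of PSD terms indexed by $j \in [L]$; retaining only the $j = 1$ summand and substituting the bootstrap lower bound on $(W_L \cdots W_2)(W_L \cdots W_2)^\top$ produces the Polyak--Lojasiewicz-type inequality
\[-\frac{d\cL(\Theta^{(t)})}{dt} \geq \frac{\beta^2 \sigma_{\min}^2(X)}{4e}\cdot\cL(\Theta^{(t)}),\]
which integrates to the claimed exponential decay. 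To close the bootstrap, the per-layer gradient formula in equation~\eqref{e:gradient_formula}, combined with the $\Lambda$-boundedness hypothesis, yields
\[\lv W_1^{(t)} - W_1^{(0)}\rv_{op} \lesssim \beta\lv X\rv_{op}\int_0^t\sqrt{\cL(s)}\,ds, \qquad \lv W_L^{(t)} - W_L^{(0)}\rv_{op} \lesssim \alpha\lv X\rv_{op}\int_0^t\sqrt{\cL(s)}\,ds,\]
together with bounds of order $\alpha\beta\lv X\rv_{op}\int\sqrt{\cL}$ for the middle layers. Feeding in $\int_0^\infty\sqrt{\cL(s)}\,ds \lesssim \sqrt{\cL(\Theta^{(0)})}/(\beta^2\sigma_{\min}^2(X))$, the lower bound $\beta \geq c\sqrt{L\lv X\rv_{op}\lv Y\rv/\sigma_{\min}^2(X)}$, the initialization bound on $\cL(\Theta^{(0)})$ (where the width condition on $m$ enters), and $\alpha \leq 1$, makes all three deviations small enough that the conditions defining $T^*$ are preserved with strict inequality, forcing $T^* = \infty$.

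The main obstacle is the asymmetric propagation of scale through depth: the three classes of layers have gradients of very different magnitudes ($\beta\lv X\rv_{op}\sqrt{\cL}$, $\alpha\beta\lv X\rv_{op}\sqrt{\cL}$, and $\alpha\lv X\rv_{op}\sqrt{\cL}$ for the first, middle, and last layers respectively), and the bootstrap must simultaneously preserve (i) $\sigma_{\min}$ of the back-product $W_L^{(t)}\cdots W_2^{(t)}$ so that the Polyak--Lojasiewicz inequality remains valid throughout, (ii) smallness of $W_j^{(t)} - I$ for middle layers so this back-product stays close to $W_L^{(t)}$, and (iii) the subset product bound $\prod_{j \in S}\lv W_j^{(t)}\rv_{op} \leq c(\alpha + 1/L)\beta$ used downstream in Theorem~\ref{t:risk_ab_initialization}. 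The precise quantitative conditions $\alpha \leq 1$, $\beta \geq c\sqrt{L\lv X\rv_{op}\lv Y\rv/\sigma_{\min}^2(X)}$, and the lower bound on $m$ are exactly what is needed to make this simultaneous control close.
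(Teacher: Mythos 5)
Your overall architecture matches the paper's: concentration at initialization for the singular values and for $\cL(\Theta^{(0)})$, a Polyak--\L{}ojasiewicz inequality extracted from the $j=1$ term of $-\mathrm{d}\cL/\mathrm{d}t$ with rate $\beta^2\sigma_{\min}^2(X)/(4e)$, Gr\"onwall integration, control of weight drift by $\int_0^t\sqrt{\cL(\Theta^{(s)})}\,\mathrm{d}s$, and a continuity/bootstrap argument to extend to all time. That is exactly how the paper proceeds (its Lemmas on loss decrease, on loss bounds given norm bounds, and on norm-growth control, assembled via an Intermediate Value Theorem argument).

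There is, however, a concrete gap in how you close the bootstrap. Your layer-dependent gradient estimates give $\lv \nabla_{W_1}\cL\rv \lesssim \beta\lv X\rv_{op}\sqrt{\cL}$ and $\lv \nabla_{W_j}\cL\rv \lesssim \alpha\beta\lv X\rv_{op}\sqrt{\cL}$ for middle layers. Integrating against the exponential decay yields total drifts of order $\lv X\rv_{op}\sqrt{\cL(\Theta^{(0)})}/(\beta\,\sigma_{\min}^2(X))$ for $W_1$ and $\alpha\lv X\rv_{op}\sqrt{\cL(\Theta^{(0)})}/(\beta\,\sigma_{\min}^2(X))$ for middle layers --- note the single power of $\beta$ in the denominator. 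Under the stated hypothesis $\beta^2 \gtrsim L\lv X\rv_{op}\lv Y\rv/\sigma_{\min}^2(X)$ and with $\sqrt{\cL(\Theta^{(0)})}\approx\lv Y\rv$, these drifts are only $O(\beta/L)$ and $O(\alpha\beta/L)$, which for large $\beta$ is not $O(1/L)$. That breaks both halves of your invariant: the middle layers need not stay within $O(1/L)$ of the identity (so $\sigma_{\min}(W_L\cdots W_2)\geq\beta/c'$ is not preserved and the PL rate degrades), and $\lv W_1^{(t)}\rv_{op}$ can grow to $\Theta(\beta/L)$, so the subset product $\lv W_1^{(t)}\rv_{op}\lv W_L^{(t)}\rv_{op}$ can reach $\Theta(\beta^2/L)$, violating the claimed $c(\alpha+1/L)\beta$ perpetual bound. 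The paper avoids this by using a \emph{layer-uniform} gradient bound $\lv\nabla_{W_j}\cL\rv^2\leq 2e\lv X\rv_{op}^2\,\cL$ for every $j$ (imported from Zou, Long, and Gu, Lemma~A.1), which has no $\beta$ or $\alpha\beta$ prefactor; the integrated drift is then $O\bigl(\lv X\rv_{op}\sqrt{\cL(\Theta^{(0)})}/(\beta^2\sigma_{\min}^2(X))\bigr)$ for all layers, and the stated conditions on $\beta$ and $m$ make each layer move by less than $1/(2L)$. To make your version work under the hypotheses as stated, you either need that uniform gradient bound, or you would have to strengthen the $\beta$ condition to roughly $\beta\gtrsim L\lv X\rv_{op}\lv Y\rv/\sigma_{\min}^2(X)$ (no square root). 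The remainder of your proposal --- the Davidson--Szarek/Hanson--Wright treatment of the initialization in place of the paper's citations --- is fine and essentially equivalent.
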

The reader may notice that the roles
of $\alpha$ and $\beta$ in Proposition~\ref{p:optimization_ab_initialization}
are asymmetric.  
We focused on that case that $\alpha$ is small
because
the updates of $W_1$
are in the span of the rows of $X$, which is not necessarily
the case for the other layers, including $W_L$.
This means that the scale of $W_1$ in the null space of $X$ remains the same as it was at initialization,
so that a small scale at initialization pays dividends throughout training.

The next lemma shows that the projection of the model computed by
the network onto the null space of $X$ is the
same as the model obtained by projecting the first layer weights, and combining them with the other layers.

\begin{lemma}
\label{l:Theta_perp.Wone_perp}
For all $t \ge 0$,
\begin{align*}
\Theta_{X^{\perp}}^{(t)} 
    =  \left( W_L^{(t)} \cdots W_2^{(t)} W_{1, X^\perp}^{(t)} \right)^{\top},
\end{align*}
where
\[
W_{1, X^\perp}^{(t)} :=  W_1^{(t)} (I - P_X).
\]
\end{lemma}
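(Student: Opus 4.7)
The claim is a pointwise-in-$t$ algebraic identity that follows from the symmetry of the projector $P_X$, so no dynamical argument is needed—the gradient flow only enters through the definition of $W_1^{(t)}$, which is arbitrary for the purposes of the lemma.

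The plan is to unfold both sides and match them. Starting from the definition, write
\[
\Theta^{(t)} = \left(W_L^{(t)} \cdots W_1^{(t)}\right)^{\top} = W_1^{(t)\top} W_2^{(t)\top} \cdots W_L^{(t)\top},
\]
and therefore
\[
\Theta_{X^{\perp}}^{(t)} = (I - P_X)\, \Theta^{(t)} = (I - P_X)\, W_1^{(t)\top} W_2^{(t)\top} \cdots W_L^{(t)\top}.
\]
The key observation is that $P_X = X^{\top}(XX^{\top})^{-1}X$ is symmetric, hence $(I - P_X)^{\top} = I - P_X$. This lets us pull the projector inside the transpose of the first factor:
\[
(I - P_X)\, W_1^{(t)\top} = \bigl((I - P_X)^{\top} W_1^{(t)\top}\bigr) = \bigl(W_1^{(t)} (I - P_X)\bigr)^{\top} = W_{1, X^{\perp}}^{(t)\top}.
\]

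Substituting back and folding the transposes using $(AB)^{\top} = B^{\top}A^{\top}$ repeatedly gives
\[
\Theta_{X^{\perp}}^{(t)} = W_{1, X^{\perp}}^{(t)\top} W_2^{(t)\top} \cdots W_L^{(t)\top} = \left(W_L^{(t)} \cdots W_2^{(t)} W_{1, X^{\perp}}^{(t)}\right)^{\top},
\]
which is exactly what is claimed.

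There is no real obstacle—the only thing to be careful about is bookkeeping on which side the projector lives and tracking the transposes correctly. The reason the identity is useful, rather than the reason it is true, is the nontrivial part: combined with the fact that under gradient flow the update to $W_1^{(t)}$ lies in the row span of $X$ (from the gradient formula in Lemma~\ref{l:theta_time_derivative}), one gets that $W_{1, X^{\perp}}^{(t)} = W_{1, X^{\perp}}^{(0)}$ is frozen throughout training, and so $\Theta_{X^{\perp}}^{(t)}$ is controlled by the (small) initial first-layer projection composed with the remaining layers, whose norms are bounded via Proposition~\ref{p:optimization_ab_initialization}.
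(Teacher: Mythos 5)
Your proof is correct and takes exactly the same route as the paper's: unfold $\Theta^{(t)}$ as $W_1^{(t)\top}\cdots W_L^{(t)\top}$, use the symmetry of $I-P_X$ to absorb the projector into the first factor as $(W_1^{(t)}(I-P_X))^{\top}$, and refold the transposes. The concluding remarks about why the identity is useful (combined with Lemma~\ref{l:Wone_perp_fixed}) are accurate but not part of the proof itself.
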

\begin{proof}By definition
\begin{align*}
    \Theta^{(t)} = (W_L^{(t)} \cdots W_1^{(t)})^\top = \left(W_1^{(t)}\right)^{\top} \cdots \left( W_L^{(t)} \right)^{\top} \in \mathbb{R}^{d\times q}.
\end{align*}
Therefore,
\begin{align*}
    \Theta_{X^\perp}^{(t)} = (I-P_X)\Theta^{(t)}= (I-P_X)(W_1^{(t)})^{\top} \cdots (W_L^{(t)})^{\top}  = (W_{1, X^\perp}^{(t)})^{\top} \cdots (W_L^{(t)})^{\top}.
\end{align*}
\end{proof}
The subsequent lemma shows that the projection of the first layer onto the null space of $X$ does not change during training.
\begin{lemma}
\label{l:Wone_perp_fixed}
For all $t\ge 0$,
$W_{1, X^\perp}^{(t)} =  W_{1, X^\perp}^{(0)}.$
\end{lemma}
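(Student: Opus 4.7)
The plan is to show that the time derivative $\frac{\mathrm{d}W_1^{(t)}}{\mathrm{d}t}$ lies in the row span of $X$, so that right-multiplication by $(I - P_X)$ annihilates it.

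First, I would specialize the gradient formula from equation~\eqref{e:gradient_formula} to the case $j=1$, which (adopting the convention that the empty product $\prod_{k=0}^1 W_k^{(t)}$ is the identity) gives
\begin{align*}
\nabla_{W_1} \cL(\Theta^{(t)})
  = \left(W_L^{(t)} \cdots W_2^{(t)}\right)^\top \left(X\Theta^{(t)} - Y\right)^\top X.
\end{align*}
The crucial feature is that the rightmost factor is $X$ itself.

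Next, since gradient flow prescribes $\frac{\mathrm{d}W_1^{(t)}}{\mathrm{d}t} = - \nabla_{W_1}\cL(\Theta^{(t)})$, I would right-multiply by $(I - P_X)$ and use the identity $X(I - P_X) = X - X \cdot X^\top (XX^\top)^{-1} X = X - X = 0$, which holds by Assumption~\ref{assumption:last} ensuring $XX^\top$ is invertible. This yields
\begin{align*}
\frac{\mathrm{d}}{\mathrm{d}t}\left[W_1^{(t)}(I - P_X)\right]
  = \frac{\mathrm{d}W_1^{(t)}}{\mathrm{d}t}(I - P_X)
  = 0.
\end{align*}
Hence $W_{1,X^\perp}^{(t)} = W_1^{(t)}(I - P_X)$ is constant along the trajectory and equals its initial value $W_{1,X^\perp}^{(0)}$.

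There is no real obstacle here: the argument is just the observation that the $W_1$-gradient ends with a factor of $X$, so its contribution to directions orthogonal to the row span of $X$ vanishes. The only subtlety worth flagging is verifying that the product convention for $j=1$ in Lemma~\ref{l:theta_time_derivative}/equation~\eqref{e:gradient_formula} yields $X$ as the rightmost factor (rather than something more complicated), and that $X P_X = X$ is unambiguous, both of which are immediate.
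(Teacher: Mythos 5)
Your argument is correct and is essentially identical to the paper's proof: both specialize the gradient formula \eqref{e:gradient_formula} to the first layer, observe the rightmost factor is $X$, and kill it with $X(I-P_X)=0$ to conclude $\frac{\mathrm{d}}{\mathrm{d}t}W_{1,X^\perp}^{(t)}=0$. No gaps.
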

\begin{proof}
We have
\begin{align*}
 \frac{\mathrm{d} W_{1, X^\perp}^{(t)}}{\mathrm{d} t} 
& = \frac{\mathrm{d} W_{1}^{(t)} (I-P_X)}{\mathrm{d} t} \\
& = \left( \frac{\mathrm{d} W_{1}^{(t)} }{\mathrm{d} t} \right) (I-P_X) \\
& = -\left( (W_L \cdots W_{2})^{\top}(X\Theta - Y)^{\top} X \right) (I-P_X) && \mbox{(by using Eq.~\eqref{e:gradient_formula})}\\
& = -\left( (W_L \cdots W_{2})^{\top}(X\Theta - Y)^{\top} \right) \left( X  (I-P_X) \right) \\
& = -\left( (W_L \cdots W_{2})^{\top}(X\Theta - Y)^{\top} \right) \left( 0 \right) \\
& = 0.
\end{align*}
\end{proof}
By using the previous two lemmas regarding the first layer weights $W_1$ we can now prove an alternate bound on $\lv \Theta_{X^{\perp}}^{(t)}\rv$. In contrast to the previous bound that we derived in Lemma~\ref{lem:outside_span_norm_bound}, here the initial scale of $W_1$ plays a role in controlling the growth in $\lv \Theta_{X^\perp}^{(t)}\rv$.
\begin{lemma}
There is constant $c > 0$ such that, 
if training is perpetually $\Lambda$ bounded, then, for all $t \ge 0$, 
\begin{align*}
 \lv \Theta^{(t)}_{X^{\perp}}\rv
  \leq
    \lv \Theta^{(0)}_{X^{\perp}}\rv+c  L
     \lv W_1^{(0)}\rv_{op} \lv X\rv_{op} \Lambda^2\int_{s=0}^t 
 \sqrt{\cL(\Theta^{(s)})}\;\mathrm{d}s.
\end{align*}
\end{lemma}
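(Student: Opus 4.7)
The plan is to retrace the proof of Lemma~\ref{lem:outside_span_norm_bound}, inserting Lemma~\ref{l:Wone_perp_fixed} to obtain an additional factor of $\lv W_1^{(0)}\rv_{op}$. First I would apply the chain rule and Lemma~\ref{l:theta_time_derivative} to express $\frac{1}{2}\frac{\mathrm{d}\lv \Theta_{X^{\perp}}^{(t)}\rv^2}{\mathrm{d}t}$ as a sum over $j \in [L]$ of terms of the form $-\Theta_{X^{\perp}}^{(t)}\cdot P_{X^{\perp}} M_j$, exactly as in Eq.~\eqref{e:change_in_theta_perp}; the $j=1$ summand once again vanishes because $P_{X^{\perp}} X^{\top} = 0$.

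The key new step is to push $P_{X^{\perp}}$ past the leftmost factor of $M_j$ when $j\ge 2$. Since $\left(\prod_{k=j-1}^{1}W_k^{(t)}\right)^{\top}$ begins with $W_1^{(t)\top}$, and since
\begin{align*}
P_{X^{\perp}} W_1^{(t)\top} = \left(W_1^{(t)}(I-P_X)\right)^{\top} = W_{1,X^{\perp}}^{(t)\top} = W_{1,X^{\perp}}^{(0)\top}
\end{align*}
by Lemma~\ref{l:Wone_perp_fixed}, one of the two factors of $\lv W_1^{(t)}\rv_{op}$ that appeared in the previous proof is replaced by $\lv W_{1,X^{\perp}}^{(0)}\rv_{op}\le \lv W_1^{(0)}\rv_{op}$. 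After applying Cauchy--Schwarz for the trace inner product and submultiplicativity of the operator norm as in Eq.~\eqref{e:change_in_theta_perp_more}, each summand is bounded by $\lv \Theta^{(t)}_{X^{\perp}}\rv\,\lv W_1^{(0)}\rv_{op}\,\lv W_1^{(t)}\rv_{op}\,\prod_{k=2,\,k\neq j}^L \lv W_k^{(t)}\rv_{op}^2\,\lv X\rv_{op}\,\sqrt{\cL(\Theta^{(t)})}$.

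The main obstacle is the bookkeeping of operator norms. After the substitution, the product of $W_k^{(t)}$-operator norms contains $\lv W_1^{(t)}\rv_{op}$ with multiplicity one together with $\lv W_k^{(t)}\rv_{op}^2$ for each $k\in\{2,\ldots,L\}\setminus\{j\}$, which naively only yields $\Lambda^3$ under perpetual $\Lambda$-boundedness. The remedy is to split this product as
\begin{align*}
\Big(\prod_{k\neq j}\lv W_k^{(t)}\rv_{op}\Big)\cdot\Big(\prod_{k\neq 1,j}\lv W_k^{(t)}\rv_{op}\Big)\le \Lambda\cdot \Lambda = \Lambda^2,
\end{align*}
applying the assumption twice to disjoint index sets. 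Summing over $j=2,\ldots,L$ and dividing through by $\lv\Theta_{X^{\perp}}^{(t)}\rv$ (with the degenerate case where it vanishes handled exactly as in Lemma~\ref{lem:outside_span_norm_bound}) yields
\begin{align*}
\frac{\mathrm{d}\lv \Theta_{X^{\perp}}^{(t)}\rv}{\mathrm{d}t}\le c L\, \lv W_1^{(0)}\rv_{op}\, \Lambda^2\, \lv X\rv_{op}\, \sqrt{\cL(\Theta^{(t)})}.
\end{align*}
Integration via Gr\"onwall's inequality, as at the end of the proof of Lemma~\ref{lem:outside_span_norm_bound}, then gives the stated bound.
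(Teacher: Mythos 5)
Your proposal is correct and follows essentially the same route as the paper's proof: starting from Eq.~\eqref{e:change_in_theta_perp}, absorbing $P_{X^{\perp}}$ into $W_1^{(t)\top}$ to produce $W_{1,X^{\perp}}^{(t)}=W_{1,X^{\perp}}^{(0)}$ via Lemma~\ref{l:Wone_perp_fixed}, splitting the remaining operator-norm product into two disjoint sub-products each bounded by $\Lambda$, and integrating via Gr\"onwall. No gaps.
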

\begin{proof}
Let us once again consider one of the terms in the RHS of Lemma~\ref{lem:outside_span_norm_growth_bound}.
We have
\begin{align*}
& 
\Tr
\left(
\Theta^{(t)\top}_{X^\perp} 
 P_{X^{\perp}}
      B_j^{\top}
      B_j
       X^{\top}
      (X\Theta^{(t)} - Y)
      A_j
      A_j^{\top}
  \right) \\
  & 
=\Tr
\left(
\Theta^{(t)\top} P_{X^\perp} (W_1^{(t)})^{\top}
      \left(\prod_{k=j-1}^{2} W_k^{(t)} \right)^{\top}
      B_j
       X^{\top}
      (X\Theta^{(t)} - Y)
      A_j
      A_j^{\top}
  \right)\\
  & 
=\Tr
\left(
\Theta^{(t)\top}_{X^\perp} W_{1,X^\perp}^{\top(t)}
      \left(\prod_{k=j-1}^{2} W_k^{(t)} \right)^{\top}
      B_j
      X^{\top}
      (X\Theta^{(t)} - Y)
      A_j
      A_j^{\top}
  \right).
  \end{align*}
  Continuing by using the fact that for any matrices $\lv AB \rv \le \lv A\rv_{op}\lv B\rv$, we get that
  \begin{align*}
  & 
\Theta_{X^{\perp}}^{(t)}\cdot P_{X^{\perp}}
  \left(
      B_j^{\top}
      B_j
       X^{\top}
      (X\Theta^{(t)} - Y)
      A_j
      A_j^{\top}
  \right)\\
& \le \lv\Theta^{(t)}_{X^{\perp}}\rv \lv W_{1,X^\perp}^{(t)}\rv_{op} \left\lv\left(\prod_{k=j-1}^{2} W_k^{(t)} \right)^{\top}
      B_j\right\rv_{op} 
\left\lv A_j
      A_j^{\top}\right\rv_{op}
\left\lv X \right\rv_{op} \left\lv X\Theta^{(t)}-Y\right\rv \\
& \overset{(i)}{\le} \lv\Theta^{(t)}_{X^{\perp}}\rv \lv W_{1,X^\perp}^{(t)}\rv_{op} \left\lv X \right\rv_{op} 
\Lambda^2
 \sqrt{\cL(\Theta^{(t)})}\\
 & \overset{(ii)}{\le}  \lv\Theta^{(t)}_{X^{\perp}}\rv \lv W_{1,X^\perp}^{(0)}\rv_{op} \left\lv X \right\rv_{op} 
\Lambda^2
 \sqrt{\cL(\Theta^{(t)})} \\
 & \le \lv\Theta^{(t)}_{X^{\perp}}\rv \lv W_1^{(0)}\rv_{op} \lv X\rv_{op}
 \Lambda^2
 \sqrt{\cL(\Theta^{(t)})} ,
\end{align*}
since $\lv W_{1,X^\perp}^{(0)}\rv_{op}\le \lv W_{1}^{(0)}\rv_{op}$,
where $(i)$ follows since 
training is $\Lambda$ perpetually bounded and so
\begin{align*}
    &\left\lv\left(\prod_{k=j-1}^{2} W_k^{(t)} \right)^{\top}
      B_j\right\rv_{op} 
\left\lv A_j
      A_j^{\top}\right\rv_{op} 
      \le \left(\prod_{k\neq \{1,j\}} \lv W_k^{(t)}\rv_{op} \right)\left(\prod_{k\neq \{j\}} \lv W_k^{(t)}\rv_{op} \right) 
      \le \Lambda^2
\end{align*}
and (ii) follows since by Lemma~\ref{l:Wone_perp_fixed}, $W_{1,X^\perp}^{(t)}=W_{1,X^\perp}^{(0)}$.

Summing over layers $j = 2,\ldots,L$, we get that,
\begin{align*}
    \frac{1}{2}\frac{\mathrm{d}\lv \Theta^{(t)}_{X^{\perp}}\rv^2}{\mathrm{d}t}
     \le (L-1)\lv\Theta^{(t)}_{X^{\perp}}\rv \lv W_{1}^{(0)}\rv_{op}\left\lv X \right\rv_{op} \Lambda^2
 \sqrt{\cL(\Theta^{(t)})}.
\end{align*}

Thus, we have that 
\begin{align*}
        \frac{1}{2}\frac{\mathrm{d}\lv \Theta^{(t)}_{X^{\perp}}\rv^2}{\mathrm{d}t} & = \frac{\lv \Theta^{(t)}_{X^{\perp}}\rv\mathrm{d}\lv \Theta^{(t)}_{X^{\perp}}\rv}{\mathrm{d}t} 
     \le (L-1)\lv\Theta^{(t)}_{X^{\perp}}\rv \lv W_{1}^{(0)}\rv_{op}\left\lv X \right\rv_{op} \Lambda^2
 \sqrt{\cL(\Theta^{(t)})}
\end{align*}
which in turn implies that, when 
$\lv \Theta^{(t)}_{X^{\perp}}\rv \neq 0$, we have
\begin{align*}
         \frac{\mathrm{d}\lv \Theta^{(t)}_{X^{\perp}}\rv}{\mathrm{d}t} 
     \le (L-1) \lv W_{1}^{(0)}\rv_{op}\left\lv X \right\rv_{op}  \Lambda^2
 \sqrt{\cL(\Theta^{(t)})}.
\end{align*}
Therefore, by integrating this differential inequality 
as in the proof of Lemma~\ref{lem:outside_span_norm_bound}, we conclude that
\begin{align*}
    \lv \Theta_{X^{\perp}}^{(t)}\rv-\lv \Theta_{X^{\perp}}^{(0)}\rv & \le (L-1) \lv W_{1}^{(0)}\rv_{op}\left\lv X \right\rv_{op}  \Lambda^2\int_{s=0}^t 
 \sqrt{\cL(\Theta^{(s)})}\;\mathrm{d}s.
\end{align*}
\end{proof}
We also need a lemma that bounds the Frobenius norm of the data matrix $X$.
\begin{lemma}
\label{l:X_fro_bound}
There is a constant $c > 0$ such that for any $\delta \in (0,1)$, if
$n \geq c  \log(1/\delta)$, then with probability at least $1 - \delta$, $\lv X \rv \leq c\sqrt{n s_0}$.
\end{lemma}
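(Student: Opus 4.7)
The plan is to bound the Frobenius norm by writing $\lv X \rv^2 = \sum_{i=1}^n \lv x_i \rv^2$ and showing this sum concentrates around its expectation $n s_0$. Under Assumption (A.3), we have $x_i = \Sigma^{1/2} u_i$ with $u_i$ isotropic and sub-Gaussian, so
\begin{align*}
\lv x_i \rv^2 = u_i^\top \Sigma u_i, \qquad \E\left[\lv x_i \rv^2\right] = \Tr(\Sigma) = s_0.
\end{align*}
By the Hanson--Wright inequality applied to the quadratic form $u_i^\top \Sigma u_i$ (using that $u_i$ has independent $c_x$-sub-Gaussian coordinates and that $\Sigma$ is PSD), each random variable $\lv x_i \rv^2 - s_0$ is sub-exponential with $\psi_1$-norm at most $c \lv \Sigma \rv_F$.

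Next I would apply Bernstein's inequality for a sum of independent centered sub-exponentials to obtain, with probability at least $1 - \delta$,
\begin{align*}
\left| \sum_{i=1}^n \lv x_i \rv^2 - n s_0 \right|
 \leq c \max\Bigl\{ \lv \Sigma \rv_F \sqrt{n \log(1/\delta)},\; \lv \Sigma \rv_{op} \log(1/\delta) \Bigr\}.
\end{align*}
Since $\Sigma$ is PSD with eigenvalues $\lambda_1 \geq \cdots \geq \lambda_d \geq 0$, we have $\lv \Sigma \rv_F^2 = \sum_i \lambda_i^2 \leq (\sum_i \lambda_i)^2 = s_0^2$ and $\lv \Sigma \rv_{op} = \lambda_1 \leq s_0$. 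Therefore the deviation is at most $c s_0 \sqrt{n \log(1/\delta)} + c s_0 \log(1/\delta)$.

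Using the hypothesis $n \geq c \log(1/\delta)$, both deviation terms are dominated by $n s_0$, which yields
\begin{align*}
\sum_{i=1}^n \lv x_i \rv^2 \leq n s_0 + c s_0 \sqrt{n \log(1/\delta)} + c s_0 \log(1/\delta) \leq c' n s_0,
\end{align*}
and taking square roots gives $\lv X \rv \leq c\sqrt{n s_0}$. The argument is essentially routine concentration; the only mild subtlety is checking that the Hanson--Wright constants match the sub-Gaussian parameter $c_x$ from (A.3), which is absorbed into the absolute constant $c$ per the convention in Section~\ref{s:notation}.
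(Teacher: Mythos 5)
Your proof is correct, and it reaches the same conclusion by a slightly different route than the paper. Both arguments start from $\lv X \rv^2 = \sum_{i=1}^n \lv x_i \rv^2$, but the paper stacks the $u_i$ into a single isotropic sub-Gaussian vector $u_{\text{stacked}} \in \R^{dn}$ and applies the concentration-of-norm theorem for $\lv \Sigma_{\text{stacked}}^{1/2} u_{\text{stacked}} \rv$ (Vershynin, Theorem 6.3.2) in one shot, obtaining a sub-Gaussian fluctuation of $\lv X \rv$ around $c\sqrt{n s_0}$ with scale $\sqrt{\lambda_1}$. You instead argue row by row: Hanson--Wright makes each $\lv x_i \rv^2 - s_0$ sub-exponential with $\psi_1$-norm $\lesssim \lv \Sigma \rv_F \le s_0$, and Bernstein then controls the sum. (The two are close cousins: the norm-concentration theorem the paper cites is itself a corollary of Hanson--Wright.) Your deviation bound for $\lv X \rv^2$ is cruder than the paper's for $\lv X\rv$ — of order $s_0\sqrt{n\log(1/\delta)}$ rather than $\sqrt{\lambda_1 n s_0 \log(1/\delta)}$ — but since the lemma only asks for $\lv X\rv \le c\sqrt{ns_0}$ under $n \ge c\log(1/\delta)$, both routes deliver the result; yours is arguably more self-contained, the paper's gives a tighter tail for free. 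The bounds $\lv\Sigma\rv_F \le s_0$ and $\lv\Sigma\rv_{op} \le s_0$ you use are valid since the $\lambda_i$ are nonnegative, and the final absorption of the deviation into $ns_0$ using $n \ge c\log(1/\delta)$ is exactly as in the paper.
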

\begin{proof}The rows of $X$ are $n$ i.i.d. draws from a distribution, where each sample can be written as $x_i = \Sigma^{1/2} u_i$, where $u_i$ has components that are independent $c_x$-sub-Gaussian random variables. Define $u_{\text{stacked}} := (u_1, u_2, \ldots, u_n)  \in \mathbb{R}^{ dn}$ to be concatenation of the vectors $u_1,\ldots,u_n$ and define $\Sigma^{1/2}_{\text{stacked}} \in \mathbb{R}^{dn \times dn}$ to be a block diagonal matrix with $\Sigma^{1/2} \in \mathbb{R}^{d\times d}$ repeated $n$ times along its diagonal. Then,
\begin{align*}
    \lv X\rv^2  & = \sum_{i=1}^n \lv x_i \rv^2  = \sum_{i=1}^n \lv \Sigma^{1/2} u_i \rv^2  = \left\lv \Sigma^{1/2}_{\text{stacked}} u_{\text{stacked}}
    \right\rv^2.
\end{align*}
Now, $u_{\text{stacked}}$ is an isotropic, $c_x$-sub-Gaussian random vector. Therefore, by applying \citep[][Theorem~6.3.2]{vershynin2018high} we know that the sub-Gaussian norm~\citep[][Definition~2.5.3]{vershynin2018high} of $\lv X\rv = \lv \Sigma^{1/2}_{\text{stacked}} u_{\text{stacked}} \rv$ is
\begin{align*}
    \left\lv \lv \Sigma^{1/2}_{\text{stacked}} u_{\text{stacked}}  \rv - c_1\sqrt{n\Tr(\Sigma)}\right\rv_{\psi_2} = \left\lv \lv X \rv - c_1\sqrt{ns_0}\right\rv_{\psi_2} \le c_2 \sqrt{\lambda_1}.
\end{align*}
Therefore, by Hoeffding's bound~\citep[][Proposition~2.5.2]{vershynin2018high} we get that
\begin{align*}
    \Pr\left[\lv X\rv - c_1\sqrt{ns_0} \ge \eta \right] \le 2\exp(-c_3\eta^2 /\lambda_1).
\end{align*}
Setting $\eta^2 = n s_0/\lambda_1 = n r_0$ and noting that $n \ge \log(1/\delta) \ge \log(1/\delta)/r_0$ completes the proof. 
\end{proof}

Finally, we have a simple lemma that bounds the Frobenius norm of the responses $Y$.
\begin{lemma}
\label{l:Y_bound}
There is a constant $c > 0$ such that for any $\delta \in (0,1)$, if
$n \geq c  \log(1/\delta)$, then with probability at least $1 - \delta$, $\lv Y \rv \leq c(\lv X\rv_{op}\lv \Theta^\star\rv + \sqrt{qn})$.
\end{lemma}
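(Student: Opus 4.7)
The plan is to decompose $Y$ via the generative model $Y = X\Theta^{\star} + \Omega$ (where $\Omega$ is the noise matrix from Section~\ref{s:prelim}) and bound each piece. By the triangle inequality and the definition of the operator norm,
\begin{align*}
    \lv Y \rv \;\leq\; \lv X \Theta^\star \rv + \lv \Omega \rv \;\leq\; \lv X\rv_{op} \lv \Theta^\star\rv + \lv \Omega \rv,
\end{align*}
so the only real work is to show $\lv \Omega \rv \leq c\sqrt{qn}$ with probability at least $1 - \delta$ whenever $n \geq c\log(1/\delta)$.

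To bound $\lv \Omega\rv$, I would mirror the strategy used in the proof of Lemma~\ref{l:X_fro_bound}. Let $\omega_i = y_i - x_i \Theta^\star \in \R^{1\times q}$ denote the $i$-th row of $\Omega$. By Assumption~\ref{assumption:noise_subgaussian}, each $\omega_i$ is $c_y$-sub-Gaussian \emph{conditionally} on $x_i$; integrating out $x_i$ yields that $\omega_i$ is unconditionally $c_y$-sub-Gaussian in $\R^q$. Because the samples are i.i.d., the stacked vector $\omega_{\text{stacked}} := (\omega_1,\ldots,\omega_n) \in \R^{qn}$ is itself an isotropic (up to the constant $c_y$) sub-Gaussian vector in $\R^{qn}$, and by construction $\lv \omega_{\text{stacked}} \rv = \lv \Omega\rv$.

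Now I would invoke the concentration of the norm of a sub-Gaussian vector \citep[][Theorem~6.3.2]{vershynin2018high} exactly as in the proof of Lemma~\ref{l:X_fro_bound}: this gives
\begin{align*}
    \bigl\lv \; \lv \Omega\rv - c_1 \sqrt{qn}\; \bigr\rv_{\psi_2} \leq c_2,
\end{align*}
and then Hoeffding's inequality \citep[][Proposition~2.5.2]{vershynin2018high} yields $\Pr[\lv \Omega\rv - c_1 \sqrt{qn} \geq \eta] \leq 2\exp(-c_3 \eta^2)$. Choosing $\eta = \sqrt{qn}$ and using that $n \geq c\log(1/\delta)$ implies the tail probability is at most $\delta$, giving $\lv \Omega\rv \leq c\sqrt{qn}$ with the desired probability. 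Combining with the initial triangle inequality completes the proof.

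The only subtlety, which is not really an obstacle, is to make sure the conditional sub-Gaussianity from Assumption~\ref{assumption:noise_subgaussian} correctly transfers to unconditional sub-Gaussianity for the stacked noise vector and that the rows are genuinely independent (which they are, since the $(x_i,y_i)$ are i.i.d.). Beyond that, the argument is a direct analog of Lemma~\ref{l:X_fro_bound} with $\Sigma^{1/2}_{\text{stacked}}$ replaced by (at most a constant multiple of) the identity.
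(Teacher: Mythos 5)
Your first step---writing $Y = X\Theta^\star + \Omega$ and bounding $\lv Y\rv \le \lv X\rv_{op}\lv\Theta^\star\rv + \lv\Omega\rv$---is exactly what the paper does. The gap is in how you bound $\lv\Omega\rv$. You invoke the norm-concentration result \citep[][Theorem~6.3.2]{vershynin2018high}, but that theorem requires the random vector to have \emph{independent}, mean-zero, \emph{unit-variance} coordinates (its proof goes through the Hanson--Wright inequality). Assumption~\ref{assumption:noise_subgaussian} only gives that each row $\omega_i = y_i - x_i\Theta^\star \in \R^{1\times q}$ is jointly $c_y$-sub-Gaussian; it says nothing about the $q$ coordinates within a row being independent of one another, nor about their variances being equal to one (they are only bounded above). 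So $\omega_{\text{stacked}}$ has independent \emph{blocks}, not independent coordinates, and it need not be isotropic. The subtlety you flag---independence across rows---is the part that is fine; what fails is within-row independence and isotropy, which is precisely where the analogy with Lemma~\ref{l:X_fro_bound} breaks down: there, the covariate assumption explicitly supplies a representation $x = \Sigma^{1/2}u$ with $u$ having independent isotropic coordinates, so the stacked vector genuinely satisfies the hypotheses of Theorem~6.3.2.

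The lemma is of course still true, and the paper's route sidesteps the issue: it works with $\lv\Omega\rv^2$, notes that $\E\left[\lv\Omega\rv^2\right] = n\,\E\left[\lv y - x\Theta^\star\rv^2\right] \le c q n$ directly from Assumption~\ref{assumption:noise_subgaussian}, and applies Bernstein's inequality to conclude $\lv\Omega\rv^2 \le c q n$ with probability $1-\delta$ once $n \ge c\log(1/\delta)$. (Done carefully, one groups by rows: $\lv\Omega\rv^2 = \sum_{i=1}^n \lv\omega_i\rv^2$ is a sum of $n$ independent sub-exponential variables, each with $\psi_1$-norm $O(q)$ by the triangle inequality for $\lVert\cdot\rVert_{\psi_1}$ applied to $\sum_j \omega_{ij}^2$---no within-row independence is needed.) If you want to retain the norm-concentration flavor of your argument, you would need a version of Theorem~6.3.2 valid for general, non-product sub-Gaussian vectors, which is not what the cited theorem provides.
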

\begin{proof} Note that $Y = X\Theta^\star + \Omega$, and therefore
\begin{align}\label{e:y_norm_upper_bound_triangle}
    \lv Y \rv &\le \lv X\Theta^\star \rv + \lv \Omega\rv \le \lv X\rv_{op}\lv \Theta^\star \rv +\lv \Omega\rv,
\end{align}
where the last inequality follows since for any matrices $\lv AB\rv \le \lv A\rv_{op}\lv B\rv$. 
Now each entry in $\Omega \in \mathbb{R}^{n \times q}$ is a zero-mean and $c_y$-sub-Gaussian. Therefore, by Bernstein's bound~\citep[][Theorem~2.8.1]{vershynin2018high},
\begin{align*}
    \Pr\left[\lv \Omega \rv^2 - \E\left[\lv \Omega\rv^2\right] \ge qn \right]\le 2\exp(-c_1 qn).
\end{align*}
Now $\E\left[\lv \Omega \rv^2\right] = n \E\left[\lv y-x\Theta^\star\rv^2\right]\le c_2qn$, by Assumption~\ref{assumption:noise_subgaussian}, and $2\exp(-qn)\le \delta$ since $n \ge c\log(1/\delta) \ge c \log(1/\delta)/q$. Thus, with probability at least $1-\delta$
\begin{align*}
    \lv \Omega \rv^2 \le c_3 qn.
\end{align*}
Combining this with Eq.~\eqref{e:y_norm_upper_bound_triangle} completes the proof.
\end{proof}

With all of the pieces in place we are now ready to prove the theorem. 
\begin{proof}[Proof of Theorem~\ref{t:risk_ab_initialization}]
Define a ``good event'' $\cE$ as the intersection of the following events:
\begin{enumerate}[label={\textbullet\hspace{10pt}$\mathcal{E}_{\arabic*}$,},
                  leftmargin=0.6in]
    \item the excess risk bound stated in Theorem~\ref{t:generalization.deep} holds.
    \item the bounds stated in Proposition~\ref{p:optimization_ab_initialization} hold.
    \item $\lv X \rv_{op} \leq c \sqrt{\lambda_1 n}$.
    \item $\lv X \rv \le c \sqrt{s_0 n}$.
    \item    \label{i:sigma_min} $\sigma_{min}(X) \geq \frac{\sqrt{s_k}}{c}$.
    \item $\lv Y\rv \le c \left(\lv X\rv_{op}\lv \Theta^\star\rv + \sqrt{q n}\right)$.
\end{enumerate}
Now, Theorem~\ref{t:generalization.deep} and Proposition~\ref{p:optimization_ab_initialization} each hold with probability at least $1-c\delta$. 
Lemma~\ref{l:Xop_bound} implies that
the event $\cE_3$ holds with probability at least $1-\delta$. By Lemma~\ref{l:X_fro_bound}, the event $\cE_4$ holds with probability at least $1-\delta$.
For $\cE_5$,
notice that
\begin{align*}
    \sigma_{\min}(X) = \sqrt{\sigma_{\min}(XX^\top)} = \sqrt{\sigma_{\min}(X_{:k}X_{:k}^\top + X_{k:}X_{k:}^\top) }
    \ge \sqrt{\sigma_{\min}( X_{k:}X_{k:}^\top) } = \sigma_{\min}(X_{k:}),
\end{align*}
where $X_{:k}$ are the first $k$ columns of $X$ and $X_{k:}$ are the last $d-k$ columns of $X$. Since $n\ge c\log(1/\delta)$, by \citep[][Lemma~9]{bartlett2020benign} we know that with probability at least $1-\delta$
\begin{align*}
    \sigma_{\min}(X)\ge \sigma_{\min}(X_{k:}) &\ge \sqrt{\frac{s_k}{c_1} \left(1-\frac{c_1 n}{r_k}\right)} \ge \frac{\sqrt{s_k}}{c} \qquad \mbox{(since $r_k \ge bn$ by Definition~\ref{def:k_star})}.
\end{align*}
 Finally, by Lemma~\ref{l:Y_bound} event $\cE_6$ holds with probability at least $1-\delta$. Therefore, by a union bound the good event $\cE$ holds with probability at least $1-c'\delta$.
Let us assume that this event occurs going forward in the proof.

Proposition~\ref{p:optimization_ab_initialization} guarantees that the training process is $c_2(\alpha + 1/L)\beta$-perpetually bounded and the loss converges to zero. Therefore, by applying Theorem~\ref{t:generalization.deep}, the risk is bounded by
\begin{align*}
\mathsf{Risk}(\Theta) & \le \mathsf{Bias(\Theta_{\ell_2})}    +\mathsf{Variance(\Theta_{\ell_2})}+ \mathsf{\Xi},
\end{align*}
where
\begin{align*}
    \mathsf{Bias(\Theta_{\ell_2})}  &\le \frac{cs_k}{n} \lv \Theta^\star \rv^2, \\
    \mathsf{Variance(\Theta_{\ell_2})}  &\le  c q \log(q/\delta)\left(\frac{k}{n} + \frac{n}{R_{k}} \right), \\
   \mathsf{\Xi} 
    & \leq \frac{cs_k}{n}\left[ 
     \lv \Theta^{(0)}_{X^{\perp}}\rv
     +L \alpha (\alpha+1/L)^2 \beta \sqrt{\lambda_1 n}\int_{t=0}^\infty
 \sqrt{\cL(\Theta^{(t)})}\;\mathrm{d}t\right]^2. \numberthis \label{e:xi_bound_theorem_opt}
\end{align*}
In the rest of the proof we shall bound the term $\mathsf{\Xi}$. 

For this, we would like to apply Proposition~\ref{p:optimization_ab_initialization}, 
which we can, since $\alpha \leq 1$,
\begin{align*}
\beta & \geq 
   c_3
    \max\left\{ 1,
      \frac{ 
   \lambda_1^{1/4} \sqrt{L n} \left( \sqrt{\lv \Theta^\star \rv} \lambda^{1/4} + q^{1/4}
      \right)
       }{
       \sqrt{s_k}
       }
      \right\} \\
   &\geq
 c_4  
   \max \left\{ 1,
   \sqrt{\frac{L \lv X\rv_{op}\lv Y\rv}
          { \sigma^2_{\min}(X)}}
          \right\} && \mbox{(by events~$\cE_3$, $\cE_5$ and $\cE_6$)},
\end{align*}
and
\begin{align*}
m & \geq 
c_5\max\left\{  d+q+\log(1/\delta), \frac{ L^2\alpha^2 \lambda_1 s_0 n^2 q\log(n/\delta)}
                     {\beta^2 
                     s_k^2
                     } \right\} \\
 & \geq
c_6\max\left\{  d+q+\log(1/\delta), \frac{L^2 \alpha^2 \lv X\rv_{op}^2 \lv X\rv^2 q\log(n/\delta)}
                     {\beta^2 \sigma^4_{\min}(X)} \right\}
&& \mbox{(by events $\cE_3$ and $\cE_4$)}. 
\end{align*}
Thus, by Proposition~\ref{p:optimization_ab_initialization} we know that for all $t > 0$,
\begin{align*}
     \cL(\Theta^{(t)})
    &  <
    c_7\left(\lv Y\rv^2 + \frac{\alpha^2 \beta^2 q \lv X\rv^2\log(n/\delta)}{m} \right)
    \exp\left(- \frac{\beta^2 \sigma_{\min}^2(X) }{4e}\cdot t \right) \\
    &  <
    c_8 \left(\lambda_1 n \lv \Theta^\star\rv^2 + q n + \frac{\alpha^2 \beta^2 q s_0 n\log(n/\delta)}{m} \right)
    \exp\left(- c_9\beta^2  s_k \cdot t \right)&& \mbox{(by events $\cE_3$-$\cE_6$)}. 
\end{align*}
Integrating the RHS above we get that
\begin{align*}
 \int_{t=0}^\infty 
 \sqrt{\cL(\Theta^{(t)})}\;\mathrm{d}t  &\le c_{10} 
   \frac{\sqrt{(\lambda_1 \lv \Theta^\star\rv^2 + q) n}  
            + \alpha \beta \sqrt{\frac{ s_0 q n \log(n/\delta)}{m}}}
        {\beta^2 s_k}. \numberthis \label{e:loss_bound_1_infinity}
\end{align*}

Proposition~\ref{p:optimization_ab_initialization} also guarantees that the initialization is $\delta$-good. That is, $\lv W_1^{(0)}\rv_{op} \le c_{11}\alpha $ and $\lv W_L^{(0)}\rv\le c_{11}\beta$. So,
\begin{align*}
\lv \Theta_{X^{\perp}}^{(0)} \rv
 = \lv (I - P_X) \Theta^{(0)} \rv
 &\leq \lv (I - P_X) \rv_{op} \lv \Theta^{(0)} \rv \\
 & \le \lv \Theta^{(0)} \rv  \\
 & = \lv W_L^{(0)}W_{L-1}^{(0)}\cdots W_1^{(0)} \rv  \\
 & = \lv W_L^{(0)}W_1^{(0)} \rv
 \hspace{0.7in}
 \mbox{(since $W_2^{(0)}=\ldots=W_{L-1}^{(0)}=I$)} \\
 & \le \min\left\{\lv W_1^{(0)} \rv_{op} \lv W_L^{(0)}\rv,\lv W_1^{(0)} \rv \lv W_L^{(0)}\rv_{op}\right\} \\
 & \overset{(i)}{\le}  c_{12}\sqrt{\min\left\{q,d\right\}}\alpha \beta \\
 & \le c_{12} \sqrt{q}\alpha \beta, \numberthis \label{e:theta_0_bound}
\end{align*}
where $(i)$ follows since the initialization was good, and the ranks of $W_{1}^{(0)}$ and $W_{L}^{(0)}$ are bounded by $d$ and $q$ respectively.

Plugging the bounds obtained in Eqs.~\eqref{e:loss_bound_1_infinity} and \eqref{e:theta_0_bound} into Eq.~\eqref{e:xi_bound_theorem_opt} we have that
\begin{align*}
    \mathsf{\Xi}
    & \le \frac{c_{13} s_k}{n}\left[\sqrt{q}\alpha \beta+  
     L \alpha (\alpha + 1/L)^2 \beta^2\sqrt{\lambda_1 n}
    \left(
          \frac{\sqrt{(\lambda_1 \lv \Theta^\star\rv^2 + q) n}  
            + \alpha \beta \sqrt{\frac{ s_0 qn  \log(n/\delta)}{m}}}
        {\beta^2 s_k}
         \right) \right]^2 \\ 
         & \le \frac{c_{13} s_k}{n}\left[\sqrt{q}\alpha \beta+  
          \frac{L \alpha (\alpha + 1/L)^2 \sqrt{\lambda_1 n}\left(\sqrt{(\lambda_1 \lv \Theta^\star\rv^2 + q) n}  
            + \alpha \beta \sqrt{\frac{ s_0 qn  \log(n/\delta)}{m}}\right)}
        { s_k} \right]^2 \\ 
    & \leq \frac{c_{14} \alpha^2 s_k}{n}\left[q  \beta^2+  
          \frac{
           L^2  (\alpha + 1/L)^4 \lambda_1 n
          \left((\lambda_1 \lv \Theta^\star\rv^2 + q) n  
                    +  \frac{ \alpha^2 \beta^2  s_0 q n\log(n/\delta)}{m}\right)}
               {s_k^2}
          \right] \\
          & \leq \frac{c_{14} \alpha^2 s_k}{n}\left[q  \beta^2+  
          \frac{L^2(\alpha+1/L)^4\lambda_1 n^2}{s_k^2}\left(\lambda_1 \lv \Theta^\star \rv^2 +q +\frac{\alpha^2 \beta^2 s_0 q \log(n/\delta)}{m}\right)
          \right] .
    \end{align*}
This completes our proof.
\end{proof}

\section{Proof of Corollary~\ref{c:k_eps_p}}
\label{s:spike_proof}

When $\Sigma$ is an instance of the
$(k,\epsilon)$-spike model we find that
\begin{align}
\label{e:spike_params}
r_0 = s_0/\lambda_1 = k + \epsilon (d - k),\quad  s_k = \epsilon (d-k) \quad \text{and} \quad  R_k = (d - k).
\end{align}
First, for a large enough $c_1$, we set 
\begin{align*}
\beta & =       c_1 \max \left\{1,
      \frac{ \lambda_1^{1/4} \sqrt{Ln}\left( \sqrt{\lv \Theta^\star \rv\lambda_1^{1/4} + q^{1/4}} \right)}{\sqrt{s_k}}
        \right\} = c_1 \max \left\{1,
      \frac{\sqrt{Ln}\left( \sqrt{\lv \Theta^\star \rv + q^{1/4}} \right)}{\sqrt{\epsilon(d-k)}}
        \right\}.
\end{align*} Given this choice of $\beta$, for any $q,n,k,d,L,$ if $\alpha >0$ is chosen to be small enough then,
\begin{align*}
     m\ge c_1(d+q+\log(1/\delta)) = c_1 \max\left\{ d+q+\log(1/\delta),\frac{L^2 \alpha^2 \lambda_1 s_0 n^2 q\log(n/\delta)}
                     {\beta^2 
                     s_k^2}  \right\}.
\end{align*}
Also by the assumption on the number of samples,
\begin{align*}
    n \ge c_2\max\left\{k+\epsilon d, \log(1/\delta)\right\} \ge c_2 \max\left\{r_0, k , \log(1/\delta) \right\}.
\end{align*}
We are now in position to invoke Theorem~\ref{t:risk_ab_initialization}. By this theorem we get that,
\begin{align*}
    \mathsf{Risk}(\Theta) &\le \frac{c_3 \epsilon (d-k)}{n} \lv \Theta^\star\rv^2 + c_3q\log(q/\delta)\left(\frac{k}{n}+\frac{n}{d-k}\right) + \mathsf{\Xi}\\
    &\le \frac{c \epsilon d}{n} \lv \Theta^\star\rv^2 + cq\log(q/\delta)\left(\frac{k}{n}+\frac{n}{d}\right) + \mathsf{\Xi} && \mbox{(since $d \ge c_4 k$)}.
\end{align*}
Recall from above that the upper bound on $\mathsf{\Xi}$ scales with $\alpha^2$. Thus, for small enough $\alpha$ it is a lower order term.

\section{Additional Simulations and Details}\label{s:simulations}

\begin{figure}[H]
    \centering
    \includegraphics[width=\textwidth]{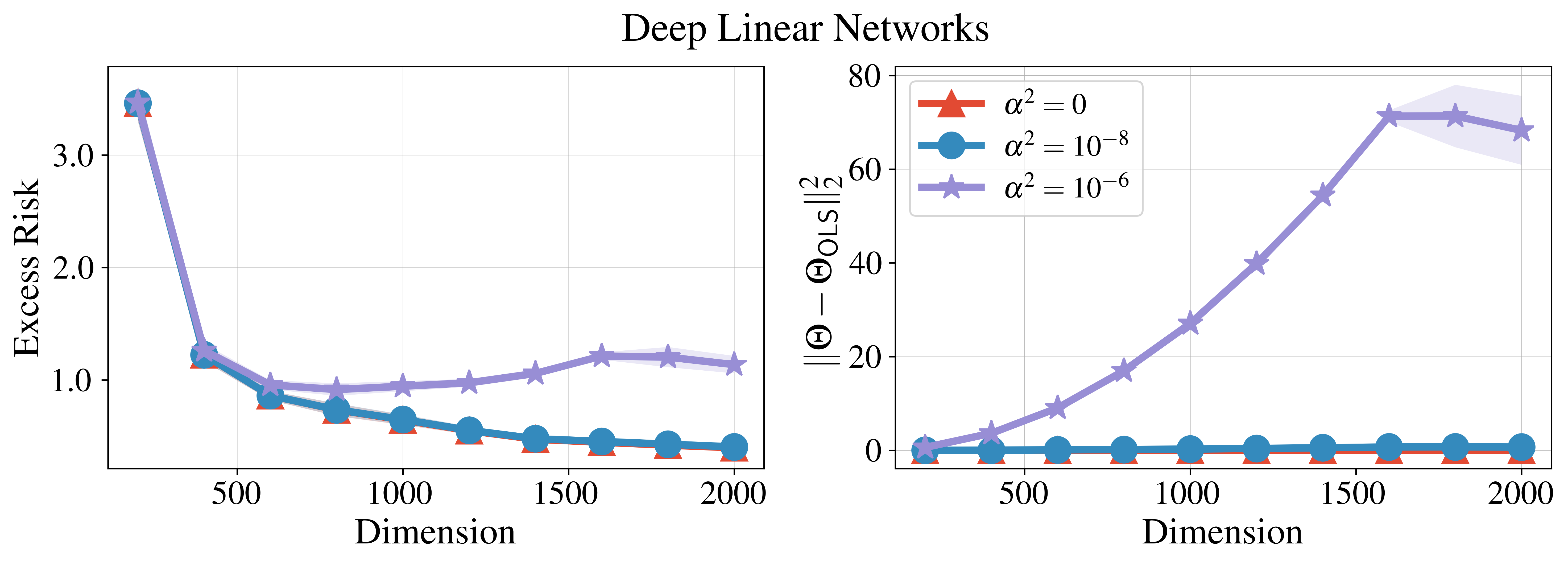}
    \caption{Excess risk and distance from the minimum $\ell_2$-norm interpolator of three-layer linear networks trained by gradient descent on data generated by an underlying linear model as the input dimension varies. The model is trained on $n=100$ points drawn from the generative model $y = x\Theta^\star + \omega$, where $x \sim \mathsf{N}(0,\Sigma)$ and $\omega \sim \mathsf{N}(0,1)$. The excess risk is defined as $\E_{x}\left[\lv x\Theta-x\Theta^{\star}\rv^2\right]$. In line with our theory, we find that when the initialization scale is small, final solution is close to the minimum $\ell_2$-norm interpolator and the resulting excess risk is small. }
    \label{fig:linear_models_risk_versus_dimension}
\end{figure}
Inspired by our theory, we ran simulations to study the excess risk of several linear networks and ReLU networks as a function of both the initialization scale and dimension.\footnote{Code at \url{https://github.com/niladri-chatterji/Benign-Deep-Linear}} In line with our theoretical upper bounds, we find that for deep linear networks as the initialization scale of either the first layer $(\alpha)$ or the last layer $(\beta)$ is large, the excess risk of the model is larger (see Figure~\ref{fig:linear_models_risk_versus_init_scale}). In deep ReLU networks (see Figure~\ref{fig:relu_models_risk_versus_alpha}), we find an asymmetry in the roles of $\alpha$ and $\beta$. The excess risk increases when we increase $\alpha$, but is largely unaffected by the scale of the initialization of the final layer $\beta$.

In all of our figures we report the average over $20$ runs. We also report the $95\%$ confidence interval assuming that the statistic of interest follows a Gaussian distribution.

\paragraph{Setup for deep linear models.} For Figures~\ref{fig:linear_models_risk_versus_init_scale} and \ref{fig:linear_models_risk_versus_dimension} the generative model for the underlying data was $y = x \Theta^\star + \omega$, where 
\begin{enumerate}
    \item $\Theta^\star \in \mathbb{R}^{d \times 3}$ is drawn uniformly over the set of matrices with unit Frobenius norm. The output dimension $q=3$;
    \item the covariates $x \sim \mathsf{N}(0,\Sigma)$, where the eigenvalues of $\Sigma$ are as follows: $\lambda_1 = \ldots = \lambda_{10} = 1$ and $\lambda_{11}=\ldots=\lambda_{d} = 0.01$;
    \item the noise $\omega$ is drawn independently from $\mathsf{N}(0,1)$. 
\end{enumerate}
For these figures the number of samples $n=100$ across all experiments. All of the models are trained on the squared loss with full-batch gradient descent with step-size $10^{-4}$, until the training loss is smaller than $10^{-7}$.

We train models that have $2$ hidden layers $(L=3)$. The width of the middle layers $m$ is set to be $10(d+q)$, where $d$ is the input dimension and $q$ is the output dimension.

For the top half of Figure~\ref{fig:linear_models_risk_versus_init_scale} and Figure~\ref{fig:linear_models_risk_versus_dimension} when we vary the initialization scale of the first layer $\alpha$, we initialize all of the middle layers to the identity, and initialize entries of the last layer with i.i.d. draws from $\mathsf{N}(0,1)$. 

For the bottom half of Figure~\ref{fig:linear_models_risk_versus_init_scale} when we vary the initialization scale of the last layer $\alpha$, we initialize all of the middle layers to the identity, and initialize entries of the first layer with i.i.d. draws from $\mathsf{N}(0,1)$. 

\paragraph{Setup for deep ReLU models.} For Figure~\ref{fig:relu_models_risk_versus_alpha} the generative model for the underlying data was $y =f^\star(x) + \omega$, where 
\begin{enumerate}
    \item $f^\star(x)$ is a two-layer feedforward ReLU network with width $10$ and output dimension $3$ which was randomly initialized according to LeCun initialization;
    \item the covariates $x \sim \mathsf{N}(0,I_{10\times 10})$;
    \item the noise $\omega$ is drawn independently from $\mathsf{N}(0,1)$. 
\end{enumerate}
The networks are trained on $n=500$ samples. Again, all of the models are trained on the squared loss with full-batch gradient descent with step-size $10^{-4}$, until the training loss is smaller than $10^{-7}$.

We train models that have $L=3$ layers. The width of the middle layers $(m)$ is set to be $50$.

For left half of Figure~\ref{fig:relu_models_risk_versus_alpha} when we vary the initialization scale of the first layer $\alpha$, we initialize all of the middle layers to the identity, and initialize entries of the last layer with i.i.d. draws from $\mathsf{N}(0,1)$. 

For the right half of Figure~\ref{fig:relu_models_risk_versus_alpha} when we vary the initialization scale of the last layer $\alpha$, we initialize all of the middle layers to the identity, and initialize entries of the first layer with i.i.d. draws from $\mathsf{N}(0,1)$. 

\section{Discussion}\label{s:discussion}

We have provided upper bounds on the excess risk for deep linear
networks that interpolate the data with respect to the quadratic
loss, and presented simulation studies that verify that the some
aspects of our bounds reflect typical behavior.

As mentioned in the introduction, our analysis describes a variety
of conditions under which the generalization behavior of interpolating
deep linear networks is similar, or the same, as the behavior of the
minimum $\ell_2$-norm interpolant with the standard parameterization.  Among other things,
this motivates study of loss functions other than the quadratic loss
used in this work.  The softmax loss would be a natural choice.

Looking at our proofs, it appears that the only way that a deep linear
parameterization can promote benign overfitting is for the function
computed by the network at initialization
to approximate the regression function.  
(Formalizing this with a lower bound, possibly in the case of
random initialization, or with an arbitrary initialization and a randomly chosen
regression function $\Theta^\star$, is a potential topic for
further research.)
The benefits of a good approximation to the regression function at
initialization has been explored in the case of two-layer linear networks~\citep{chatterji2022interplay}.  Extending this analysis to
deep networks is a potential subject for further study.

We focused on a particular random initialization scheme in this paper, it is possible to study other initialization schemes as well. 
For example, 
we
believe that, if the width $m$ of the network
is somewhat larger, a similar analysis should go
through without our simplifying assumption that $W_2, \ldots, W_{L-1}$ 
are initialized exactly to the identity, and 
instead are initialized randomly.

Recently, \citet{Mal22} established conditions under
which interpolation with the minimum $\ell_2$-norm intepolator is ``tempered'', 
achieving risk within a constant factor of the Bayes risk. 
Here we show that the risk of interpolating deep
linear networks is (nearly) equal to the risk of the minimum $\ell_2$-norm interpolator, this implies 
that when the minimum $\ell_2$-norm interpolator is tempered, so is the output of
the deep linear model.  We hope that our techniques
lay the groundwork for other results about tempered
overfitting.

While here we analyzed the network obtained by the continuous-time gradient flow it is straightforward to use our techniques to obtain similar results for gradient descent with small enough step-size at the expense of a more involved analysis.

As mentioned after the statement of Theorem~\ref{t:risk_ab_initialization},
its bounds could potentially be improved.  (We have not attempted to prove any lower bounds in this work.)

In Figure~\ref{fig:linear_models_risk_versus_init_scale} we found that when the scale of the random initialization of either the first $(\alpha)$ or the last layer $(\beta)$ goes to zero the trained model approach the minimum $\ell_2$-norm interpolator. Understanding why and when this happens is an avenue for future research.

Finally, examining the extent to which the effects described here carry over when nonlinearities are present is a natural next step.  

\subsection*{Acknowledgements}
We thank anonymous reviewers for their careful reading of an earlier version
of this
paper and their valuable feedback.


\appendix

\section{Additional Related Work}\label{s:additional_related_work}

In this appendix, we describe a wider variety of related work.


\subsection{Benign Overfitting and Double Descent} 
This subsection includes descriptions of some of the most closely related work
that we know. For a wider sample, we point the interested reader to a couple of surveys~\citep{bartlett2021deep,belkin2021fit}.

Papers have studied the excess risk of the minimum $\ell_2$-norm interpolant~\citep{bartlett2020benign,hastie2019surprises,muthukumar2020harmless,bartlett2020failures}, which is obtained as a result of minimizing the squared loss using gradient descent with no explicit regularization. While these previous papers directly analyzed the closed form expression of the minimum $\ell_2$-norm interpolant, followup work~\citep{negrea2020defense,koehler2021uniform,chinot2020robustness,chinot2020robustness_min} employed tools from uniform convergence to analyze its excess risk. Prior work~\citep{kobak2020optimal,NEURIPS2020_72e6d323,tsigler2020benign} analyzed ridge regression with small or even negative regularization, and identified settings where using zero or even negative regularization can be optimal.

Techniques have also been developed to upper bound the excess risk of the sparsity-inducing minimum $\ell_1$-norm interpolant~\citep{koehler2021uniform,li2021minimum,wang2022tight,donhauser22fast}. Furthermore, lower bounds on the excess risk that show that sparsity can be incompatible with benign overfitting, and that the excess risk of sparse interpolators maybe exponentially larger than that of dense interpolators have also been derived~\citep{chatterji2022foolish}.

Kernel ridgeless regression has been actively studied~\citep{liang2020just,mei2019generalization}. Careful theoretical analysis and simulations have revealed that kernel ``ridgeless'' regression can lead to multiple descent curves~\citep{liang2020MultipleDescent}. A handful of papers have analyzed the risk of random features models~\citep{mei2019generalization,li2021towards}.

Several papers have also studied benign overfitting in linear classification of the canonical maximum $\ell_2$-margin classifier~\citep{montanari2019generalization,deng2019model,chatterji2020finite,hsu2020proliferation,muthukumar2020classification,wang2021benign,cao2021risk}, the maximum $\ell_1$-margin classifier~\citep{liang2020precise}, and classifiers obtained by minimizing polynomially-tailed classification losses~\citep{wang2021importance}. Results have also been obtained on data that is linearly separable with two-layer leaky ReLU networks~\citep{frei2022benign}, and with two-layer convolutional networks with smooth nonlinearities~\citep{cao2022benign}.

Furthermore, this phenomenon has been studied in nearest neighbor models~\citep{belkin2018overfitting}, latent factor models~\citep{bunea2020interpolation} and the Nadaraya-Watson estimator~\citep{belkin2019does} 
with a singular kernel.

\citet{shamir2022implicit} highlighted the importance of the choice of the loss function, since an interpolator that benignly overfits with respect one loss function may not with respect to another one.

\subsection{Implicit Bias}
In the closely related problem of matrix factorization, several papers~\citep{gunasekar2017implicit,arora2019implicit} established conditions under which the solution of gradient flow converges to the minimum nuclear norm solution. This rank minimization behavior of gradient flow was also shown to approximately hold for ReLU networks~\citep{timor2022implicit}. 

Papers have also studied the implicit bias of gradient descent for \emph{diagonal} linear networks and found that it depends on the scale of the initialization and step-size~\citep{woodworth2020kernel,DBLP:conf/iclr/YunKM21,nacson2022implicit}. They found that depending on the scale of the initialization and step-size the network converges to the minimum $\ell_2$-norm solution (kernel regime), or to the minimum $\ell_1$-norm solution (rich regime) or to a solution that interpolates between these two norms. \citet{gunasekar2018implicit,DBLP:conf/colt/JagadeesanRG22} studied linear convolutional networks and found that under this parameterization of a linear model, gradient descent implicitly minimizes norms of the Fourier transform of the predictor. Techniques have also been developed to study the implicit bias of mirror descent~\citep{gunasekar2018characterizing,li2022implicit}.

For linear classifiers, minimizing 
exponentially-tailed losses including the logistic loss
leads to the maximum $\ell_2$-margin classifier~\citep{soudry2018implicit,ji2018risk,nacson2018stochastic}. \citet{ji2018gradient} found that this is also the case when the linear classifier is parameterized as a deep linear classifier.

As a counterpoint to this line of research, 
\citet{arora2019implicit} and
\citet{razin2020implicit} raised the possibility that the implicit bias of deep networks may be unexplainable by 
a simple function such as a norm. 
\citet{litowards} showed that
under certain conditions gradient flow with infinitesimal initialization is equivalent to a simple heuristic rank minimization algorithm.
\citet{vardi2021implicit} showed that it might be impossible altogether to capture the implicit bias of even two-layer ReLU networks using any functional form.

\subsection{Optimization of Deep Linear Networks} 
Several papers have studied deep linear networks as a means to understand the benefit of overparameterization while optimizing nonlinear networks. \citet{DBLP:conf/icml/AroraCH18} argued that depth promotes a form of implicit acceleration when performing gradient descent on deep linear networks. While other papers~\citep{DBLP:conf/icml/DuH19} showed that for wide enough networks that are randomly initialized by Gaussians the loss converges at a linear rate. Other papers have analyzed the convergence rate under other initialization schemes such as orthogonal initialization~\citep{DBLP:conf/iclr/HuXP20} and near-identity initialization~\citep{DBLP:journals/neco/BartlettHL19,DBLP:conf/iclr/ZouLG20}. Rates of convergence for accelerated methods such as Polyak's heavy ball method have also been established~\citep{wang2021modular}.

\citet{DBLP:journals/corr/SaxeMG13} analyzed the effect of initialization and step-size in training deep linear networks. 
\citet{DBLP:conf/nips/Kawaguchi16} identified a number of properties of the
loss landscape of deep linear networks, including the absence of suboptimal
local minima;
\citet{ach21arxiv} 
characterized global minima, strict saddles, and non-strict saddles for 
these landscapes.

\section{Proof of Proposition~\protect\ref{p:optimization_ab_initialization}}
\label{a:optimization_ab_initialization}

In this appendix, we prove
Proposition~\ref{p:optimization_ab_initialization}.  Its proof uses some technical lemmas, which
we derive first.

It is useful to recall the definition of a $\delta$-good initialization from above. 
\initdef*
The next two lemmas shall be useful in showing that our initialization scheme leads to a $\delta$-good initialization. First, to bound the singular values of the weight matrices at initialization we apply the following result
from \citet{vershynin2010introduction}.
\begin{lemma}
\label{l:singular.deep_random}There exists a constant $c$ such that given any $\delta \in (0,1)$ if $m \ge c(d+q+\log(1/\delta))$, then 
with probability at least $1 - \delta/2$
\begin{align*}
    \alpha/2 < \sigma_{\min}(W_{1}^{(0)}) &\le  \sigma_{\max}(W_{1}^{(0)}) < 2 \alpha \qquad \text{and}\\
  \beta/2  < \sigma_{\min}(W_{L}^{(0)}) &\le \sigma_{\max}(W_{L}^{(0)}) < 2 \beta.
\end{align*}
\end{lemma}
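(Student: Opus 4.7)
The plan is to apply a standard non-asymptotic concentration bound on singular values of Gaussian matrices. Specifically, I will invoke the Gordon-type inequality recorded as Corollary~5.35 of Vershynin's arXiv survey (equivalently, Theorem~4.6.1 of his 2018 textbook), which is cited in the surrounding text. That result states: for an $N \times n$ matrix $A$ with $N \ge n$ whose entries are i.i.d. standard Gaussians, and for every $t \ge 0$,
\[
\Pr\!\left[ \sqrt{N} - \sqrt{n} - t \le \sigma_{\min}(A) \le \sigma_{\max}(A) \le \sqrt{N} + \sqrt{n} + t \right] \ge 1 - 2\exp(-t^2/2).
\]

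First I would apply this bound to the (suitably rescaled) matrix $W_1^{(0)}/\alpha \in \R^{m \times d}$. Since $m \ge d$ by hypothesis, the theorem applies with $N = m$ and $n = d$. Choosing the deviation parameter $t = \sqrt{2 \log(8/\delta)}$ makes the failure probability at most $\delta/4$. The assumption $m \ge c(d + q + \log(1/\delta))$ for a sufficiently large absolute constant $c$ lets me simultaneously guarantee $\sqrt{m} - \sqrt{d} - t$ and $\sqrt{m} + \sqrt{d} + t$ lie within the required constant-factor window, yielding the two-sided estimate on the singular values of $W_1^{(0)}$.

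Next I would repeat the same argument with $(W_L^{(0)})^\top/\beta \in \R^{m \times q}$, applied with $N = m$ and $n = q$ (transposing so that the long dimension is first, as the theorem requires; singular values are preserved by transpose). The same lower bound on $m$ absorbs $\sqrt{q}$ and the $\sqrt{\log(1/\delta)}$ deviation term with a constant-factor margin, giving the analogous bound on the singular values of $W_L^{(0)}$, again with failure probability at most $\delta/4$.

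Finally, a union bound over the two events gives overall failure probability at most $\delta/2$, which is what is claimed. There is no serious obstacle; the only ``work'' is checking that the constant $c$ can be chosen large enough that the inequalities $\sqrt{m} - \sqrt{d} - \sqrt{2\log(8/\delta)} \ge \tfrac{1}{2}\sqrt{m}$ and $\sqrt{m} + \sqrt{d} + \sqrt{2\log(8/\delta)} \le 2\sqrt{m}$ (and their analogs with $q$) hold simultaneously under the hypothesis $m \ge c(d + q + \log(1/\delta))$, which reduces to requiring $m$ to dominate each of $d$, $q$, and $\log(1/\delta)$ by a suitable absolute constant.
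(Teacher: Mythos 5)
Your approach is exactly the paper's: apply Vershynin's Corollary~5.35 separately to the first and last layers, pick a deviation parameter, absorb $\sqrt{d}$, $\sqrt{q}$ and the logarithmic term using $m \ge c(d+q+\log(1/\delta))$, and union bound; the probability bookkeeping ($\delta/4$ per matrix versus the paper's choice $\eta = \sqrt{m}/8$) is an immaterial difference. The one thing to flag is the normalization. As you have written it, $W_1^{(0)}/\alpha$ is an $m \times d$ matrix with i.i.d.\ standard Gaussian entries, so its singular values concentrate around $\sqrt{m}$, and your final check ($\sqrt{m}-\sqrt{d}-t \ge \tfrac12\sqrt{m}$ and $\sqrt{m}+\sqrt{d}+t \le 2\sqrt{m}$) delivers $\alpha\sqrt{m}/2 < \sigma_{\min}(W_1^{(0)}) \le \sigma_{\max}(W_1^{(0)}) < 2\alpha\sqrt{m}$, which is off by a factor of $\sqrt{m}$ from the stated conclusion $\alpha/2 < \sigma < 2\alpha$. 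The paper's own proof writes the Vershynin bound with a prefactor $\alpha/\sqrt{m}$, i.e.\ it implicitly treats the entries as $\mathsf{N}(0,\alpha^2/m)$ rather than the $\mathsf{N}(0,\alpha^2)$ of Definition~\ref{def:random_init}; under that (evidently intended) convention your argument is the paper's verbatim, with the correctly rescaled matrices being $\sqrt{m}\,W_1^{(0)}/\alpha$ and $\sqrt{m}\,(W_L^{(0)})^{\top}/\beta$. So the method is right and matches the source; just be aware that the lemma as literally stated is only true under the per-entry variance $\alpha^2/m$, $\beta^2/m$ normalization, and your rescaling should reflect that.
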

\begin{proof} By \citep[][Corollary~5.35]{vershynin2010introduction} we have that 
with probability at least $1 - 4e^{-\eta^2/2}$
\begin{align*}
     \frac{\alpha}{\sqrt{m}}\left(\sqrt{m}- \sqrt{d}-\eta\right) \le \sigma_{\min}(W_{1}^{(0)}) &\le \sigma_{\max}(W_{1}^{(0)}) \le \frac{\alpha}{\sqrt{m}}\left(\sqrt{m}+ \sqrt{d}+\eta\right) \quad \text{and}\\
     \frac{\beta}{\sqrt{m}}\left(\sqrt{m}- \sqrt{q}-\eta\right) \le \sigma_{\min}(W_{L}^{(0)}) &\le \sigma_{\max}(W_{L}^{(0)}) \le \frac{\beta}{\sqrt{m}}\left(\sqrt{m}+ \sqrt{q}+\eta\right).
\end{align*}
So since $m\ge c(d+q+\log(1/\delta))$, where $c$ is a large enough constant, by picking $\eta = \sqrt{m}/8$ we get that with probability at least $1-\delta/2$
\begin{align*}
     \frac{\alpha}{2} < \sigma_{\min}(W_{1}^{(0)}) &\le \sigma_{\max}(W_{1}^{(0)}) < 2\alpha\\
     \frac{\beta}{2} < \sigma_{\min}(W_{L}^{(0)}) &\le \sigma_{\max}(W_{L}^{(0)}) < 2\beta,
\end{align*}
completing the proof.
\end{proof}

The next lemma shows that the loss is controlled at initialization.
\begin{lemma}
\label{l:L0.deep.ab}There is a positive constant $c$ such that, for any $\delta \in (0,1)$, provided that $m \ge c\left(d+q+\log(1/\delta)\right)$, with probability at least $1-\delta/2$
\begin{align*}
    \cL(\Theta^{(0)}) < c\left(\lv Y\rv^2 + \frac{\alpha^2 \beta^2 q \lv X\rv^2\log(n/\delta)}{m} \right).
\end{align*}
\end{lemma}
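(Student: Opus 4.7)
The plan is to reduce everything to bounding $\lv X\Theta^{(0)}\rv$: since $\cL(\Theta^{(0)}) = \lv Y - X\Theta^{(0)}\rv^2 \leq 2\lv Y\rv^2 + 2\lv X\Theta^{(0)}\rv^2$, it suffices to show that $\lv X\Theta^{(0)}\rv^2 \leq c\alpha^2\beta^2 q\lv X\rv^2 \log(n/\delta)/m$ holds with probability at least $1-\delta/2$. Because $W_2^{(0)} = \cdots = W_{L-1}^{(0)} = I$, the initial parameter collapses to $\Theta^{(0)} = W_1^{(0)\top} W_L^{(0)\top}$, so, setting $A := X W_1^{(0)\top} \in \R^{n\times m}$ and $V := W_L^{(0)\top} \in \R^{m\times q}$, we have $X\Theta^{(0)} = AV$ with $V$ Gaussian and independent of $A$.

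Next, I would condition on $W_1^{(0)}$ (hence on $A$) and view $\lv AV\rv^2 = \sum_{j=1}^q v_j^\top A^\top A v_j$ as a sum of $q$ independent Gaussian quadratic forms in the columns of $V$. Applying the Hanson--Wright inequality (equivalently, a Bernstein bound for sub-exponential random variables) to this sum gives, with conditional probability at least $1 - \delta/4$,
\[
\lv AV\rv^2 \;\leq\; c\beta^2\left(\frac{q\lv A\rv^2}{m} \;+\; \frac{\lv A\rv\,\lv A\rv_{op}\sqrt{q\log(1/\delta)}}{m} \;+\; \frac{\lv A\rv_{op}^2\log(1/\delta)}{m}\right),
\]
using that the variance scaling of $W_L^{(0)}$ matches the one implicitly used in the proof of Lemma~\ref{l:singular.deep_random}.

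It remains to bound $\lv A\rv$ and $\lv A\rv_{op}$ unconditionally. For the operator norm, Lemma~\ref{l:singular.deep_random} (valid since $m \ge c(d+q+\log(1/\delta))$) supplies $\lv W_1^{(0)}\rv_{op} \leq 2\alpha$ with probability at least $1 - \delta/4$, which upgrades to $\lv A\rv_{op} \leq 2\alpha\lv X\rv_{op}$ by submultiplicativity. For the Frobenius norm, $W_1^{(0)}\mapsto \lv X W_1^{(0)\top}\rv$ is $\lv X\rv_{op}$-Lipschitz (in the appropriate standardized Gaussian coordinates) and has expectation at most $\alpha\lv X\rv$, so Gaussian concentration of Lipschitz functions yields $\lv A\rv \leq c\alpha\lv X\rv$ with probability at least $1-\delta/4$. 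Substituting these into the Hanson--Wright bound and using $\lv X\rv_{op}\leq \lv X\rv$ together with $\sqrt{q\log(1/\delta)} \leq q\log(n/\delta)$ collapses all three terms into $c\alpha^2\beta^2 q\lv X\rv^2 \log(n/\delta)/m$; a union bound over the three high-probability events then delivers the claim.

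The main obstacle is making sure the operator-norm control on $W_1^{(0)}$ is sharp enough that $\lv A\rv_{op}$ does not dominate: replacing it by the coarser Frobenius estimate $\lv A\rv_{op}\leq \lv A\rv \leq c\alpha\lv X\rv$ would lose a factor that scales with $\min(n,d)$ and prevent the third summand from fitting under the target. The argument therefore leans crucially on Lemma~\ref{l:singular.deep_random}, and the remaining work is bookkeeping the three probability-$1-\delta/4$ events and the implicit $1/\sqrt{m}$ normalization of the Gaussian weights used there.
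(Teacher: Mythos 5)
Your argument is correct, but it takes a genuinely different route from the paper: the paper proves this lemma in one line by citing Proposition~3.3 of \citet{DBLP:conf/iclr/ZouLG20}, whereas you give a self-contained derivation. Your decomposition $\cL(\Theta^{(0)}) \le 2\lv Y\rv^2 + 2\lv AV\rv^2$ with $A = XW_1^{(0)\top}$, $V = W_L^{(0)\top}$, followed by Hanson--Wright conditionally on $A$ and then unconditional control of $\lv A\rv$ (Gaussian Lipschitz concentration around $\E\lv A\rv \le \alpha\lv X\rv$) and $\lv A\rv_{op}$ (via Lemma~\ref{l:singular.deep_random} and submultiplicativity), reproduces exactly the structure of the cited result, and your observation that the coarse bound $\lv A\rv_{op}\le\lv A\rv$ would not suffice for the third Hanson--Wright term is the right diagnosis of where the sharp singular-value control is needed. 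What your version buys is transparency: each factor in the final bound ($q$ from the number of output columns, $\lv X\rv^2$ from $\E\lv A\rv^2$, $1/m$ from the entrywise variance, the logarithm from the deviation terms) is visibly traced to its source, and the proof no longer depends on an external reference. Two small bookkeeping points: three events each at level $1-\delta/4$ only give $1-3\delta/4$, so to land at the stated $1-\delta/2$ you should allocate $\delta/6$ (or so) to each event; and the absorption of the mean term $c\alpha^2\beta^2 q\lv X\rv^2/m$ and of $\sqrt{q\log(1/\delta)}$ into $q\log(n/\delta)$ uses $\log(n/\delta)\ge 1$, which is harmless here but worth stating. You are also right that the computation must use the implicit $1/\sqrt{m}$ normalization of the Gaussian entries that the proof of Lemma~\ref{l:singular.deep_random} assumes; otherwise the $1/m$ in the target could not appear.
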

\begin{proof}
The lemma directly follows by invoking \citep[][Proposition~3.3]{DBLP:conf/iclr/ZouLG20}.   
\end{proof}

The next lemma shows that if the weights remain close to
their initial values then the loss decreases at a certain rate.
\begin{lemma} \label{l:loss_decreases_alpha_deep}
If $\beta \geq 2$ and there was a good initialization (see Definition~\ref{def:good_ab_initialization}), at any time $t > 0$ if, for all $j \in [L]$
\begin{align*}
    \lv W^{(t)}_j -W^{(0)}_j \rv_{op} < \frac{1}{2 L} 
\end{align*}
then
    \begin{align*}
    \frac{\mathrm{d}\cL(\Theta^{(t)})}{\mathrm{d} t} < -\frac{\beta^2 \smin^2(X) 
   }{4 e} \cL(\Theta^{(t)}) .
\end{align*}
\end{lemma}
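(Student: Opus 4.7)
The plan is to differentiate $\cL$ along the gradient flow and lower bound its rate of decrease by keeping only a single layer's contribution. By gradient flow, $\frac{\mathrm{d}\cL}{\mathrm{d}t} = -\sum_{j=1}^L \lv \nabla_{W_j}\cL\rv^2$, so dropping all terms except $j = 1$ gives $\frac{\mathrm{d}\cL}{\mathrm{d}t} \leq -\lv \nabla_{W_1}\cL\rv^2$. The reason to single out $j=1$ is that the factor of $W_j$'s appearing in $\nabla_{W_1}\cL$ are exactly the ones whose operator-norm behavior is easiest to control from below: the middle layers are close to $I$, and the final layer has scale $\beta$. (If one instead tried $j = L$, the matching factor would be close to $W_1 \cdots W_{L-1}$, whose scale is controlled by $\alpha$, which is small or even zero.)

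Using Eq.~\eqref{e:gradient_formula} with $j = 1$ gives $\nabla_{W_1}\cL = (W_L \cdots W_2)^{\top}(X\Theta - Y)^{\top} X$, so with $M \eqdef W_L \cdots W_2$ and $R \eqdef X\Theta - Y$, we have
\[
\lv \nabla_{W_1}\cL\rv^2 \;=\; \Tr\bigl(MM^{\top} R^{\top} X X^{\top} R\bigr).
\]
Applying the PSD trace inequality $\Tr(AB) \geq \lambda_{\min}(A)\Tr(B)$ twice, first with $A = MM^{\top}$, then with $A = XX^{\top}$, yields
\[
\lv \nabla_{W_1}\cL\rv^2 \;\geq\; \smin^2(M)\, \smin^2(X)\, \lv R\rv^2 \;=\; \smin^2(W_L \cdots W_2)\, \smin^2(X)\, \cL(\Theta).
\]

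The crux is a lower bound $\smin^2(W_L \cdots W_2) \geq \beta^2/(4e)$. Since each intermediate factor $W_2,\ldots,W_{L-1}$ is square and invertible, $\smin$ is submultiplicative on the right, giving $\smin(W_L \cdots W_2) \geq \smin(W_L) \prod_{j=2}^{L-1} \smin(W_j)$. For the middle layers, $W_j^{(0)} = I$ and $\lv W_j - I\rv_{op} < 1/(2L)$, so $\smin(W_j) \geq 1 - 1/(2L)$; hence $\prod_{j=2}^{L-1} \smin(W_j) \geq (1 - 1/(2L))^{L-2}$, which is monotonically decreasing in $L$ with limit $e^{-1/2}$, hence bounded below by $e^{-1/2}$. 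For the last layer, the good-initialization bound $\smin(W_L^{(0)}) > \beta/c$ together with the perturbation bound $\lv W_L - W_L^{(0)}\rv_{op} < 1/(2L)$ gives $\smin(W_L) \geq \beta/c - 1/(2L)$, which (using $\beta \geq 2$ and the fact that the constant $c$ in Definition~\ref{def:good_ab_initialization} can be taken appropriately) is at least $\beta/2$. Squaring gives $\smin^2(W_L \cdots W_2) \geq \beta^2/4 \cdot e^{-1} = \beta^2/(4e)$. Substituting completes the proof with $\frac{\mathrm{d}\cL}{\mathrm{d}t} \leq -\frac{\beta^2 \smin^2(X)}{4e}\cL$.

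The structural steps are straightforward; the main obstacle is just constant-tracking. The target factor $1/(4e)$ is not a coincidence: the $1/e$ arises from $(1-1/(2L))^{2(L-2)} \to e^{-1}$, while the $1/4$ arises from $\smin^2(W_L) \geq (\beta/2)^2$, which in turn requires $\beta \geq 2$ (otherwise the $1/(2L)$ perturbation could swamp the scale-$\beta$ lower bound from initialization). A subtle point to verify is the submultiplicativity of $\smin$ with a non-square left factor $W_L \in \R^{q\times m}$; this is fine because the right factor $W_{L-1}\cdots W_2$ is square and invertible, so $\lv W_L M v\rv \geq \smin(W_L)\lv Mv\rv \geq \smin(W_L)\smin(M)\lv v\rv$ holds for all $v$. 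Everything else is bookkeeping.
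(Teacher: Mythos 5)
Your proposal follows essentially the same route as the paper: both drop all gradient terms except $\nabla_{W_1}\cL$, lower bound $\lv \nabla_{W_1}\cL\rv^2$ by $\smin^2(X)\,\cL(\Theta^{(t)})\prod_{k\neq 1}\smin^2(W_k^{(t)})$, and then control each $\smin(W_k^{(t)})$ by Weyl's perturbation bound against the initialization. The one genuine difference is that you prove the key inequality directly via the PSD trace inequality $\Tr(AB)\ge\lambda_{\min}(A)\Tr(B)$, where the paper cites Lemma~B.3 of Zou et al.; your version is self-contained, which is a plus. However, two details need repair.

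First, your justification of $\smin(W_L\cdots W_2)\ge \smin(W_L)\prod_{j=2}^{L-1}\smin(W_j)$ is backwards. You argue via $\lv W_L M v\rv \ge \smin(W_L)\lv Mv\rv$ for $v\in\R^m$, but $W_L\in\R^{q\times m}$ with $q<m$ has a nontrivial kernel, so $\lv W_L w\rv \ge \smin(W_L)\lv w\rv$ fails for $w$ near that kernel (here $\smin(W_L)$ denotes the $q$-th singular value, which is positive). The quantity you actually need is $\lambda_{\min}(MM^{\top})=\min_{u\in\R^q,\,\lv u\rv=1}\lv M^{\top}u\rv^2$ with $M=W_LN$ and $N=W_{L-1}\cdots W_2$ square invertible; write $M^{\top}u=N^{\top}W_L^{\top}u$ and use that $W_L^{\top}$ is \emph{tall}, so $\lv W_L^{\top}u\rv\ge\smin(W_L)\lv u\rv$ does hold, and then $\lv N^{\top}w\rv\ge\smin(N)\lv w\rv$. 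The claim is true; the argument must run through the transpose. Second, the intermediate claim $\smin(W_L^{(t)})\ge\beta/2$ is not available: a good initialization gives $\smin(W_L^{(0)})>\beta/2$ (via Lemma~\ref{l:singular.deep_random}), hence only $\smin(W_L^{(t)})>\beta/2-1/(2L)=\tfrac{\beta}{2}\bigl(1-\tfrac{1}{\beta L}\bigr)$. The paper uses $\beta\ge 2$ to bound this by $\tfrac{\beta}{2}\bigl(1-\tfrac{1}{2L}\bigr)$ and folds the extra factor into the middle-layer product, yielding $\tfrac{\beta^2}{4}\bigl(1-\tfrac{1}{2L}\bigr)^{2L-2}\ge\tfrac{\beta^2}{4e}$. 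Your final constant is still attainable because the middle-layer product has slack (exponent $2L-4$ versus the $2L-2$ that still clears $e^{-1}$), but as stated the split ``last layer gives $\beta^2/4$, middle layers give $e^{-1}$'' asserts something false about the last layer.
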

\begin{proof}
By the chain rule, we have that
\begin{align*}
 \frac{\mathrm{d}\cL(\Theta^{(t)})}{\mathrm{d} t} 
 & =\nabla_{\Theta}\cL(\Theta^{(t)}) \cdot \frac{\mathrm{d}\Theta}{\mathrm{d} t} 
  =-\nabla_{\Theta}\cL(\Theta^{(t)}) \cdot \nabla_{\Theta}\cL(\Theta^{(t)})
  = - \lv \nabla_{\Theta}\cL(\Theta^{(t)})\rv^2.
\end{align*}
Further, observe that
\begin{align*}
  \left\lv \nabla_{\Theta}\cL(\Theta^{(t)})\right\rv^2 
 & = \sum_{j=1}^L \left\lv \nabla_{W_j}\cL(\Theta^{(t)})\right\rv^2 \\
 & \geq \left\lv \nabla_{W_1}\cL(\Theta^{(t)})\right\rv^2 \\
 & = \left\lv (W_L^{(t)} \cdots W_{2}^{(t)})^{\top}(X\Theta - Y)^{\top} X \right\rv^2 .
 \end{align*}
 Continuing by applying \citep[][Lemma~B.3]{DBLP:conf/iclr/ZouLG20} to the RHS of the inequality above we get that,
 \begin{align*}
 \left\lv \nabla_{\Theta}\cL(\Theta^{(t)})\right\rv^2 
 & \ge \smin^2(X) \left\lv X\Theta - Y \right\rv^2\left( \prod_{k \neq 1} \smin^2(W_k^{(t)}) \right)  \\
 & = \smin^2(X) \cL(\Theta^{(t)})  \left( \prod_{k \neq 1} \smin^2(W_k^{(t)}) \right) \\
 & \overset{(i)}{\geq} \smin^2(X) \cL(\Theta^{(t)}) \left( \prod_{k \neq 1}
   \left(
 \smin(W_k^{(0)}) - \lv W_k^{(t)} -W_k^{(0)} \rv_{op} 
   \right)^2
   \right) \\
 & \overset{(ii)}{>}  \smin^2(X)  \cL(\Theta^{(t)}) 
    \left(\frac{\beta}{2} - \frac{1}{2L}\right)^2 
     \left(1 - \frac{1}{2L} \right)^{2 (L-2)} \\
 & = \frac{\beta^2}{4} \smin^2(X) \cL(\Theta^{(t)}) 
    \left(1 - \frac{1}{\beta L}\right)^2 
     \left(1 - \frac{1}{2L} \right)^{2 (L-2)} \\
 & \geq \frac{\beta^2}{4} \smin^2(X) \cL(\Theta^{(t)}) 
     \left(1 - \frac{1}{2L} \right)^{2 L-2} \\
 & \geq  \frac{\beta^2 \smin^2(X) 
   }{4 e}  \cL(\Theta^{(t)}),
\end{align*}
where  $(i)$ follows since on a good initialization for all $j\in [L]$, $\sigma_{\min}(W_j^{(0)})\ge \min\{\beta/2,1\}\ge 1$ and by assumption $\lv W_{j}^{(t)}-W_{j}^{(0)}\rv_{op}< 1/(2L)$. Inequality~$(ii)$ follows since $\beta \geq 2$, and there was a good initialization (which implies that the event in Lemma~\ref{l:singular.deep_random} occurs). 
\end{proof}

The next lemma shows that if the weight matrices remain close throughout the path of gradient flow then the loss decreases.

\begin{lemma}
\label{l:loss_bound_given_norm_bound_alpha_deep}If $\beta \ge 2$ and there was a good initialization (see Definition~\ref{def:good_ab_initialization}), given any $t>0$ if for all $0\le s< t$ and 
for all $j \in [L]$
\begin{align*}
    \lv W^{(s)}_j -W^{(0)}_j \rv_{op} < \frac{1}{2 L} 
\end{align*}
then, 
\begin{align*}
   \cL(\Theta^{(t)})< \cL(\Theta^{(0)})\exp\left(- \frac{\beta^2 \sigma_{\min}^2(X) }{4e}\cdot t \right).
\end{align*}
\end{lemma}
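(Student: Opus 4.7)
The plan is to reduce the statement to a direct integration of the pointwise differential inequality already supplied by Lemma~\ref{l:loss_decreases_alpha_deep}. Since the hypotheses on $\beta$, on the good initialization, and on $\lv W_j^{(s)} - W_j^{(0)}\rv_{op} < 1/(2L)$ holding for all $j \in [L]$ are assumed for every $s \in [0,t)$, Lemma~\ref{l:loss_decreases_alpha_deep} applies at every such $s$ and yields
\[
\frac{\mathrm{d}\cL(\Theta^{(s)})}{\mathrm{d}s} < -\frac{\beta^2 \smin^2(X)}{4e}\, \cL(\Theta^{(s)}).
\]
This is precisely a scalar linear differential inequality in the nonnegative quantity $\cL(\Theta^{(s)})$, with decay rate $\mu := \beta^2 \smin^2(X)/(4e)$.

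The second step is to integrate this inequality. The cleanest way is to consider $\psi(s) := \cL(\Theta^{(s)})\exp(\mu s)$ and differentiate: by the product rule and the pointwise inequality above,
\[
\frac{\mathrm{d}\psi(s)}{\mathrm{d}s} = \exp(\mu s)\left(\frac{\mathrm{d}\cL(\Theta^{(s)})}{\mathrm{d}s} + \mu\, \cL(\Theta^{(s)})\right) < 0
\]
for all $s \in [0,t)$. Hence $\psi$ is strictly decreasing on $[0,t)$ (and continuous up to $s=t$ by continuity of gradient flow), so $\psi(t) \le \psi(0) = \cL(\Theta^{(0)})$. Dividing by $\exp(\mu t)$ gives exactly the claimed bound $\cL(\Theta^{(t)}) \le \cL(\Theta^{(0)})\exp(-\mu t)$. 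The strict inequality in the conclusion can then be obtained either by noting the strict decrease of $\psi$ on any subinterval of positive length (assuming $\cL(\Theta^{(0)}) > 0$; otherwise the statement is trivial), or equivalently by invoking Grönwall's inequality in its standard form, as the paper does elsewhere.

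I do not foresee any real obstacle: the only thing to be mildly careful about is that the hypothesis is stated on the half-open interval $[0,t)$, while we want a conclusion at $s=t$. This is handled by the continuity of the gradient flow trajectory and of $\cL$ along it, which lets us pass $\psi(s) \le \psi(0)$ from $s<t$ to $s=t$ via $\psi(t) = \lim_{s\uparrow t}\psi(s)$. No new probabilistic input is needed; the lemma is purely a deterministic ODE consequence of the preceding lemma.
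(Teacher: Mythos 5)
Your proposal is correct and follows essentially the same route as the paper: invoke Lemma~\ref{l:loss_decreases_alpha_deep} pointwise on $[0,t)$ and integrate the resulting differential inequality via Grönwall (the paper uses the logarithmic-derivative form, you use the equivalent integrating-factor form $\psi(s)=\cL(\Theta^{(s)})e^{\mu s}$). Your added care about the half-open interval and the degenerate case $\cL(\Theta^{(0)})=0$ is a minor refinement the paper glosses over.
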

\begin{proof} By assumption $\lv W_{j}^{(s)}-W_{j}^{(0)}\rv_{op}< 1/(2L)$ for all $0\le s<t$. Thus, by invoking Lemma~\ref{l:loss_decreases_alpha_deep} we know that for all $0\le s < t$,
\begin{align*}
    \frac{\mathrm{d}\cL(\theta^{(s)})}{\mathrm{d}s} < -\frac{\beta^2  \sigma^2_{\min}(X)}{4e}\mathcal{L}(\Theta^{(s)}).
\end{align*}
which implies 
\begin{align*}
  \int_{0}^t \frac{1}{\cL(\Theta^{(s)})}\frac{\mathrm{d}\cL(\theta^{(s)})}{\mathrm{d}s} \;\mathrm{d}s < -\int_{0}^t 
  \frac{\beta^2 \smin^2(X) 
   }{4 e}   \;\mathrm{d}s.
\end{align*}
Integrating both sides we get that 
\begin{align*}
    \cL(\Theta^{(t)})< \cL(\Theta^{(0)})\exp\left(- \frac{\beta^2 \sigma_{\min}^2(X) }{4e}\cdot t \right).
\end{align*}
\end{proof}

We also need the lemma that controls the growth of the operator norm of $W_{j}^{(t)}$.
\begin{lemma}\label{l:norm_growth_control_alpha_deep}
There is a positive absolute constant $c$ such that if 
\begin{itemize}
    \item $\beta \geq c \max\left\{1, \sqrt{\frac{L \lv X\rv_{op}\lv Y\rv}
          { \sigma^2_{\min}(X)}}  \right\}$ and
    \item $ m \ge c \frac{L^2 \alpha^2 \lv X\rv_{op}^2 \lv X\rv^2 q\log(n/\delta)}
                     {\beta^2 \sigma^4_{\min}(X)}$,
\end{itemize}
    then on a good initialization, given any $t>0$ if for all $0\le s< t$
\begin{align*}
     \lv W^{(s)}_j -W^{(0)}_j \rv_{op} < \frac{1}{2 L}
\end{align*}
then 
\begin{align*}
    \lv W^{(t)}_j -W^{(0)}_j \rv_{op} < \frac{1}{2 L}.
\end{align*}
\end{lemma}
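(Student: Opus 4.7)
The plan is to run a continuity/bootstrap argument: assume the bound $\lv W_k^{(s)}-W_k^{(0)}\rv_{op}<1/(2L)$ on $[0,t)$, derive all its consequences (operator-norm control, loss decay, gradient bounds), integrate the gradient flow dynamics, and show strict inequality at time $t$. Throughout I would reduce operator norms to Frobenius norms where convenient via $\lv A\rv_{op}\le \lv A\rv$, and use $\lv ABC\rv\le \lv A\rv_{op}\lv B\rv\lv C\rv_{op}$.

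First I would upgrade the closeness-to-initialization hypothesis into uniform operator-norm control on each weight matrix for $s\in[0,t)$. Combined with the $\delta$-good initialization (which gives $\sigma_{\max}(W_1^{(0)})\le c\alpha$ and $\sigma_{\max}(W_L^{(0)})\le c\beta$, and $W_k^{(0)}=I$ for middle layers), this yields $\lv W_1^{(s)}\rv_{op}\le c\alpha+\tfrac{1}{2L}$, $\lv W_L^{(s)}\rv_{op}\le c\beta$ (using $\beta\ge 2$), and $\lv W_k^{(s)}\rv_{op}\le 1+\tfrac{1}{2L}$ for $k\in\{2,\ldots,L-1\}$, so $\prod_{k=2}^{L-1}\lv W_k^{(s)}\rv_{op}\le e^{1/2}<2$. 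Next, since these bounds and the good initialization are exactly the hypotheses of Lemma~\ref{l:loss_bound_given_norm_bound_alpha_deep}, I can conclude the exponential loss decay $\cL(\Theta^{(s)}) < \cL(\Theta^{(0)})\exp(-\beta^2\sigma_{\min}^2(X)s/(4e))$ on $[0,t)$, which after taking the square root and integrating yields
\[
\int_{0}^{t}\sqrt{\cL(\Theta^{(s)})}\;\mathrm{d}s \;\le\; \frac{8e\sqrt{\cL(\Theta^{(0)})}}{\beta^2\sigma_{\min}^2(X)}.
\]

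Then I would express $W_j^{(t)}-W_j^{(0)}=-\int_0^t \nabla_{W_j}\cL(\Theta^{(s)})\,\mathrm{d}s$ and bound the integrand using the explicit formula~\eqref{e:gradient_formula}. Applying $\lv ABC\rv\le\lv A\rv_{op}\lv B\rv\lv C\rv_{op}$ to $\nabla_{W_j}\cL=(W_L\cdots W_{j+1})^\top(X\Theta-Y)^\top(W_{j-1}\cdots W_1 X^\top)^\top$ and substituting the Step-1 bounds gives $\lv\nabla_{W_j}\cL(\Theta^{(s)})\rv\le c_j\lv X\rv_{op}\sqrt{\cL(\Theta^{(s)})}$, where the prefactor is $c_j\le c(\alpha+1/L)\beta$ whenever $j\ne 1$ (since $W_1$ appears among the excluded-$j$ factors and contributes the small factor $\alpha+1/(2L)$) and $c_1\le c\beta$ for $j=1$. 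Integrating and substituting the $\delta$-good-initialization loss estimate $\sqrt{\cL(\Theta^{(0)})}\le c(\lv Y\rv+\alpha\beta\sqrt{q}\lv X\rv\sqrt{\log(n/\delta)/m})$ from Lemma~\ref{l:L0.deep.ab} yields, for each $j$,
\[
\lv W_j^{(t)}-W_j^{(0)}\rv_{op} \;\le\; \frac{c\,c_j\lv X\rv_{op}\lv Y\rv}{\beta^2\sigma_{\min}^2(X)} + \frac{c\,c_j\,\alpha\lv X\rv_{op}\lv X\rv\sqrt{q\log(n/\delta)/m}}{\beta\sigma_{\min}^2(X)}.
\]

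Finally I would verify that the hypothesised lower bounds on $\beta$ and $m$ drive each of these two contributions below $1/(4L)$, and therefore the sum below $1/(2L)$. The $\beta$ hypothesis controls the $\lv Y\rv$-dominated term via the $1/\beta^2$ coming from the exponential-decay integral, and the $m$ hypothesis controls the initialization-dominated term via the $1/\sqrt{m}$ in Lemma~\ref{l:L0.deep.ab}. The main obstacle is the $j=1$ case, where the gradient bound does not gain the $\alpha$ factor that is present for all other layers; the hypothesis that $\beta$ dominates $\sqrt{L\lv X\rv_{op}\lv Y\rv/\sigma_{\min}^2(X)}$ (combined with the fact that the $\beta^2$ in the exponential-decay rate outpaces the linear $\beta$ in the gradient norm) is designed exactly to absorb this. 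Careful bookkeeping of absolute constants at the end certifies the strict inequality $\lv W_j^{(t)}-W_j^{(0)}\rv_{op}<1/(2L)$, closing the bootstrap.
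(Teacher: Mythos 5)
Your overall architecture is exactly the paper's: bootstrap the closeness hypothesis on $[0,t)$ into the exponential loss decay of Lemma~\ref{l:loss_bound_given_norm_bound_alpha_deep}, write $W_j^{(t)}-W_j^{(0)}=-\int_0^t\nabla_{W_j}\cL(\Theta^{(s)})\,\mathrm{d}s$, integrate the decaying loss, substitute the $\delta$-good bound on $\cL(\Theta^{(0)})$, and check the two resulting terms against the hypotheses on $\beta$ and $m$. The one place you deviate is the bound on the gradient norm, and that is where the argument breaks. The paper invokes \citep[Lemma~A.1]{DBLP:conf/iclr/ZouLG20} with $A=B=I$ to obtain $\lv\nabla_{W_j}\cL(\Theta^{(s)})\rv^2\le 2e\lv X\rv_{op}^2\,\cL(\Theta^{(s)})$, a prefactor independent of $j$, $\alpha$ and $\beta$, whereas your direct estimate via $\lv ABC\rv\le\lv A\rv_{op}\lv B\rv\lv C\rv_{op}$ carries the prefactor $c_1\approx\beta$ for the first layer (and $c_j\approx(\alpha+1/L)\beta$ for the middle layers).

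That extra factor of $\beta$ is fatal to your final bookkeeping. With $c_1\approx\beta$, your first ($\lv Y\rv$-dominated) term becomes
\[
\frac{c\,\beta\,\lv X\rv_{op}\lv Y\rv}{\beta^2\sigma_{\min}^2(X)}=\frac{c\,\lv X\rv_{op}\lv Y\rv}{\beta\,\sigma_{\min}^2(X)},
\]
and forcing this below $1/(4L)$ requires $\beta\gtrsim L\lv X\rv_{op}\lv Y\rv/\sigma_{\min}^2(X)$, which is the \emph{square} of the stated hypothesis $\beta\gtrsim\sqrt{L\lv X\rv_{op}\lv Y\rv/\sigma_{\min}^2(X)}$ whenever that ratio exceeds a constant. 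Your remark that ``the $\beta^2$ in the decay rate outpaces the linear $\beta$ in the gradient'' is exactly this computation: the net is a single power of $\beta$ in the denominator, and a single power is not enough. The same mismatch hits the second ($m$-controlled) term, where the $\beta$ in $c_1$ cancels the lone remaining $\beta$ in the denominator and forces $m\gtrsim L^2\alpha^2\lv X\rv_{op}^2\lv X\rv^2 q\log(n/\delta)/\sigma_{\min}^4(X)$, a factor $\beta^2$ stronger than assumed. Nor is the problem confined to $j=1$: for the middle layers the prefactor $(\alpha+1/L)\beta$ still leaves a requirement of order $\beta\gtrsim\lv X\rv_{op}\lv Y\rv/\sigma_{\min}^2(X)$ rather than its square root. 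To close the bootstrap you must either use the $\beta$-free gradient bound the paper imports from Zou et al.\ (rather than the crude product-of-operator-norms estimate), or strengthen the hypotheses on $\beta$ and $m$ accordingly.
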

\begin{proof}
Applying  
\citep[][Lemma~A.1]{DBLP:conf/iclr/ZouLG20}
with $A = B = I$ 
we have, for all $s\ge 0$,
\begin{align*}
\left\lv \nabla_{W_j}\cL(\Theta^{(s)})
\right\rv^2 \le 2e\lv X \rv_{op}^2 \cL(\Theta^{(s)}).
\end{align*}
By the definition of gradient flow,
\begin{align*}
     W_j^{(t)} -W_j^{(0)}& = -\int_{0}^t \nabla \cL_{W_j}(\Theta^{(s)}) \; \mathrm{d}s,
\end{align*}
which implies that,
\begin{align*}
\lv W_j^{(t)} -W_j^{(0)} \rv_{op}
 &\leq \int_{0}^t \lv 
      \nabla \cL_{W_j}(\Theta^{(s)}) \rv_{op}\; \mathrm{d}s \\
      &\leq \int_{0}^t \sqrt{2e}\lv X\rv_{op}\sqrt{\cL(\Theta^{(s)})}\; \mathrm{d}s \\
      &\overset{(i)}{<} \sqrt{2e}\lv X\rv_{op}\int_{0}^t \sqrt{\cL(\Theta^{(0)})} \exp\left(-\frac{\beta^2 \sigma_{\min}^2(X)}{4e}\cdot s\right)\; \mathrm{d}s \\
       &=  \sqrt{2e}\lv X\rv_{op}\sqrt{\cL(\Theta^{(0)})}\int_{0}^t  \exp\left(-\frac{\beta^2  \sigma_{\min}^2(X)}{4e}\cdot s\right)\; \mathrm{d}s \\
      & = \frac{4e\sqrt{2e}\lv X\rv_{op}\sqrt{\cL(\Theta^{(0)})}}{\beta^2  \sigma^2_{\min}(X)} \left[1-\exp\left(-\frac{\beta^2 \sigma_{\min}^2(X) t}{4e}\right)\right] \\
      & < \frac{30\lv X\rv_{op}\sqrt{\cL(\Theta^{(0)})}}{\beta^2 \sigma^2_{\min}(X)} ,
\end{align*}
where $(i)$ follows by applying Lemma~\ref{l:loss_bound_given_norm_bound_alpha_deep}. 

On a good initialization we have that
\begin{align*}
    \cL(\Theta^{(0)}) < c_1\left(\lv Y\rv^2 + \frac{\alpha^2 \beta^2 q \lv X\rv^2\log(n/\delta)}{m} \right).
\end{align*}

Plugging this into the previous inequality we find that
\begin{align*}
   \lv W_j^{(t)} -W_j^{(0)} \rv_{op} 
    & < \frac{30\lv X\rv_{op}}{\beta^2  \sigma^2_{\min}(X)} \sqrt{c_1\left(\lv Y\rv^2 + \frac{\alpha^2 \beta^2 q \lv X\rv^2\log(n/\delta)}{m} \right)}\\
    & \le c_2 \left(\frac{\lv X\rv_{op}\lv Y\rv}{\beta^2  \sigma^2_{\min}(X)} + \frac{\alpha \lv X\rv_{op} \lv X\rv}{\beta  \sigma^2_{\min}(X)} \sqrt{\frac{ q \log(n/\delta)}{m} }\right).
\end{align*}

So for our lemma to be satisfied it suffices if
\begin{align*}
    c_2 \frac{\alpha \lv X\rv_{op}\lv X\rv}{\beta \sigma^2_{\min}(X)}\sqrt{\frac{q\log(n/\delta)}{m}} < \frac{1}{4L} 
    \Leftrightarrow 
    m > c_3 \frac{L^2 \alpha^2 \lv X\rv_{op}^2 \lv X\rv^2 q\log(n/\delta)}
                     {\beta^2 \sigma^4_{\min}(X)}
\end{align*}
and 
\begin{align*}
 c_2 \frac{\lv X\rv_{op}\lv Y\rv}
          {\beta^2 \sigma^2_{\min}(X)} < \frac{1}{4 L}
 \Leftrightarrow
 \beta > c_4 \sqrt{\frac{L \lv X\rv_{op}\lv Y\rv}
          { \sigma^2_{\min}(X)}}
\end{align*}
\end{proof}
Armed with these lemmas, we are now ready to prove the main result
of this appendix. Recall its statement from above.
\optprop*
\begin{proof} We will first prove that Part~$1$ of the proposition holds with probability $1-\delta$ and then prove the other two parts assuming that the initialization was good.

 \paragraph{Proof of Part~$1$.}By invoking Lemmas~\ref{l:singular.deep_random} and~\ref{l:L0.deep.ab} we have that a good initialization holds with probability at least $1-\delta$.  

\paragraph{Proof of Parts~$2$ and $3$.} Assume that the initialization was good. We claim that, for all $t > 0$,
\begin{itemize}
    \item for all $j \in [L]$, $\lv W_j^{(t)} - W_j^{(0)} \rv_{op} < \frac{1}{2L}$ and
    \item $\cL(\Theta^{(t)})
   < \cL(\Theta^{(0)})\exp\left(- \frac{\beta^2 \sigma_{\min}^2(X) }{4e}\cdot t \right)$.
\end{itemize}
Assume for contradiction that this does not hold.  Since
$\lv W_j^{(t)} - W_j^{(0)} \rv_{op} - \frac{1}{2L}$ and
$\cL(\Theta^{(t)})- \cL(\Theta^{(0)})\exp\left(- \frac{\beta^2 \sigma_{\min}^2(X) }{4e}\cdot t \right)$
are continuous functions of $t$, by the Intermediate Value Theorem there is a least
value $t_0 > 0$ such that one of these quantities equals zero.
But Lemmas~\ref{l:loss_bound_given_norm_bound_alpha_deep} and \ref{l:norm_growth_control_alpha_deep} contradict this. This proves the first bound on the loss in Part~$3$. For the second bound on the loss, we note that on a good initialization
\[
\cL(\Theta^{(0)})< c'\left(\lv Y\rv^2 + \frac{\alpha^2 \beta^2 q \lv X\rv^2\log(n/\delta)}{m} \right).
\]

Finally, we show that the training is perpetually $c(\alpha + 1/L) 
\beta$ bounded. 
Recall that this means that,
for all $t\ge 0$, for all $S \subseteq [L]$, $$\prod_{k \in S} \left\lv W_{k}^{(t)}\right\rv_{op}\le c(\alpha + 1/L) 
\beta.$$
This follows since on a good initialization, 
$\lv W_{1}^{(0)}\rv_{op} \leq 2 \alpha$,
$\lv W_{L}^{(0)}\rv_{op}\le 2\beta$ and for all $j \in \{2,\ldots,L-1\}$, $\lv W_{j}^{(0)} \rv_{op} = 1$. Further, by Lemma~\ref{l:norm_growth_control_alpha_deep} we know that for all $t\ge 0$, $\lv W_{j}^{(t)}-W_{j}^{(0)}\rv_{op}\le 1/(2L)$. Putting these two facts together, proves that the process is perpetually $c(\alpha + 1/L) 
\beta$ bounded.

\end{proof}

\printbibliography

\end{document}